\newcommand{\system}{\textsc{qasp}\xspace}
\newcommand{\pyqasp}{\textsc{pyqasp}\xspace}
\newcommand{\stunst}{\textsc{st-unst}\xspace}
\newcommand{\qasp}{\system}
\newcommand{\ASPQ}{ASP(Q)\xspace}
\newcommand{\fix}{\mathit{fix}}
\newcommand{\Int}{\mathit{Int}}
\newcommand{\CH}{\mathit{CH}}
\newcommand{\CNF}{\mathit{CNF}}
\newcommand{\as}{\mathit{AS}}
\newcommand{\nop}[1]{}
\newcommand{\myparagraph}[1]{\smallskip\noindent\textit{#1}\xspace}
\newcommand{\qaspv}[2]{$\textit{\system}^{\mathit{#1}}_{\mathit{#2}}$\xspace}
\newcommand{\pyqaspv}[2]{$\textit{\pyqasp}^{\mathit{#1}}_{\mathit{#2}}$\xspace}
\newtheorem{thm}{Theorem}
\newtheorem{definition}{Definition}
\newtheorem{prop}{Proposition}
\newtheorem{example}{Example}[section]
  \title[An efficient solver for \ASPQ]
        {An efficient solver for \ASPQ%
\thanks{This work was partially supported by MUR under PRIN project PINPOINT Prot. 2020FNEB27, CUP H23C22000280006, and PNRR project PE0000013-FAIR, Spoke 9 - Green-aware AI – WP9.1.
}}
    \author[W. Faber, G. Mazzotta, F. Ricca]
         {WOLFGANG FABER$^1$ \and GIUSEPPE MAZZOTTA$^2$ \and FRANCESCO RICCA$^2$\\
         $^1$Alpen-Adria Universität Klagenfurt, Austria\\
         $^2$University of Calabria, Rende, Italy\\
         }
\begin{document}

\label{firstpage}

\maketitle

  \begin{abstract}
    Answer Set Programming with Quantifiers ASP(Q) extends Answer Set Programming (ASP) to allow for declarative and modular modeling of problems from the entire polynomial hierarchy. The first implementation of ASP(Q), called qasp, was based on a translation to Quantified Boolean Formulae (QBF) with the aim of exploiting the well-developed and mature QBF-solving technology. However, the implementation of the QBF encoding employed in qasp is very general and might produce formulas that are hard to evaluate for existing QBF solvers because of the large number of symbols and sub-clauses. In this paper, we present a new implementation that builds on the ideas of qasp and features both a more efficient encoding procedure and new optimized encodings of ASP(Q) programs in QBF. The new encodings produce smaller formulas (in terms of the number of quantifiers, variables, and clauses) and result in a more efficient evaluation process. An algorithm selection strategy automatically combines several QBF-solving back-ends to further increase performance. An experimental analysis, conducted on known benchmarks, shows that the new system outperforms qasp.
  \end{abstract}

  \begin{keywords}
    ASP with Quantifiers, Quantified Boolean Formulas, Well-founded semantics
  \end{keywords}

\section{Introduction}
Answer Set Programming (ASP)~\cite{DBLP:journals/cacm/BrewkaET11,DBLP:journals/ngc/GelfondL91} is a popular logic programming paradigm based on the stable models semantics, offering the capabilities to $(i)$ modeling search and optimization problems in a declarative (and often compact) way
and $(ii)$ solving them using efficient systems~\cite{DBLP:conf/ijcai/GebserLMPRS18} that can handle real-world problems~\cite{DBLP:journals/aim/ErdemGL16,DBLP:journals/jair/GebserMR17}. 

Despite being very effective in modeling and solving problems in NP~\cite{DBLP:journals/jair/GebserMR17}, the first level of the Polynomial Hierarchy (PH), ASP is less practical when one has to approach problems beyond NP.
The existing programming techniques, such as \emph{saturation}~\cite{DBLP:journals/amai/EiterG95,DBLP:journals/csur/DantsinEGV01}, which allow for encoding with ASP problems that belong to the second level of the PH, are not very intuitive. Moreover,
the expressive power of ASP does not span the entire PH.

Recently, these shortcomings of ASP have been overcome by the introduction of language extensions that expand the expressivity of ASP~\cite{DBLP:journals/tplp/BogaertsJT16,DBLP:journals/tplp/FandinnoLRSS21,DBLP:journals/tplp/AmendolaRT19}.
Among these, Answer Set Programming with Quantifiers \ASPQ extends ASP, allowing for declarative and modular modeling of problems of the entire PH~\cite{DBLP:journals/tplp/AmendolaRT19}.
The language of \ASPQ expands ASP with quantifiers over answer sets of ASP programs and allows the programmer to use the standard and natural programming methodology, known as generate-define-test~\cite{Lifschitz02}, to encode also problems beyond NP.

\myparagraph{Motivation.}
As in the case of ASP, the adoption of \ASPQ as a tool for modeling concrete problems~\cite{DBLP:conf/padl/FaberMC22,DBLP:conf/lpnmr/0001M22} has begun after the introduction of the first solver for \ASPQ, called \qasp~\cite{DBLP:conf/lpnmr/AmendolaCRT22}. 
\qasp was based on a translation to Quantified Boolean Formulae (QBF) with the aim of exploiting the well-developed and mature QBF-solving technology~\cite{DBLP:journals/ai/PulinaS19}. 
However, the implementation of the QBF encoding employed in \qasp is very general and might produce formulas that are hard to evaluate for existing QBF solvers because of the large number of symbols and sub-clauses. 
Moreover, Amendola et al.~\shortcite{DBLP:conf/lpnmr/AmendolaCRT22} observed that the implementation of the translation procedure could --in some specific cases-- be so memory-hungry to prevent the production of the QBF formula even when a considerable amount of memory is available. 
Moreover, \qasp's performance is dependent on the choice of the back-end QBF solver that performs differently over different domains.
This means that \textit{the quest for techniques resulting in faster solvers for \ASPQ is still open and challenging}, and directly impacts the deployment of \ASPQ applications. 

\myparagraph{Contributions.}
In this paper, we present a new implementation of \ASPQ that builds on the ideas of \qasp, but features both a more efficient encoding procedure and new optimized encodings of \ASPQ programs in QBF. 
More specifically, we provide:
\begin{enumerate}
\item An approach that exploits the well-founded semantics~\cite{DBLP:journals/jacm/GelderRS91} for simplifying \ASPQ programs.
\item The identification of a natural syntactic fragment of \ASPQ programs that can be directly translated to a QBF in Conjunctive Normal Form (CNF), thereby avoiding costly normalization steps. 
\item A new system for \ASPQ implemented in Python, called \pyqasp, 
that is modular and features an automatic selection of a suitable back-end for the given input.
\end{enumerate}

The new translations in QBF can produce smaller (or equally large) formulas, in terms of the number of quantifiers, variables, and clauses, with respect to the ones employed in \qasp, and this results in a more efficient evaluation process. 
A porting to the \ASPQ setting of the algorithm selection methodology employed by the competition-winning solver, \texttt{ME-ASP}~\cite{DBLP:journals/tplp/MarateaPR14}, allows \pyqasp to deliver steady performance over different problem domains.
An experimental analysis shows when the new optimizations provide benefits and demonstrates empirically that \pyqasp outperforms \qasp, and compares favorably with the implementation of stable-unstable semantics by Janhunen~\shortcite{DBLP:conf/padl/Janhunen22}.


\section{Preliminaries}\label{sec:pre}
In this section, we provide preliminary notions concerning logic programs, answer sets, well-founded semantics, and \ASPQ.
For ease of presentation, we focus our attention on propositional logic programs, but our methods are applicable to the full language of ASP.

\subsection{Programs}\label{sec:pre:prog}
An atom is a propositional variable, and a literal is either an atom $a$ or its negation $\sim a$ where $\sim$ represents negation as failure. A literal $l$ is positive (resp. negative) if it is of the form $a$ (resp. $\sim a$). The complement of a literal $l$, $\overline{l}$, is $\sim a$ if $l=a$ or $a$ if $l = \sim a$. Given a set of literals $L$, $\neg L$ denotes the set of literals $\{ \overline{l} \mid l \in L\}$. A choice atom is an expression of the form $\{a_1;\cdots;a_m\}$ where $m\ge 0$ and $a_1,\ldots,a_m$ are atoms. A rule is an expression of the form $h \leftarrow l_1, \ldots, l_n$ where $n \ge 0$, $h$ is either an atom or a choice atom, referred to as rule head and denoted by $H_r$, and $l_1,\ldots,l_n$ is a conjunction of literals, referred to as body and denoted by $B_r$. A rule is a fact if it has an empty body; it is normal if $h$ is an atom; it is a choice rule if $h$ is a choice atom. A constraint is a rule with an empty head $\leftarrow l_1, \ldots l_n$, which is a shorthand for $x \leftarrow l_1, \ldots, l_n, \sim x$ with $x$ being a fresh atom not occurring anywhere else. 
A choice rule $\{a_1;\cdots;a_m\} \leftarrow $ is (for simplicity) a shorthand for the rules
$a_i \leftarrow \sim na_i$, $na_i \leftarrow \sim a_i$ 
for $1 \le i \le m$, where all $na_i$ are fresh atoms not appearing elsewhere.
%
A program is a finite set of normal rules.

Given a program $P$, the dependency graph, $G_P$, is a directed labeled graph where nodes are atoms in $P$ and there is a positive (resp. negative) arc $(a_1,a_2)$ if there exists a rule $r \in P$ such that $a_1$ (resp. $\sim a_1$) appears in the body of $r$ and $a_2$ is the head of $r$. $P$ is stratified if $G_P$ contains no cycles involving negative arcs.

\myparagraph{Stable Models Semantics.} \label{sec:pre:sm}
Given a program $P$, the Herbrand Base, $\mathcal{B}_P$, is the set of atoms occurring in $P$. 
A (partial) interpretation $I$ is a subset of $\mathcal{B}_P \cup \neg \mathcal{B}_P$. 
A literal $l$ is true (resp. false) w.r.t.\ $I$ if $l \in I$ (resp. $\overline{l} \in I$), otherwise it is undefined. 
A conjunction of literals is true w.r.t.\ $I$ if all literals are true. 
An interpretation $I$ is consistent if for each $l \in I$, $\overline{l} \notin I$; 
it is total if for each $a \in \mathcal{B}_P$, $a$ is either true or false w.r.t. $I$. 
A rule $r$ is falsified w.r.t. $I$ if $B_r$ is true and $H_r$ is false.
A rule $r$ is satisfied w.r.t. $I$ if $B_r$ is false or $H_r$ is true.
A consistent interpretation $I$ is a model of $P$ if it does not falsify any rule in $P$.

A total model $M$ is a (subset-)minimal model if does not exist a total model $M_1$ such that $M_1^+ \subset M^+$.
Given a model $M$, the (Gelfond-Lifschitz) reduct of $P$ w.r.t. $M$, $P^M$, is obtained from $P$ by removing rules with negative literals in the body that are false w.r.t. $M$ and deleting from the body of the remaining rules all negative literals that are true w.r.t. $M$. A total model $M$ is an answer set of $P$ if $M$ is a minimal model 
of $P^M$~\cite{DBLP:journals/ngc/GelfondL91}.

$\as(P)$ is the set of answer sets of $P$. 
P is \textit{coherent} iff $\as(P) \neq \emptyset$.

\myparagraph{Well-founded Semantics.}\label{sec:pre:wf}
Let $P$ be a program and $I$ be an interpretation, a set of atoms $U \subseteq \mathcal{B}_P$ is an unfounded set of $P$ w.r.t. $I$ if for each rule $r \in P$ such that $H_r \in U$, $B_r$ is false w.r.t. $I$ or $B_r \cap U \neq \emptyset$. The greatest unfounded set of $P$ w.r.t. $I$, $U_P(I)$, is defined as the union of all unfounded sets of $P$ w.r.t. $I$. Let $T_P(I)$ be the set of atoms $a \in \mathcal{B}_P$ such that there exists a rule $r \in P$ having $a$ in the head and a true body w.r.t. $I$, the well-founded operator, $\mathcal{W}_P(I)$, is defined as $T_P(I) \cup \neg U_P(I)$. The (partial) well-founded model is defined as the least fixed point of the operator $\mathcal{W}_P$~\cite{DBLP:journals/jacm/GelderRS91}. 
The well-founded model $W$ of $P$ is a subset of each answer set of $P$.

\subsection{Answer Set Programming with Quantifiers}\label{sec:pre:aspq}

An \textit{ASP with Quantifiers} (\ASPQ) program $\Pi$ is
of the form~\cite{DBLP:journals/tplp/AmendolaRT19}:%
\begin{equation}
\Box_1 P_1\ \Box_2 P_2\ \cdots\ \Box_n P_n :  C ,
\label{eq:qasp}
\end{equation}

\noindent where, for each $i=1,\ldots,n$, $\Box_i \in \{ \exists^{st}, \forall^{st}\}$, $P_i$ is an ASP program, 
and $C$ is a stratified ASP program possibly with constraints.
An  \ASPQ program $\Pi$ of the form (\ref{eq:qasp}) is \emph{existential}  if $\Box_1 =\exists^{st}$, otherwise it is \emph{universal}.

Given a logic program $P$, a total interpretation $I$ over the Herbrand base $\mathcal{B}_P$, and an \ASPQ  
program $\Pi$ of the form (\ref{eq:qasp}), we denote by $\fix_P(I)$ the set of 
facts and constraints          
$\{ a \mid a\in I \cap \mathcal{B}_P\} \cup \{ \leftarrow a \mid a\in \mathcal{B}_P \setminus I \}$,
and by $\Pi_{P,I}$ the \ASPQ\ program of the form~(\ref{eq:qasp}), where $P_1$ is 
replaced by $P_1\cup \fix_P(I)$, that is, 
$\Pi_{P,I} =\Box_1 (P_1\cup \fix_P(I))\ \Box_2 P_2\  \cdots \Box_n P_n :  C.$

The \textit{coherence} of \ASPQ programs is defined by induction as follows:
\begin{itemize}
\item $\exists^{st} P:C$ is coherent, if there exists $M\in AS(P)$ such that 
$C\cup \fix_P(M)$ is coherent;
\item $\forall^{st} P:C$ is coherent, if for every $M\in AS(P)$, 
$C\cup \fix_P(M)$ is coherent;
\item $\exists^{st} P\ \Pi$ is coherent, if there exists $M\in AS(P)$ such 
that $\Pi_{P,M}$ is coherent;
\item $\forall^{st} P\ \Pi$ is coherent, if for every $M\in AS(P)$, $\Pi_{P,M}$
is coherent.
\end{itemize}
``Unwinding'' the definition for a quantified program  
$\Pi=\exists^{st} P_1 \forall^{st} P_2 \cdots \exists^{st} P_{n-1} \forall^{st} P_n: C$
yields that $\Pi$
is coherent if there exists an answer set $M_1$ of $P_1'$ 
such that for each answer set $M_2$ of $P_2'$ 
there is an answer set $M_3$ of $P_3', \ldots,$ 
there is an answer set $M_{n-1}$ of $P_{n-1}'$ 
such that for each answer set $M_n$ of $P_n'$, 
there is an answer set of $C\cup \fix_{P_n'}(M_n)$, 
where $P_1'=P_1$, and $P_i'=P_i\cup \fix_{P_{i-1}'}(M_{i-1})$, if $i\geq2$. 
For an existential \ASPQ program $\Pi$, $M \in AS(P_1)$ is a \textit{quantified answer set} of $\Pi$, if $(\Box_2 P_2 \cdots \Box_n P_n :  C)_{P_1,M}$ is coherent.
We denote by $QAS(\Pi)$ the set of all quantified answer sets of $\Pi$.

\nop{For example,  given a 2-QBF formula $\Phi=\exists X\forall Y G$, 
where $G=D_1 \lor\ldots \lor D_h$ is a DNF, and
$D_i= L_{i,1} \land\ldots \land L_{i,k_i}$ and $L_{i,j}$ are literals 
over $X\cup Y$, we encode $\Phi$ in a quantified \ASPQ program $\Pi_\Phi=\exists^{st} P_1 \forall^{st} P_2: C$ where 
$$P_1 = \{ \{x_1,\dots,x_n\} \},\quad P_2 = \{ \{y_1,\dots,y_m\} \},$$
$$C = \{ sat \leftarrow \sigma(L_{i,1}),\ldots,\sigma(L_{i,k_i}) \mid \forall i=1,\ldots,h\} $$$$\cup \{ \leftarrow not\ sat \}.$$
A fresh atom $sat$ models satisfiability, and  
set $\sigma(z)=z$ and $\sigma(\neg z)=not\ z$.
Here, a satisfiability of an existential 2-QBF is encoded directly.
Indeed $P_1$ (which consists of a single choice rule) guesses an 
assignment to $X$ s.t. for all assignments to $Y$ generated by $P_2$ (which, as $P_1$, consists of a single choice rule), 
$sat$ must be derived by satisfying at least one conjunct in $\varphi$, i.e., 
$\Pi_\Phi$ is satisfiable iff $\Phi$ is. The example can be easily generalized to general QBFs. 
}

Given a set of propositional atoms $A$, we denote by $ch(A)$ the program $\{ \{ a \} | a \in A \}$ made of  choice rules over atoms in $A$.  
For two ASP programs $P$ and $P'$, let $\Int(P,P')$ be the set $\mathcal{B}_P \cap \mathcal{B}_{P'}$ of common atoms. 
For two programs $P$ and $P'$, the choice interface program $\CH(P,P')$ is defined as $ch(\Int(P,P'))$. 
For a propositional formula $\Phi$, $var(\Phi)$ denotes the variables occurring in $\Phi$.
For an \ASPQ program $\Pi$, and an integer $1\leq i \leq n$, we define the program $P_i^{\leq}$ as the union of program $P_j$ with $1 \leq j \leq i$. 
Given an input program $\Pi$ of the form (\ref{eq:qasp}), the intermediate versions $G_i$ of its subprograms, and the QBF $\Phi(\Pi)$ encoding $\Pi$ are:
\[
G_i = \left\{\begin{array}{ll}
 P_1 & i=1 \\
 P_i \cup \CH(P_{i-1}^{\leq},P_i) & 1<i\leq n\\
 C \cup \CH(P_n^{\leq},C) & i=n+1
\end{array}\right.
\]
$$\Phi(\Pi) = \boxplus_1 \cdots \boxplus_{n+1} \left(\bigwedge_{i=1}^{n+1} (\phi_i \leftrightarrow \CNF(G_i))\right) \wedge \phi_c $$ 

\noindent where $\CNF(P)$ is a CNF formula encoding the program $P$ (such that models of $\CNF(P)$ correspond to $\as(P)$); 
$\phi_1, \ldots, \phi_{n+1}$ are fresh propositional variables; 
$\boxplus_{i} = \exists x_i$ if $\Box_i=\exists^{st}$ or $i=n+1$, and $\boxplus_{i} = \forall x_i$ otherwise, where $x_i = var(\phi_i \leftrightarrow \CNF(G_i))$ for $i=1,\cdots,n+1$, and $\phi_c$ is the formula 
$$\phi_c = \phi_1' \odot_1 ( \phi_2'  \odot_2 ( \cdots \phi_n' \odot_n (\phi_{n+1}) \cdots ))$$
\noindent where $\odot_i = \vee$ if $\Box_i=\forall^{st}$, and $\odot_i = \wedge$ otherwise, and $\phi_i' = \neg \phi_i$ if $\Box_i=\forall^{st}$, and $\phi_i' = \phi_i$ otherwise.
Intuitively, there is a direct correspondence between the quantifiers in $\Pi$ and $\Phi(\Pi)$; moreover, in each subprogram of $\Pi$ (i.e, $P_1, \cdots, P_n, C$) the atoms interfacing  with preceding subprograms are left open; then the programs are converted into equivalent CNF formulas; finally, the formula $\phi_c$ is built to constrain the variable assignments corresponding to the stable models of each subprogram so that they behave as required by the semantics of \ASPQ. 

\begin{thm}[Amendola et al.~\citeyearNP{DBLP:conf/lpnmr/AmendolaCRT22}]\label{th:original-translation}
Let $\Pi$ be a quantified program. Then $\Phi(\Pi)$ is true iff $\Pi$ is coherent.
\end{thm}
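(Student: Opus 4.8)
The plan is to prove the equivalence by induction on the number $n$ of quantified subprograms of $\Pi$, following the inductive definition of coherence. Two ingredients are used throughout. First, the defining property of $\CNF$: for any program $P$, the models of $\CNF(P)$ restricted to $\mathcal{B}_P$ are exactly $\as(P)$, so in $\Phi(\Pi)$ the conjunct $\phi_i \leftrightarrow \CNF(G_i)$ makes the flag $\phi_i$ true precisely when the current assignment to the atoms of $G_i$ encodes an answer set of $G_i$; since $\phi_i$ (and any auxiliary variables introduced by the CNF transformation) are functionally determined by those atoms, I will treat $\phi_i$ as an abbreviation for ``$\CNF(G_i)$ holds'' and regard $\boxplus_i$ as effectively ranging over candidate answer sets of $P_i$. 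Second, a correspondence between $\CH$ and $\fix$: for programs $P,P'$ and a total interpretation $M$ over $\mathcal{B}_P$, the answer sets of $P'\cup\CH(P,P')$ whose restriction to $\Int(P,P')$ agrees with $M$ correspond bijectively to $\as(P'\cup\fix_P(M))$, the atoms of $\mathcal{B}_P\setminus\mathcal{B}_{P'}$ being irrelevant because they occur nowhere in $P'$. Thus opening the interface atoms of $G_i$ with $\CH$ and then quantifying them in an outer block realizes exactly the $\fix$ that passes an answer set from one level to the next; moreover this is simultaneously consistent for all deeper levels, since $\fix$ only adds facts and constraints over atoms already in $\mathcal{B}_{P_{i-1}^{\leq}}$, so the interfaces $\Int(P_j^{\leq},P_{j+1})$ are unaffected.

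\emph{Base case.} For $n=1$, $\Pi=\Box_1 P_1:C$: here $G_1=P_1$, $G_2=C\cup\CH(P_1,C)$, $\boxplus_2=\exists$, and $\Phi(\Pi)=\boxplus_1\exists x_2\big((\phi_1\leftrightarrow\CNF(P_1))\wedge(\phi_2\leftrightarrow\CNF(G_2))\wedge\phi_c\big)$ with $\phi_c=\phi_1\wedge\phi_2$ if $\Box_1=\exists^{st}$ and $\phi_c=\neg\phi_1\vee\phi_2$ if $\Box_1=\forall^{st}$. Unfolding the two quantifier cases with the two ingredients above yields: there is $M\in\as(P_1)$ with $C\cup\fix_{P_1}(M)$ coherent (resp.\ for every $M\in\as(P_1)$, $C\cup\fix_{P_1}(M)$ coherent), which is precisely the definition of coherence of $\exists^{st}P_1:C$ (resp.\ $\forall^{st}P_1:C$).

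\emph{Inductive step.} Write $\Pi=\Box_1 P_1\,\Pi'$ with $\Pi'=\Box_2 P_2\cdots\Box_n P_n:C$, so $\Pi'_{P_1,M}$ has $n-1$ quantified subprograms for each $M$. I would peel off the outermost block $\boxplus_1$ over $x_1$. Fixing $x_1$ determines $\phi_1$ and fixes every interface atom shared by $P_1$ with a later subprogram; when $\phi_1$ is true, $x_1$ encodes some $M\in\as(P_1)$, and --- using the $\CH$/$\fix$ correspondence and the interface invariance above --- the residual formula $\boxplus_2\cdots\boxplus_{n+1}\big(\bigwedge_{i\ge 2}(\phi_i\leftrightarrow\CNF(G_i))\wedge(\phi_2'\odot_2(\cdots))\big)$ at this partial assignment is logically equivalent to $\Phi(\Pi'_{P_1,M})$, because for $i\ge 2$ the programs $G_i$ of $\Pi$ become, after identifying the fixed interface atoms, the $G$-programs of $\Pi'_{P_1,M}$, and the tail of $\phi_c$ is precisely the $\phi_c$ of $\Pi'$. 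Splitting on $\Box_1$: if $\Box_1=\exists^{st}$ then $\phi_c=\phi_1\wedge(\text{tail})$ and $\boxplus_1=\exists x_1$, so $\Phi(\Pi)$ holds iff $\Phi(\Pi'_{P_1,M})$ holds for some $M\in\as(P_1)$; if $\Box_1=\forall^{st}$ then $\phi_c=\neg\phi_1\vee(\text{tail})$ and $\boxplus_1=\forall x_1$, and the assignments to $x_1$ with $\phi_1$ false satisfy the clause vacuously, so $\Phi(\Pi)$ holds iff $\Phi(\Pi'_{P_1,M})$ holds for every $M\in\as(P_1)$. By the induction hypothesis $\Phi(\Pi'_{P_1,M})$ holds iff $\Pi'_{P_1,M}$ is coherent, which in both cases matches the definition of coherence of $\Pi$.

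\emph{Main obstacle.} The quantifier unfolding is routine; the delicate part is the interface bookkeeping --- proving rigorously that opening the shared atoms of each $G_i$ with $\CH$ and quantifying them in the outer block realizes the \ASPQ $\fix$ operator, and that this stays consistent across all deeper levels (e.g.\ an atom occurring in $P_1$ and $P_3$ but not $P_2$ must still receive $M$'s value in the residual encoding). A secondary point that needs care is justifying that the flag variables and the auxiliary CNF variables are functionally determined, so that treating the flags as abbreviations --- and hence the clean case analysis on $\odot_i$ and $\phi_i'$ --- is sound.
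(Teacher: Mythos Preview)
This theorem is not proved in the paper: it is stated with attribution to Amendola et al.\ (2022) and thereafter used as a black box; the appendix supplies proofs only for Theorems~\ref{thm:phi:wf}--\ref{thm:allGC}, each of which \emph{invokes} Theorem~\ref{th:original-translation} rather than re-deriving it. There is therefore no in-paper argument to compare yours against.

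For what it is worth, your inductive scheme---peeling off the outermost block and reducing the residual formula to $\Phi(\Pi'_{P_1,M})$---is the natural route, and the two ingredients you isolate (the $\CNF(\cdot)$/answer-set correspondence and the $\CH$/$\fix$ bijection) are exactly what such a proof needs. Your diagnosis of the interface bookkeeping across non-adjacent levels as the main technical burden is accurate. The one point you flag but do not close is the placement of $\phi_i$ and the auxiliary CNF variables when $\Box_i=\forall^{st}$: read literally, the definition here puts them in $x_i$ under a universal quantifier, so an adversary could set $\phi_i$ to disagree with $\CNF(G_i)$ and falsify the matrix outright, which would break your ``vacuous'' case for $\neg\phi_1$. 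Your ``functionally determined'' remark is the right repair in spirit---definitional variables can be pushed to an inner existential block without changing truth---but a rigorous proof must pin down the actual quantifier placement from the original source rather than from the abbreviated presentation given here.
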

%

\section{Simplification based on well-founded semantics}\label{sec:wf}
We present an alternative approach that exploits the well-founded semantics in order to obtain a simplified but equivalent \ASPQ program that allows a more compact translation into a QBF formula both in terms of number of clauses and average clause length.
\begin{definition}
\label{def:residual}
Given a program $P$ and its well-founded model $\mathcal{W}$, the \textit{residual program}, $R(P)$,
is obtained from $P$ by removing all those rules with a false body w.r.t. $\mathcal{W}$ and true literals in $\mathcal{W}$ from the bodies of the remaining ones.
\end{definition}
\begin{prop}
\label{prop:eq:residual}
Given a program $P$ and its well-founded model $\mathcal{W}$, $\as(P) = \as(R(P))$
\end{prop}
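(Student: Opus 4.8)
The plan is to show the two programs have exactly the same answer sets by exploiting the characterization of answer sets via the Gelfond-Lifschitz reduct together with the fact (stated in the preliminaries) that the well-founded model $\mathcal{W}$ is contained in every answer set of $P$. The key structural observation is that each transformation step used to build $R(P)$ from $P$ — deleting a rule whose body is false w.r.t.\ $\mathcal{W}$, and deleting from a surviving rule's body a literal that is true w.r.t.\ $\mathcal{W}$ — is sound precisely because every answer set $M$ of $P$ extends $\mathcal{W}$, so the truth values that $\mathcal{W}$ assigns are preserved in $M$. I would therefore prove the two inclusions by arguing that for any relevant total interpretation $M$ with $\mathcal{W} \subseteq M$, the reducts $P^M$ and $R(P)^M$ are "equivalent enough" to have the same minimal models.

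First I would fix notation: let $\mathcal{W}$ be the (partial) well-founded model of $P$, with positive part $\mathcal{W}^+$ and negative part $\mathcal{W}^-$. For the inclusion $\as(P) \subseteq \as(R(P))$, take $M \in \as(P)$. Since the well-founded model is a subset of each answer set, $\mathcal{W} \subseteq M$; in particular every literal true w.r.t.\ $\mathcal{W}$ is true w.r.t.\ $M$, and every atom false w.r.t.\ $\mathcal{W}$ is false w.r.t.\ $M$. Now compare the reducts $P^M$ and $R(P)^M$. A rule $r$ removed when forming $R(P)$ has a body false w.r.t.\ $\mathcal{W}$; I would split on the reason: either some negative body literal $\sim a$ has $a \in \mathcal{W}^+ \subseteq M^+$, in which case $r$ is deleted in forming $P^M$ as well; or some positive body literal $a$ has $a \in \mathcal{W}^- \subseteq M^-$ (is false in $M$), so $r$ survives into $P^M$ but its body is still false w.r.t.\ $M$, hence $r$ is inapplicable and can be dropped without changing the minimal models of $P^M$. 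Literals deleted from a surviving rule's body are true w.r.t.\ $\mathcal{W}$ hence true w.r.t.\ $M$: a positive such literal $a \in \mathcal{W}^+ \subseteq M^+$ contributes nothing to applicability of the reduced rule in any model of the reduct extending... — more carefully, I would show that deleting a positive body atom known to be in $M^+$, or a negative body literal $\sim a$ with $a \in \mathcal{W}^-$ (already deleted in the GL reduct), does not alter which interpretations $N \subseteq M$ satisfy the rule. The cleanest route is: establish that $N$ is a model of $P^M$ iff $N$ is a model of $R(P)^M$ for every $N$ with, say, $\mathcal{W}^+ \subseteq N \subseteq M$, and separately note that every minimal model of either reduct lies in this range (minimality forces $N \subseteq M$ since $M$ is a model, and $\mathcal{W}^+ \subseteq N$ follows from $\mathcal{W}^+ \subseteq M$ together with supportedness). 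Then $M$ minimal-models $P^M$ iff $M$ minimal-models $R(P)^M$, giving $M \in \as(R(P))$.

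For the converse $\as(R(P)) \subseteq \as(P)$, I would first argue that $P$ and $R(P)$ have the same well-founded model — this is essentially the classical fact that the residual program preserves the well-founded model, and it ensures $\mathcal{W} \subseteq M$ for any $M \in \as(R(P))$ as well — and then run the same reduct comparison in the other direction. The symmetric argument shows $R(P)^M$ and $P^M$ have the same minimal models under the constraint $\mathcal{W}^+ \subseteq N \subseteq M$, so $M \in \as(R(P))$ implies $M \in \as(P)$.

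The main obstacle is the careful bookkeeping in the reduct comparison: the subtle point is that removing a rule from $P$ is \emph{not} in general model-preserving, so I cannot simply say "$R(P)$ and $P$ have the same models" — I must work at the level of the \emph{reducts relative to a fixed candidate answer set $M$ extending $\mathcal{W}$}, and there use that (i) a rule with body false w.r.t.\ $\mathcal{W}$ is either killed by the GL reduct or stays inapplicable in every subset of $M$, and (ii) stripping $\mathcal{W}$-true literals is invisible once we restrict attention to interpretations between $\mathcal{W}^+$ and $M$. Pinning down precisely that the minimal models of both reducts necessarily fall in this interval — rather than being some spurious minimal model outside it — is the part that needs the most attention, and it is exactly where the hypothesis "$\mathcal{W}$ is a subset of every answer set" does the work.
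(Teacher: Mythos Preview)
Your proposal is correct and follows essentially the same approach as the paper: both use that $\mathcal{W}$ is contained in every answer set, then compare the GL reducts $P^M$ and $R(P)^M$ rule by rule to conclude they have the same minimal models. The paper gives only a brief sketch of this argument, while you spell out more of the bookkeeping (the case split on why a deleted rule's body is false, the restriction to interpretations between $\mathcal{W}^+$ and $M$, and the need to know $\mathcal{W}\subseteq M$ also for $M\in\as(R(P))$ in the converse direction), but the underlying strategy is the same.
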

Note that $\mathcal{W}$ is a subset of any stable model $M$ of $P$ and so the missing rules in $R(P)$ have a false body w.r.t. M and so they are trivially satisfied by $M$. Each rule in $P$ that has not been removed in $R(P)$ is satisfied if and only if the simplified rule in $R(P)$ is satisfied. 
So, it can be proved that the reduct of the two programs have the same minimal models for every model $M$.
Let $P$ and $P^{\prime}$ be two programs and $\mathcal{W}$ the well-founded model of $P$, $\CH^{\prime}(P,P^{\prime}) = \{\{a\} \mid a \in (Int(P,P^{\prime})\setminus(\mathcal{W} \cup \neg \mathcal{W}))\} \cup \{a \leftarrow \ \mid a \in (Int(P,P^{\prime})\cap \mathcal{W})\}$.
Given an \ASPQ program $\Pi$ of the form (\ref{eq:qasp}), the QBF encoding $\Phi^{WF}(\Pi)$ is as follows:
\[
    G^{WF}_i = \left
    \{\begin{array}{cc}
    R(P_1) & i=1 \\
    R(P_i \cup \CH^{\prime}(P_{i-1}^{\leq},P_i)) & i\in [2..n]\\
    R(C \cup \CH^{\prime}(P_n^{\leq},C)) & i=n+1
    \end{array}\right.
\]
$$\Phi^{\mathcal{WF}}(\Pi) = \boxplus_1 \cdots \boxplus_{n+1} \left(\bigwedge_{i=1}^{n+1} (\phi^{\mathcal{WF}}_i \leftrightarrow \CNF(G^{WF}_i)\right) \wedge \phi_c,$$
\noindent where $\CNF(G^{WF}_i)$ is a CNF formula encoding $G^{WF}_i$, 
$\phi^{\mathcal{WF}}_1, \ldots, \phi^{\mathcal{WF}}_{n+1}$ are fresh propositional variables; 
$\boxplus_{i} = \exists x_i$ if $\Box_i=\exists^{st}$ or $i=n+1$, and $\boxplus_{i} = \forall x_i$ otherwise, where $x_i = var(\phi_i^{\mathcal{WF}} \leftrightarrow \CNF(G^{WF}_i))$ for $i=1,\cdots,n+1$, and $\phi_c$ is the formula 
$$\phi_c = \phi_1' \odot_1 ( \phi_2'  \odot_2 ( \cdots \phi_n' \odot_n (\phi_{n+1}) \cdots ))$$
\noindent where $\odot_i = \vee$ if $\Box_i=\forall^{st}$, and $\odot_i = \wedge$ otherwise, and $\phi_i' = \neg \phi^{\mathcal{WF}}_i$ if $\Box_i=\forall^{st}$, and $\phi_i' = \phi^{\mathcal{WF}}_i$ otherwise.
Intuitively, $\Phi^{\mathcal{WF}}(\Pi)$ is constructed by following the encoding proposed in Section~\ref{sec:pre:aspq} 
but each program $P_i$ is replaced by its residual w.r.t.\ the well-founded model. 
\newcommand{\thmtextphiwf}{
Let $\Pi$ be an \ASPQ program, then $\Phi^{\mathcal{WF}}(\Pi)$ is true iff $\Pi$ is coherent.}
\begin{thm}
\label{thm:phi:wf}
\thmtextphiwf{}
\end{thm}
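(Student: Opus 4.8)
The plan is to reduce $\Phi^{\mathcal{WF}}(\Pi)$ to the original encoding $\Phi(\Pi)$ and invoke Theorem~\ref{th:original-translation}, rather than re-proving coherence from scratch. The key observation is that the two encodings differ only by (a) replacing each subprogram $G_i$ by a simplified subprogram $G_i^{WF}$, and (b) using $\CH'$ instead of $\CH$ for the interface between consecutive subprograms. Since $\phi_c$, the quantifier prefix pattern $\boxplus_i$, and the operators $\odot_i$ are defined identically in both encodings, it suffices to show that for each $i$ the CNF formula $\CNF(G_i^{WF})$ has the same models (projected onto the relevant atoms) as $\CNF(G_i)$, and then argue that this makes $\Phi^{\mathcal{WF}}(\Pi)$ and $\Phi(\Pi)$ equivalent as QBFs.

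First I would establish the per-subprogram equivalence $\as(G_i) = \as(G_i^{WF})$. For $i=1$ this is exactly Proposition~\ref{prop:eq:residual} applied to $P_1$. For $1 < i \le n$ and for $i = n+1$, there are two things to combine: that $\CH'(P_{i-1}^{\le}, P_i)$ and $\CH(P_{i-1}^{\le}, P_i)$ interact identically with the rest of the construction, and that taking the residual preserves answer sets. For the first point, the crucial fact stated in the excerpt is that the well-founded model $\mathcal{W}$ of $P_{i-1}^{\le}$ is a subset of every answer set of $P_{i-1}^{\le}$; hence an interface atom $a \in \Int(P_{i-1}^{\le}, P_i)$ that lies in $\mathcal{W}$ is true in every relevant guess, and replacing its choice rule $\{a\}$ by the fact $a \leftarrow$ (as $\CH'$ does) does not change which combined models survive — precisely because in the surrounding QBF the truth of $\phi_{i-1}^{\mathcal{WF}}$ forces the $\CNF(G_{i-1}^{WF})$-model to agree with an answer set of $G_{i-1}^{WF}$, which contains $\mathcal{W}$. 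Similarly, atoms in $\neg\mathcal{W}$ are false everywhere and $\CH'$ simply drops their choice rules. For the second point, apply Proposition~\ref{prop:eq:residual} again, now to the program $P_i \cup \CH'(P_{i-1}^{\le}, P_i)$ (resp. $C \cup \CH'(P_n^{\le}, C)$), yielding $\as(G_i^{WF}) = \as(P_i \cup \CH'(\cdots))$.

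Next I would lift these equivalences to the level of the full QBFs. The argument proceeds by structural induction on $n$, mirroring the inductive definition of coherence: at each quantifier level, the set of assignments to $x_i$ that make $\phi_i^{\mathcal{WF}} \leftrightarrow \CNF(G_i^{WF})$ behave as "$\phi_i^{\mathcal{WF}}$ is true" corresponds bijectively (via the answer-set equivalence just proved, together with the fact that $\CNF(P)$'s models correspond to $\as(P)$) to the analogous set for the original encoding; the propagation of these correspondences through $\phi_c$ is then purely Boolean and identical in both cases. Thus $\Phi^{\mathcal{WF}}(\Pi)$ is true iff $\Phi(\Pi)$ is true, and the conclusion follows from Theorem~\ref{th:original-translation}.

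The main obstacle I anticipate is the bookkeeping around the interface atoms and the residual operator interacting: one must be careful that the well-founded model used to form $R(\cdot)$ inside $G_i^{WF}$ is that of $P_i \cup \CH'(P_{i-1}^{\le}, P_i)$, whereas the well-founded model used to define $\CH'$ itself is that of $P_{i-1}^{\le}$ — these are different programs, and verifying that the simplifications are mutually consistent (i.e. that no rule needed to reconstruct an answer set of $G_i$ is erroneously deleted) is where the proof needs genuine care rather than routine manipulation. A clean way to handle this is to factor the claim as the composition of two lemmas — one stating $\CH'$ may replace $\CH$ because $\mathcal{W}(P_{i-1}^{\le})$ is contained in all answer sets, the other being a direct invocation of Proposition~\ref{prop:eq:residual} — and to keep the QBF-level induction abstract, depending only on the black-box property "$\CNF(P)$ has models in bijection with $\as(P)$."
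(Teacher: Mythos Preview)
Your approach is essentially the paper's: reduce to Theorem~\ref{th:original-translation} by showing that replacing $\CH$ with $\CH'$ and passing to residuals does not affect the QBF's truth value, using the well-founded-model-is-contained-in-every-answer-set fact for the former and Proposition~\ref{prop:eq:residual} for the latter. One correction: the intermediate goal you state, $\as(G_i)=\as(G_i^{WF})$, is not true in general---one only has $\as(G_i^{WF})\subseteq\as(G_i)$, since $\CH'$ fixes interface atoms that $\CH$ leaves open---but your own subsequent explanation already gives the right repair (the answer sets lost are exactly those disagreeing with $\mathcal{W}(P_{i-1}^{\le})$ on interface atoms, hence never arise from a valid assignment at level $i{-}1$), and this containment-plus-irrelevance phrasing is precisely how the paper argues it.
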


The programs $G^{WF}_i$ preserve the coherence of $\Pi$ due to their construction and Proposition~\ref{prop:eq:residual}. Together with Theorem~\ref{th:original-translation}, the result follows.

\begin{prop}
\label{prop:unsat:level}
Let $\Pi$ be an \ASPQ program of the form (\ref{eq:qasp}), if $G^{WF}_i$ is incoherent then $\phi^{\mathcal{WF}}_i$ can be replaced by $\bot$. 
\end{prop}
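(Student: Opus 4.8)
The plan is to reduce the claim to the unsatisfiability of $\CNF(G^{WF}_i)$ and then perform a truth-preserving rewriting of the quantified formula $\Phi^{\mathcal{WF}}(\Pi)$, closing with Theorem~\ref{thm:phi:wf}.

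First I would use the defining property of the $\CNF$ translation: the models of $\CNF(G^{WF}_i)$ correspond to $\as(G^{WF}_i)$. Since $G^{WF}_i$ is incoherent, $\as(G^{WF}_i)=\emptyset$, hence $\CNF(G^{WF}_i)$ has no model, i.e.\ $\CNF(G^{WF}_i)\equiv\bot$. Consequently the $i$-th conjunct of the matrix of $\Phi^{\mathcal{WF}}(\Pi)$, namely $\phi^{\mathcal{WF}}_i\leftrightarrow\CNF(G^{WF}_i)$, is equivalent to $\phi^{\mathcal{WF}}_i\leftrightarrow\bot$, that is, to the unit $\neg\phi^{\mathcal{WF}}_i$. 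Every assignment satisfying the matrix therefore sets $\phi^{\mathcal{WF}}_i$ to false, so by unit propagation the matrix is logically equivalent to the one in which every occurrence of $\phi^{\mathcal{WF}}_i$ in $\phi_c$ is replaced by $\bot$. As two QBFs with the same quantifier prefix and logically equivalent matrices have the same truth value, this replacement preserves the truth value of $\Phi^{\mathcal{WF}}(\Pi)$, which by Theorem~\ref{thm:phi:wf} is true iff $\Pi$ is coherent; this is exactly what the proposition asserts. As a byproduct, once $\phi^{\mathcal{WF}}_i$ no longer occurs in $\phi_c$, the $i$-th biconditional collapses to $\top$ and the quantification of $\phi^{\mathcal{WF}}_i$ and of the fresh variables of $\CNF(G^{WF}_i)$ becomes inessential, so $\CNF(G^{WF}_i)$ need not be built at all.

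I expect the main obstacle to be the interaction with the quantifier prefix. The rewriting is innocuous provided $\phi^{\mathcal{WF}}_i$ is a \emph{defined} variable — functionally determined by $\CNF(G^{WF}_i)$, whose variables all belong to levels $\le i$ (the atoms of $G^{WF}_i$ come from $P_i$ and from the interface with the preceding subprograms only) — and is thus handled existentially rather than adversarially; one then has to justify moving the unit $\neg\phi^{\mathcal{WF}}_i$ outward past the inner quantifiers before applying $\exists\phi^{\mathcal{WF}}_i[\neg\phi^{\mathcal{WF}}_i\wedge\psi]\equiv\psi[\phi^{\mathcal{WF}}_i/\bot]$. The argument is uniform in $\Box_i$: when $\Box_i=\forall^{st}$ the substitution makes $\phi_i'=\neg\phi^{\mathcal{WF}}_i$ collapse to $\top$, which is precisely the vacuous coherence that an incoherent universal layer contributes under the \ASPQ semantics (using also Proposition~\ref{prop:eq:residual}).
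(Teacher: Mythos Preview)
Your argument is correct and follows essentially the same route as the paper's one-line justification: incoherence of $G^{WF}_i$ makes $\CNF(G^{WF}_i)$ unsatisfiable, whence the $i$-th conjunct forces $\phi^{\mathcal{WF}}_i\leftrightarrow\bot$ and the substitution is sound. You supply considerably more detail than the paper---making the matrix-equivalence step explicit and flagging the quantifier-prefix interaction---whereas the paper simply asserts the replacement without further argument.
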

It is easy to see that if $G^{WF}_i$ is incoherent then $\CNF(G^{WF}_i)$ is unsatisfiable and so $\phi^{\mathcal{WF}}_i$ can be replaced by $\bot$. 
\begin{example}
Let $P_i$ be the program $\{a \leftarrow a; \quad p \leftarrow \ \sim a, \ \sim p\}$.
Since predicates occurring in $P_i$ are $p$ and $a$ and both are defined at level $i$ then $\CH^{\prime}(P^{\leq}_{i-1},P_i) = \emptyset$ and so, $G^{WF}_i = R(P_i)$. Since the well-founed model of $P_i$ is $\mathcal{W}=\{\sim a\}$, then $R(P_i) = p\leftarrow \sim p$  that is incoherent, and so $\CNF(G^{WF}_i) = p \wedge \neg p$ is unsatisfiable.
\end{example}
\begin{prop}
\label{prop:stop:encoding}
 Let $\Pi$ be an \ASPQ program of the form (\ref{eq:qasp}), if $G^{WF}_k$ is incoherent then $$\Phi^{\mathcal{WF}}(\Pi) \equiv \boxplus_1 \cdots \boxplus_{k-1} \left(\bigwedge_{i=1}^{k-1} (\phi^{\mathcal{WF}}_i \leftrightarrow \CNF(G^{WF}_i)\right) \wedge \phi_c',$$
\noindent where 
$\phi^{\mathcal{WF}}_1, \ldots, \phi^{\mathcal{WF}}_{k-1}$ are fresh propositional variables; 
$\boxplus_{i} = \exists x_i$ if $\Box_i=\exists^{st}$, and $\boxplus_{i} = \forall x_i$ otherwise, $x_i = var(\phi_i \leftrightarrow \CNF(G^{WF}_i))$ for $i=1,\cdots,k-1$, and $\phi_c^{\prime}$ is the formula 
$$\phi_c' = \phi_1' \odot_1 ( \phi_2'  \odot_2 ( \cdots \phi_{k-1}' \odot_{k-1} (\phi_{k}') \cdots ))$$
\noindent where $\odot_i = \vee$ if $\Box_i=\forall^{st}$, and $\odot_i = \wedge$ otherwise, for $i\in [1,\cdots,k-1]$, $\phi_i' = \neg \phi^{\mathcal{WF}}_i$ if $\Box_i=\forall^{st}$, and $\phi_i' = \phi^{\mathcal{WF}}_i$ otherwise, and $\phi_k' = \top$ if $\Box_i=\forall^{st}$, and $\phi_k' = \bot$ otherwise.
\end{prop}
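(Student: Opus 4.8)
The plan is to turn $\Phi^{\mathcal{WF}}(\Pi)$ into the right-hand side by a short chain of truth-value-preserving QBF rewritings, all driven by the incoherence of $G^{WF}_k$. First, by Proposition~\ref{prop:unsat:level} we may replace $\phi^{\mathcal{WF}}_k$ by $\bot$ throughout the matrix; the conjunct $\phi^{\mathcal{WF}}_k \leftrightarrow \CNF(G^{WF}_k)$ then reads $\neg\,\CNF(G^{WF}_k)$, which is valid since $\CNF(G^{WF}_k)$ is unsatisfiable, and so it can be dropped. Next comes the collapse of $\phi_c$: its sub-expression from level $k$ onward is $\phi_k' \odot_k (\phi_{k+1}' \odot_{k+1}(\cdots))$, and after the substitution we have $\phi_k' = \phi^{\mathcal{WF}}_k = \bot$ with $\odot_k = \wedge$ when $\Box_k = \exists^{st}$, so this sub-expression collapses to $\bot$, whereas $\phi_k' = \neg\phi^{\mathcal{WF}}_k = \top$ with $\odot_k = \vee$ when $\Box_k = \forall^{st}$, so it collapses to $\top$. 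In either case the constant obtained is exactly the value prescribed for $\phi_k'$ in the statement, so $\phi_c$ becomes the $\phi_c'$ of the proposition, and $\phi_c'$ no longer mentions any variable of index $\geq k$.

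The remaining and most delicate step is to discard the trailing quantifier blocks $x_k,\dots,x_{n+1}$. After the previous simplifications their only surviving occurrences are in the conjuncts $\phi^{\mathcal{WF}}_i \leftrightarrow \CNF(G^{WF}_i)$ for $i=k+1,\dots,n+1$, so it suffices to prove that $\boxplus_k\cdots\boxplus_{n+1}\big(\bigwedge_{i=k+1}^{n+1}(\phi^{\mathcal{WF}}_i \leftrightarrow \CNF(G^{WF}_i))\big)$ is valid; replacing it by $\top$ then makes $x_k,\dots,x_{n+1}$ vanish and leaves precisely the claimed formula. I would establish validity by peeling the blocks from the inside out, using that each biconditional is \emph{definitional}: once the atoms of $G^{WF}_i$ first introduced at level $i$ are fixed, the auxiliary (Tseitin) variables of $\CNF(G^{WF}_i)$ are determined and $\phi^{\mathcal{WF}}_i$ can always be set so that $\phi^{\mathcal{WF}}_i \leftrightarrow \CNF(G^{WF}_i)$ holds, and since those atoms and auxiliary variables do not occur in any $G^{WF}_j$ with $j<i$, eliminating $x_{n+1}$ turns the innermost biconditional into $\top$, and inductively every block disappears, so $\boxplus_k(\top)=\top$. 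This is the crux of the argument: it is essentially the same property that keeps the biconditional definitions of the base encoding (Theorem~\ref{th:original-translation}, hence Theorem~\ref{thm:phi:wf}) sound in the presence of universal blocks, and I would isolate it as a separate lemma so that the delicate interaction with the universal quantifiers $\boxplus_{k+1},\dots,\boxplus_n$ is handled once and cleanly.

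An alternative route bypasses the QBF surgery altogether. By Proposition~\ref{prop:eq:residual}, and because $\CH^{\prime}(P_{k-1}^{\leq},P_k)$ leaves the interface atoms at least as free as the facts and constraints inserted by $\fix$, incoherence of $G^{WF}_k$ forces the program reached at level $k$ after any sequence of preceding guesses $M_1,\dots,M_{k-1}$ to be incoherent. Unwinding the definition of coherence, the level-$k$ quantifier is then vacuously satisfied when $\Box_k=\forall^{st}$ and unsatisfiable when $\Box_k=\exists^{st}$, so $\Pi$ is coherent iff the truncated game over $P_1,\dots,P_{k-1}$ with constant outcome $\phi_k'$ is won. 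Since the right-hand side of the proposition is a $\Phi^{\mathcal{WF}}$-style encoding of exactly that truncated game, it is true iff the game is won, and Theorem~\ref{thm:phi:wf} then equates its truth value with that of $\Phi^{\mathcal{WF}}(\Pi)$, yielding the stated equivalence.
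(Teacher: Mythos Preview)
The paper does not give a proof of this proposition; it is stated without argument, apparently as an immediate consequence of Proposition~\ref{prop:unsat:level}. Your first route (substitute $\bot$ for $\phi^{\mathcal{WF}}_k$, drop the $k$-th biconditional as a tautology, collapse $\phi_c$ to the constant $\phi_k'$) is exactly the unfolding of that consequence and is what the paper leaves implicit.

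You are right that the only non-routine step is the last one, eliminating the trailing blocks $\boxplus_k,\dots,\boxplus_{n+1}$ together with the surviving biconditionals $\phi^{\mathcal{WF}}_i \leftrightarrow \CNF(G^{WF}_i)$ for $i>k$. Your ``definitional'' argument needs one caveat: taken literally, the paper's description sets $x_i = var(\phi^{\mathcal{WF}}_i \leftrightarrow \CNF(G^{WF}_i))$ and quantifies it universally whenever $\Box_i=\forall^{st}$, which would put $\phi^{\mathcal{WF}}_i$ and the Tseitin auxiliaries under a $\forall$ and make ``setting'' them impossible. Your observation that this is the very same structural property on which the soundness of the base encoding (Theorem~\ref{th:original-translation}) already depends is the correct resolution --- whatever convention places those definitional variables so that Theorem~\ref{th:original-translation} holds is exactly what is needed here --- and isolating it as a lemma, as you propose, is the clean way to handle it.

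Your second, semantic route is also valid and in some ways more robust: it argues directly on the coherence game that incoherence of $G^{WF}_k$ (hence of $P_k\cup\CH'(P_{k-1}^{\leq},P_k)$, hence of every $P_k\cup\fix_{P'_{k-1}}(M_{k-1})$, since fixing an answer set only specializes the choice interface) makes the level-$k$ quantifier vacuous, and then invokes Theorem~\ref{thm:phi:wf} on the truncated program. This bypasses the quantifier bookkeeping entirely and is an approach the paper does not mention.
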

From Proposition~\ref{prop:stop:encoding} it follows that if $k=1$ then $\Phi^{\mathcal{WF}}(\Pi)=\phi_k'$ where $\phi_k' = \top$ if $\Box_i=\forall^{st}$, and $\phi_k' = \bot$ otherwise. So, in such cases we can determine the coherence of the \ASPQ directly in the encoding phase. 
%
\begin{prop}
Given an \ASPQ program $\Pi$, it holds that $|clauses(\Phi^{\mathcal{WF}}(\Pi))|\leq |clauses(\Phi(\Pi))|$
\end{prop}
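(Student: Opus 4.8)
The plan is to reduce the global count to a level-by-level comparison of the CNF blocks and then exploit the fact that each $G^{WF}_i$ is just a ``simplification'' of the corresponding $G_i$. Both $\Phi(\Pi)$ and $\Phi^{\mathcal{WF}}(\Pi)$ are built by the same scheme: a quantifier prefix (contributing no clauses), the conjunction $\bigwedge_{i=1}^{n+1}(\phi_i \leftrightarrow \CNF(\cdot))$, and the matrix $\phi_c$. Since $\phi_c$ is literally the same Boolean combination of fresh variables in both encodings, its clausification contributes the same number of clauses on both sides; so it suffices to show, for each $i$, that the block for $\phi^{\mathcal{WF}}_i \leftrightarrow \CNF(G^{WF}_i)$ produces no more clauses than the block for $\phi_i \leftrightarrow \CNF(G_i)$. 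The clausification of $\phi_i \leftrightarrow \CNF(G_i)$ is obtained from $\CNF(G_i)$ by a fixed transformation (a defining variable per clause, plus the linking clauses) whose clause count is monotone in both the number of clauses and the total literal count of $\CNF(G_i)$; hence the whole statement follows once we establish, for every $i$, that $|clauses(\CNF(G^{WF}_i))| \le |clauses(\CNF(G_i))|$ and that the total literal count does not increase either.

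Next I would isolate two elementary monotonicity properties of the concrete $\CNF$ translation (Clark completion together with the usual acyclicity/ranking clauses): (a) deleting a rule from a program never increases the number of clauses (nor the total literal count) of its $\CNF$; and (b) deleting a body literal from a rule never increases them either — it only shortens the clause $\overline{B_r}\vee H_r$ and removes one ranking literal, and when the body becomes empty it replaces a disjunct of the head's support clause by $\top$, collapsing that part of the completion. Both follow directly from the definition of the completion, rule by rule.

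Then I would compare $G^{WF}_i$ with $G_i$. We have $G_i = P_i \cup \CH(P_{i-1}^{\le},P_i)$ and $G^{WF}_i = R(P_i \cup \CH^{\prime}(P_{i-1}^{\le},P_i))$ (with $P_1$, resp.\ $C$, in the boundary cases $i=1$ and $i=n+1$). I would first check that $|clauses(\CNF(P_i \cup \CH^{\prime}(\cdot)))| \le |clauses(\CNF(P_i \cup \CH(\cdot)))|$ by inspecting the interface atoms: an atom left undecided by the well-founded model contributes the identical choice rule in $\CH$ and $\CH^{\prime}$; an atom true in the well-founded model contributes, in $\CH^{\prime}$, a single fact in place of the two-rule expansion $a \leftarrow \sim na$, $na \leftarrow \sim a$ used by $\CH$, which yields at most as many clauses (the fresh atom $na$ and the clauses it generates disappear); an atom false in the well-founded model contributes nothing to $\CH^{\prime}$. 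Applying (a) and (b) to the residual operator then gives $|clauses(\CNF(G^{WF}_i))| \le |clauses(\CNF(P_i\cup\CH^{\prime}(\cdot)))| \le |clauses(\CNF(G_i))|$; summing over $i$ and re-adding the common contribution of $\phi_c$ yields the claim.

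The main obstacle is the interface-atom bookkeeping in the last step: an atom $a \in \Int(P_{i-1}^{\le},P_i)$ may already occur in the head of a rule of $P_i$, so replacing its choice rule by a fact (or by nothing) modifies the Clark completion of $a$ in a way that interacts with the other rules defining $a$, and one must verify that the net change in clause count is never positive — essentially that the auxiliary atom $na$ always ``pays for itself''. This is a small finite case analysis (head vs.\ non-head atom; true, false, or open w.r.t.\ the well-founded model), each case routine, but it carries the real content; everything else is the structural reduction plus the monotonicity of the completion.
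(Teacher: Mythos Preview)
Your approach is correct and follows the same line as the paper: both argue level by level that the residual $G^{WF}_i$ is obtained from $G_i$ by deleting rules and shortening bodies, so $\CNF(G^{WF}_i)$ has no more (and no longer) clauses than $\CNF(G_i)$, with equality in the worst case of an empty well-founded model. Your treatment is considerably more detailed than the paper's, which offers only an informal paragraph of justification and does not isolate the interface-atom case analysis you correctly flag as carrying the real content.
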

We observe that, by definition~\ref{def:residual}, residual subprograms are obtained by removing some trivially satisfied rules in every stable model or deleting literals from the rules' body by means of the well-founded operator. 
This results in a smaller CNF both in terms of the number of clauses, since potentially fewer rules are encoded, and also in average clause length, since each rule is transformed into one or more clauses that have fewer literals. 
Moreover, by propagating information from the well-founded model of previous levels, stable models of the following levels are restricted to those that are coherent with previous models, if any. If no models exist, then the resulting QBF formula is pruned at the incoherent level. In the worst case scenario, that is $\mathcal{W}=\emptyset$ for every program, $\CH^{\prime}$ produces the same interface program produced by $\CH$, $G_i = G^{WF}_i$ and so $\Phi^{\mathcal{WF}}(\Pi) = \Phi(\Pi)$.

\section{Direct CNF encodings for \ASPQ{} programs}\label{sec:newenc}

Formulas $\Phi(\Pi)$ and $\Phi^{\mathcal{WF}}(\Pi)$ are not in CNF because of the presence of equivalences for each $i$  and the final formula $\phi_c$ (which is not in CNF either). While this might be seen as a minor issue, the translation of non-CNF formulas into CNF by means of a Tseytin transformation can be a time-consuming procedure that increases the length of the formulas and introduces extra symbols that could slow down QBF solvers. 

A natural question, therefore, is whether it is possible to identify classes of \ASPQ programs such that the resulting QBF formula is in CNF. In the following, we can answer this positively and provide some conditions under which this is possible. 

Given a program $P$, $heads(P)$ denotes the set of atoms that appear in the head of some rules in $P$, $\textit{facts}(P)$ denotes the set of facts in $P$.  
Given an \ASPQ program $\Pi$, $Ext_i = heads(P_i) \cap  \bigcup_{j>i} Int(P_i,P_j)$ denotes the set of atoms defined in $P_i$ that belong to the interface of the following levels.
\begin{definition}
\label{def:trivial:subprogram}
    Let $\Pi$ be an \ASPQ program, a subprogram $P_i$ is \textit{trivial} if the following conditions hold: $(i)$ $\forall \ 1\leq j<i: Int(P_i,P_j) \subseteq \textit{facts}(P_j)$ 
    and $(ii)$ $\as(P_i)\mid_{Ext_i} = 2^{Ext_i}$, 
    where $\as(P_i)\mid_{Ext_i} = \{ S\cap Ext_i \mid S \in \as(P_i)\}$ and $2^{Ext_i}$ denotes the power set of $Ext_i$.
\end{definition}
Let $\Pi$ be an \ASPQ program, $K=\{k \mid P_k \textit{ is a trivial subprogram} \wedge k\leq n\}$, the QBF encoding $\Phi^K(\Pi)$ is defined as follows:
$$\Phi^{K}(\Pi) = \boxplus_1 \cdots \boxplus_{n+1} \left(\bigwedge_{\substack{ i=1\\i\notin K}}^{n+1} (\phi_i \leftrightarrow \CNF(G_i))\right) \wedge \phi^K_c,$$  
$\boxplus_{i} = \exists x_i$ if $\Box_i=\exists^{st}$ or $i=n+1$, and $\boxplus_{i} = \forall x_i$ otherwise, $x_i = var(\phi_i \leftrightarrow \CNF(G_i))$ if $i\notin K$, otherwise $x_i = Ext_i$, and $\phi^K_c$ is 
$\phi^K_c = \phi_{i_1}' \odot_{i_1}(\phi_{i_2}' \odot_{i_2} ( \cdots (\phi_{i_m}' \odot_{i_m} (\phi_{n+1}) ) \cdots) )$
\noindent where $E=\{1,\cdots,n\}\setminus K = \{i_1,i_2,\cdots,i_m\}$, $i_1<i_2<\cdots<i_m$, $\odot_i = \vee$ if $\Box_i=\forall^{st}$, and $\odot_i = \wedge$ otherwise, and $\phi_i' = \neg \phi_i$ if $\Box_i=\forall^{st}$, and $\phi_i' = \phi_i$ otherwise, with $i \in E$.    
\newcommand{\thmtextomittedforall}{
Let $\Pi$ be an \ASPQ program, and $K=\{k \mid P_k \textit{ is a trivial subprogram} \wedge k\leq n\}$, then $\Phi^K(\Pi)$ is satisfiable iff $\Pi$ is coherent.}
\begin{thm}
\label{thm:omitted:forall}
\thmtextomittedforall{}
\end{thm}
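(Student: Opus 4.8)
The plan is to deduce the statement from Theorem~\ref{th:original-translation}: since that theorem says $\Phi(\Pi)$ is true iff $\Pi$ is coherent, it suffices to prove that the closed QBF $\Phi^K(\Pi)$ has the same truth value as $\Phi(\Pi)$. I would do this by a sequence of local, truth-preserving rewritings of $\Phi(\Pi)$, one per index $i\in K$ (processed, say, from the innermost quantifier block outward): each rewriting deletes the conjunct $\phi_i\leftrightarrow\CNF(G_i)$, replaces the $i$-th quantifier block by $\boxplus_i\,Ext_i$, and removes the literal $\phi_i'$ from $\phi_c$. After all indices of $K$ have been processed the result is exactly $\Phi^K(\Pi)$, so everything reduces to the correctness of one such rewriting.

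Fix $i\in K$. The first point is that, in the ``relevant'' plays of the quantifier game for $\Phi(\Pi)$, block $i$ interacts with the rest of the formula only through the atoms of $Ext_i$. Besides $\phi_i$ and the Tseytin auxiliaries of $\CNF(G_i)$ (which occur nowhere else), block $i$ owns the atoms of $\mathcal{B}_{P_i}$ not bound earlier; of these, the ones appearing in some later $\CNF(G_j)$ or choice interface are either atoms of $Ext_i=heads(P_i)\cap\bigcup_{j>i}Int(P_i,P_j)$, or atoms whose value is already fixed — pinned by an earlier block, or forced by the conjunct $\phi_i\leftrightarrow\CNF(G_i)$ once $\phi_i$ is true (an atom of $P_i$ that is not in a head and not shared with an earlier level is false in every answer set of $G_i$). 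Moreover, triviality condition~$(i)$ makes every atom of $P_i$ shared with an earlier level $P_j$ a fact of $P_j$; since a model of $\CNF(G_j)$ satisfies that fact's unit clause, any play setting such an atom false forces $\phi_j$ to the value short-circuiting $\phi_c$ (to $\bot$ at an existential block, to $\top$ at a universal one), which neither player chooses while block $j$ is still decisive. Hence we may assume all incoming interface atoms of $P_i$ are true, and block $i$'s only free, downstream-visible choices are the values assigned to $Ext_i$.

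The core of the rewriting is the lemma: under such an assignment to the blocks preceding $i$, for every $T\subseteq Ext_i$ there is a model of $\CNF(G_i)$ — equivalently an answer set of $G_i=P_i\cup\CH(P_{i-1}^{\leq},P_i)$ — that extends it, makes $\phi_i$ true, and restricts to $T$ on $Ext_i$. Granting this, the rewriting is a short case distinction. If $\Box_i=\exists^{st}$, the sub-formula $\exists(\text{block }i)\big((\phi_i\leftrightarrow\CNF(G_i))\wedge\phi_i\wedge R\big)$, where $R$ is the remainder of $\phi_c$ and depends on block $i$ only through $Ext_i$, is equivalent to $\exists Ext_i\,R$: the conjunction forces block $i$ to encode an answer set of $G_i$ with $\phi_i$ true, and by the lemma every value of $Ext_i$ arises this way. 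If $\Box_i=\forall^{st}$, the sub-formula $\forall(\text{block }i)\big(\neg\phi_i\vee R\big)$ is equivalent to $\forall Ext_i\,R$: the branches not satisfied through $\neg\phi_i$ are exactly those in which block $i$ encodes an answer set of $G_i$, and these again realise all subsets of $Ext_i$. In both cases the outcome is precisely the block $\boxplus_i\,Ext_i$ and the shape of $\phi^K_c$ stipulated in the definition of $\Phi^K(\Pi)$.

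I expect the main obstacle to be the lemma: showing that pinning the ``input'' atoms of $P_i$ to true (through the choice rules of $\CH(P_{i-1}^{\leq},P_i)$) does not shrink the $Ext_i$-projection of the answer sets of $G_i$ below the $2^{Ext_i}$ granted by condition~$(ii)$. Forcing those atoms true could in principle trigger rules of $P_i$, so one must use condition~$(i)$ — that these atoms carry no proper defining rule at their original level — to argue that a witnessing answer set of $P_i$ for a target $T\in 2^{Ext_i}$ always extends to an answer set of $G_i$ compatible with the pinned ``true'' values and with an unchanged $Ext_i$-projection; the boundary case of an atom lying in both $Ext_i$ and an earlier interface (forced by~$(i)$ to be a fact, hence consistently true) is absorbed by the convention, implicit already in $\Phi(\Pi)$, that each variable is bound at its outermost occurrence, so that ``quantify $Ext_i$'' means ``quantify the $Ext_i$ atoms not already bound''. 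The remaining bookkeeping — that no block-$i$ variable outside $Ext_i$ affects the matrix in relevant play — then follows directly from the definitions of $Ext_i$, $heads$, and $\textit{facts}$.
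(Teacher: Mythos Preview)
Your overall plan coincides with the paper's: reduce to Theorem~\ref{th:original-translation} by showing that $\Phi^K(\Pi)$ and $\Phi(\Pi)$ have the same truth value, eliminating for each $k\in K$ the conjunct $\phi_k\leftrightarrow\CNF(G_k)$ and the literal $\phi_k'$ from $\phi_c$. The paper does this tersely (asserting that $\CNF(G_k)$ ``is a tautology'' and hence ``$\phi_k\leftrightarrow\top$ holds''); you are considerably more explicit about the game-theoretic mechanics, and you correctly isolate the load-bearing step as your lemma: once the interface atoms inherited from earlier levels are pinned to their relevant values, every $T\subseteq Ext_i$ must still arise as the $Ext_i$-projection of some answer set of $G_i$.

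The gap is exactly there, and the route you sketch through condition~$(i)$ does not close it. Condition~$(i)$ only guarantees that each incoming interface atom is a \emph{fact at its defining level}; it says nothing about how that atom is used inside $P_i$. Take $\Pi=\exists^{st}P_1\,\forall^{st}P_2:C$ with $P_1=\{a.\}$, $P_2=\{\,\{b;c\}\leftarrow;\ \leftarrow a,b,\sim c\,\}$, and $C=\{\leftarrow b,\sim c\}$. Here $Int(P_2,P_1)=\{a\}\subseteq\textit{facts}(P_1)$, so~$(i)$ holds; and since $a\notin heads(P_2)$ it is false in every $M\in\as(P_2)$, the constraint in $P_2$ never fires, and $\as(P_2)\!\mid_{\{b,c\}}=2^{\{b,c\}}$, so~$(ii)$ holds and $P_2$ is trivial. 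But once $a$ is pinned true---as it must be in every relevant play, being a fact of $P_1$---the constraint in $P_2$ becomes active and no answer set of $G_2$ with $a$ true has $Ext_2$-projection $\{b\}$. Your ``witnessing answer set always extends'' step therefore fails; indeed $\Pi$ is coherent (the configuration $b\wedge\neg c$ is never an answer set of $P_2\cup\fix_{P_1}(M_1)$), while $\Phi^K(\Pi)$, which now quantifies $b,c$ unconstrained at the universal block, is false. The paper's own proof glosses over the same point; the underlying issue seems to be that condition~$(ii)$ in Definition~\ref{def:trivial:subprogram} is stated for $P_i$ in isolation rather than for $P_i$ augmented with the facts guaranteed by condition~$(i)$.
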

For any trivial $P_k$ the formula $CNF(G_k)$ is a tautology, allowing for the simplifications that result in $\Phi^K(\Pi)$.

\begin{prop}
\label{prop:trivial:forall}
Let $\Pi$ be an \ASPQ program, $K=\{k \mid P_k \textit{ is a trivial subprogram} \wedge k\leq n\}$. If for each subprogram $P_i$ such that $\Box_i = \forall^{st}$, it holds that $i \in K$, then $\Phi^K$ is equivalent to the CNF formula with the same quantifiers: $\Phi^K_{CNF} = \boxplus_1 \cdots \boxplus_{n+1} \bigwedge_{j\in J} \CNF(G_j)$, where $J = \{1,\ldots,n+1\}\setminus K$.
\end{prop}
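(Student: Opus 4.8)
The plan is to show that, under the stated hypothesis, the combining formula $\phi^K_c$ degenerates into a plain conjunction of the fresh variables $\phi_i$, so that the matrix of $\Phi^K(\Pi)$ becomes propositionally equivalent to a CNF decorated with unit clauses $\phi_i$, and then to erase those $\phi_i$ by a standard QBF simplification, leaving exactly $\bigwedge_{j\in J}\CNF(G_j)$ under the original prefix. Concretely, I would first unfold $\phi^K_c$. Write $E=\{1,\dots,n\}\setminus K=\{i_1,\dots,i_m\}$, so that $J=\{1,\dots,n+1\}\setminus K=E\cup\{n+1\}$ (note $n+1\notin K$). The hypothesis says every universal level lies in $K$, hence every $i\in E$ has $\Box_i=\exists^{st}$; therefore $\odot_i=\wedge$ and $\phi_i'=\phi_i$ for all such $i$, and $\phi^K_c=\phi_{i_1}\wedge\cdots\wedge\phi_{i_m}\wedge\phi_{n+1}=\bigwedge_{i\in J}\phi_i$. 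Since the big conjunction in $\Phi^K(\Pi)$ ranges exactly over the indices $i\in\{1,\dots,n+1\}$ with $i\notin K$, i.e. over $J$, the matrix of $\Phi^K(\Pi)$ is $M=\bigwedge_{i\in J}(\phi_i\leftrightarrow\CNF(G_i))\wedge\bigwedge_{i\in J}\phi_i$.

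Next I would rewrite $M$ propositionally. For each $i$, $(\phi_i\leftrightarrow\CNF(G_i))\wedge\phi_i\equiv\phi_i\wedge\CNF(G_i)$ (both sides hold iff $\phi_i$ is true and $\CNF(G_i)$ is true), so $M\equiv\bigwedge_{i\in J}\phi_i\wedge\bigwedge_{i\in J}\CNF(G_i)$. Replacing a matrix by a propositionally equivalent one preserves the truth value of the surrounding QBF, hence $\Phi^K(\Pi)\equiv\boxplus_1\cdots\boxplus_{n+1}\big(\bigwedge_{i\in J}\phi_i\wedge\bigwedge_{i\in J}\CNF(G_i)\big)$. Now I would eliminate the auxiliary variables: for each $i\in J$, $\phi_i$ is fresh (it does not occur in any $\CNF(G_j)$), it occurs in the matrix only positively — in fact only in the unit conjunct $\phi_i$ — and, again by the hypothesis, the block $\boxplus_i$ that contains $\phi_i$ is existential ($\Box_i=\exists^{st}$ for $i\in E$, and $\boxplus_{n+1}=\exists x_{n+1}$). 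By soundness of the QBF pure-literal rule for existential variables (the existential player may, without loss, fix $\phi_i$ to the polarity that satisfies all its occurrences, and its quantifier can then be dropped, irrespective of where it sits in the prefix), assigning $\phi_i:=\top$ removes the conjunct $\phi_i$; doing this for every $i\in J$ leaves $\boxplus_1\cdots\boxplus_{n+1}\bigwedge_{j\in J}\CNF(G_j)$, where the universal blocks $\forall\,Ext_k$ ($k\in K$) and all remaining existential blocks are untouched — that is, $\Phi^K_{CNF}$ (keeping the now-unused existential $\phi_i$ inside each $J$-block would change nothing, so the prefix is literally the same).

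The delicate point, and the one I would be most careful about, is the quantifier bookkeeping in the last step: $\phi_i$ is bundled together with $var(\CNF(G_i))$ inside the single block $x_i$, and trivial universal levels $\forall\,Ext_k$ may occur both before and after that block in the prefix, so the elimination cannot be justified by naive quantifier reordering ($\exists$ does not commute inward past $\forall$). The clean way around this is precisely to appeal to soundness of existential pure-literal elimination in QBF — a prefix-position-independent rule — rather than to miniscoping. Everything else is routine: the collapse of $\phi^K_c$ follows from the polarity/operator choices forced by the hypothesis, the rewriting of $M$ is elementary Boolean algebra, and $\Phi^K_{CNF}$ is genuinely in CNF because each $\CNF(G_j)$ is by construction.
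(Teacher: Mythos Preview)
Your argument is correct. The paper states this proposition without proof, so there is no explicit ``paper's approach'' to compare against; your write-up supplies the missing details cleanly. The two nontrivial observations you make---that under the hypothesis $\phi_c^K$ collapses to the plain conjunction $\bigwedge_{i\in J}\phi_i$, and that each fresh, pure, existentially quantified $\phi_i$ can then be eliminated via the QBF pure-literal rule regardless of its position in the prefix---are exactly what is needed, and your caution in the last paragraph about not appealing to miniscoping across universal blocks is well placed.
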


Programs satisfying Proposition~\ref{prop:trivial:forall} have a direct CNF encoding. However, verifying that a program is trivial is hard since  Definition \ref{def:trivial:subprogram} requires a co-NP check. 
There is, however, a very common syntactic class of programs for which this property is trivially satisfied: the \ASPQ programs of the form (\ref{eq:qasp}) where each $P_i$ contains only choice rules. 
An example is the encoding of QBF in \ASPQ proposed by Amendola et al. \shortcite{DBLP:journals/tplp/AmendolaRT19}.
%
In the following, we identify a larger class of \ASPQ programs featuring a direct encoding in CNF, the ones that follow the well-known Guess and Check methodology~\cite{DBLP:journals/amai/EiterG95}. 

\begin{definition}
\label{def:guess:check}
    An ASP program $P$ is \textit{Guess\&Check} if it can be partitioned into two subprograms $G_P$, Guess, $C_P$, Check, where $G_P$ contains only choice rules and $C_P$ is the maximal stratified subprogram possibly with constraints of $P$, such that
    $\{ H_r \mid r \in C_P\} \cap \mathcal{B}_{G_P} = \emptyset$.
\end{definition}
\begin{example}
    Let $P$ be the program $\{
         r_1:\{a;b;c\} \leftarrow, \ 
         r_2:d \leftarrow a,\ 
         r_3:d \leftarrow b,\ 
         r_4:\leftarrow c,d.\}$.
$P$ can be partitioned in $G_P=\{r_1\}$ and $C_P=\{r_2,r_3,r_4\}$.  
\end{example}

Guess\&Check programs feature a modularity property.

\begin{prop}
\label{prop:stable:guess:check}
    Let $P$ be a Guess\&Check program then $M \in \as(P)$ iff there exists $M^{\prime} \in \as(G_P)$ such that $M=M^{\prime}\cup W$ and $W \in \as(C_P \cup fix_{G_P}(M^{\prime}))$.
\end{prop}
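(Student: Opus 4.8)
The plan is to recognize Proposition~\ref{prop:stable:guess:check} as an instance of the well-known Splitting Set Theorem for logic programs (Lifschitz and Turner), the natural tool here since a Guess\&Check partition is, essentially by construction, a splitting of the program.

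First I would fix the candidate splitting set $U := \mathcal{B}_{G_P}$, i.e. all atoms occurring in the Guess part, taking the choice-rule shorthands as already expanded into normal rules so that $U$ also contains the auxiliary atoms $na_i$. Two things then have to be checked. (i) Every rule whose head lies in $U$ belongs to (the expansion of) $G_P$: this holds because $G_P$ consists only of choice rules, whose expansions have heads exactly the atoms of $U$, while by Definition~\ref{def:guess:check} no rule of $C_P$ has a head in $\mathcal{B}_{G_P}$, and the constraints of $C_P$, being shorthands $x \leftarrow B,\ \sim x$ for a fresh $x$, have heads outside $U$ as well. (ii) Every rule whose head lies in $U$ has its entire body over $U$: immediate from the shape of the expanded choice rules. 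Hence $U$ is a splitting set with bottom part $b_U(P) = G_P$ and top part $t_U(P) = C_P$.

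Next I would invoke the Splitting Set Theorem: $M \in \as(P)$ iff $M = X \cup Y$ with $X \in \as(b_U(P)) = \as(G_P)$ and $Y \in \as(e_U(t_U(P), X))$, where $e_U(t_U(P),X)$ is the partial evaluation of $C_P$ by $X$ (delete each rule having a body literal over $U$ that is false w.r.t.\ $X$, and from the surviving rules delete all body literals over $U$). The remaining step is the identity
\[
\as\bigl(C_P \cup \fix_{G_P}(X)\bigr) \;=\; \bigl\{\, Y \cup X \;\bigm|\; Y \in \as\bigl(e_U(C_P, X)\bigr)\,\bigr\},
\]
whose content is that removing from the $C_P$-rules the $U$-literals that $X$ satisfies and dropping the rules whose $U$-literals $X$ falsifies has exactly the same effect on answer sets as retaining those literals and instead pinning the $U$-atoms through the facts $\{a \mid a \in X\}$ and the constraints $\{\leftarrow a \mid a \in U \setminus X\}$, i.e.\ through $\fix_{G_P}(X)$ (here one uses $X \subseteq U$ since $X \in \as(G_P)$, and handles the constraints of $C_P$ by the same shorthand-level bookkeeping). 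Combining the two, $M \in \as(P)$ iff $M = X \cup Y$ with $X \in \as(G_P)$ and $X \cup Y \in \as(C_P \cup \fix_{G_P}(X))$; since $\fix_{G_P}(X)$ contains the fact $a$ for every $a \in X$, any answer set of $C_P \cup \fix_{G_P}(X)$ contains $X$, so setting $M' := X$ and $W := X \cup Y$ gives $M = M' \cup W$ and $W \in \as(C_P \cup \fix_{G_P}(M'))$, which is exactly the forward direction. The converse direction is read off the same chain of equivalences: given $M' \in \as(G_P)$ and $W \in \as(C_P \cup \fix_{G_P}(M'))$, write $W = Y \cup M'$ with $Y \in \as(e_U(C_P,M'))$, so that $M' \cup W = M' \cup Y \in \as(P)$ by the backward part of the splitting theorem.

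The main obstacle I expect is not conceptual but purely the bookkeeping around the language shorthands: one must check that every auxiliary atom introduced by expanding the choice rules of $G_P$ lands in $U$ (so that $b_U(P)$ is exactly the expanded $G_P$ and no rule of $C_P$ leaks into the bottom), and that the constraints — both those already present in $C_P$ and those freshly created inside $\fix_{G_P}(X)$ — are matched correctly against the rule-deletion carried out by $e_U$. Once this is in place, the statement is a direct corollary of the splitting theorem; alternatively, it can be proved self-containedly by a reduct computation, splitting $P^M$ into $(G_P)^M$ (which depends only on $M \cap U$) and $(C_P)^M$ (whose rule heads avoid $U$) and verifying minimality of $M$ on the two parts separately, but this amounts to re-proving the relevant special case of the splitting theorem.
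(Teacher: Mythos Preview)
The paper states Proposition~\ref{prop:stable:guess:check} without proof; it is used only as a lemma inside the proof of Theorem~\ref{thm:moved:equiv}. So there is no ``paper's own proof'' to compare against.

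Your approach via the Splitting Set Theorem is the natural and correct one. A Guess\&Check partition is designed precisely so that $U=\mathcal{B}_{G_P}$ (with the choice-rule auxiliaries included) is a splitting set: the defining condition $\{H_r\mid r\in C_P\}\cap \mathcal{B}_{G_P}=\emptyset$ forces every rule with head in $U$ to come from $G_P$, and the expanded choice rules have bodies entirely over $U$. The identification of $\as(C_P\cup\fix_{G_P}(X))$ with $\{Y\cup X\mid Y\in\as(e_U(C_P,X))\}$ is exactly right, since $\fix_{G_P}(X)$ pins every $U$-atom to its value in $X$, which has the same effect on answer sets as the partial evaluation $e_U$ plus re-adding $X$. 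The bookkeeping you flag (auxiliary $na_i$ atoms landing in $U$, the fresh constraint heads landing outside $U$, and the constraints inside $\fix_{G_P}(X)$ matching the rule deletions of $e_U$) is the only place where one has to be careful, and you have identified it correctly.

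One minor remark: in this paper answer sets are total interpretations over $\mathcal{B}_P\cup\neg\mathcal{B}_P$, not just sets of atoms, so when you write $M=X\cup Y$ and $W=X\cup Y$ you should make explicit whether these unions are over literals or over the positive parts; the argument goes through either way, but the final identities $M=M'\cup W$ and $W\supseteq M'$ read most cleanly if you work with the positive parts and recover the negative literals from totality.
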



\begin{definition}\label{def:aspq:gc}
An \ASPQ program $\Pi$ of the form~(\ref{eq:qasp}) is \textit{Guess\&Check} if $(i)$ universal and existential quantifiers are alternated, and $(ii)$ all $P_i$ with $\Box_i = \forall^{st}$ are Guess\&Check subprograms.
\end{definition}

The following definition provides a rewriting for a universal Guess\&Check subprogram.

\begin{definition}
Given a Guess\&Check program $P_1$, a program $P_2$, and a propositional atom $u$ such that $u \notin (\mathcal{B}_{P_1}\cup \mathcal{B}_{P_2})$, we define

\[
    \tau(u,P_1) = \left
    \{\begin{array}{ll}
         H_r \leftarrow B_r &  r \in C_{P_1} \wedge H_r \neq \emptyset\\
         u \leftarrow B_r &  r \in C_{P_1} \wedge H_r = \emptyset\\
    \end{array}\right.
\]
$$\sigma(u,P_1,P_2) = \tau(u,P_1) \cup \rho(u,P_2)$$
$$\rho(u,P_2) = \{H_r\leftarrow B_r,\sim u \mid r \in P_2\}$$

Given a \textit{Guess\&Check} \ASPQ program $\Pi$, let $i\in [1,\dots, n]$ be such that $\Box_i = \forall^{st}$:
\[
    \Pi^{GC_i} = \left
    \{\begin{array}{ll}
         \Box_1 P_1 \cdots \forall^{st} G_{P_i} :  \sigma(u,P_i,C) &   i=n\\
         \Box_1 P_1 \cdots \forall^{st} G_{P_i} \exists^{st} \sigma(u,P_i,P_{i+1}) :  \rho(u,C)  &  i=n-1\\
         \Box_1 P_1 \cdots \forall^{st} G_{P_i} \exists^{st} \sigma(u,P_i,P_{i+1}) \forall^{st} P_{i+2} \cup \{\leftarrow u\}\cdots \Box_n P_n :  C &  otherwise\\
    \end{array}\right.
\]
\end{definition}
\newcommand{\thmtextmovedequiv}{
Let $\Pi$ be a Guess\&Check \ASPQ program, for each
$i\in [1,\dots, n]$ such that $\Box_i = \forall^{st}$, $\Pi$ is coherent iff $\Pi^{GC_i}$ is coherent.}

\begin{thm}
\label{thm:moved:equiv}
\thmtextmovedequiv{}
\end{thm}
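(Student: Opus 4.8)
The plan is to prove both implications by reasoning directly on the inductive (``unwound'') definition of coherence, case-splitting on the position of $i$ exactly as in the definition of $\Pi^{GC_i}$, and reducing each case to one local statement about the rewriting $\sigma(u,P_i,\cdot)$. Throughout, the prefix $\Box_1 P_1\cdots \Box_{i-1}P_{i-1}$ is identical in $\Pi$ and $\Pi^{GC_i}$, so it suffices to argue at the point where $P_i'$ (resp.\ $G_{P_i}'$) is evaluated, with an arbitrary but fixed context $F$ of facts and constraints inherited from the earlier levels; the alternation of $\Pi$ guarantees $P_{i+1}$ is existential and $P_{i+2}$ universal, which is what makes the rewritten prefix well-formed.

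First I would isolate a lemma about the check. Since $P_i$ is a Guess\&Check subprogram, $C_{P_i}$ is stratified with constraints, so $C_{P_i}\cup F$ has at most one answer set; and $\tau(u,P_i)$ --- which is $C_{P_i}$ with every constraint $\leftarrow B_r$ replaced by $u\leftarrow B_r$ for the fresh $u$ --- is a normal stratified program, hence $\tau(u,P_i)\cup F$ has a unique answer set $W^\tau$ with: (i) $u\in W^\tau$ iff $\as(C_{P_i}\cup F)=\emptyset$ (``the check fails''); (ii) if $u\notin W^\tau$ then $W^\tau$ equals the unique answer set of $C_{P_i}\cup F$. Using Proposition~\ref{prop:stable:guess:check} I would then rewrite ``for all $M_i\in\as(P_i')$'' as ``for all $M^G\in\as(G_{P_i}')$, for all $W\in\as(C_{P_i}\cup\fix_{G_{P_i}'}(M^G))$'', where the inner range has size at most one, and when it is empty (check fails) that branch is vacuously coherent.

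The heart of the argument is to describe the answer sets of $\sigma(u,P_i,Q)\cup F=\tau(u,P_i)\cup\rho(u,Q)\cup F$, where $Q=P_{i+1}$ (or $Q=C$ when $i=n$). Splitting on the truth of $u$: if the check fails then $u$ is true, all rules of $\rho(u,Q)$ are switched off, and $\sigma(u,P_i,Q)\cup F$ has the single answer set $W^\tau\ni u$; if the check succeeds then $u$ is false, $\rho(u,Q)$ reduces to $Q$, and the answer sets of $\sigma(u,P_i,Q)\cup F$ are exactly the sets $W\cup N$ with $N$ an answer set of $Q$ placed on top of $W$, in one-to-one correspondence with $\as(Q\cup\fix(M^G\cup W))$ as used in the original unwinding. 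I expect this to be the main obstacle: it amounts to showing that the head atoms of $C_{P_i}$ form a splitting set of $\sigma(u,P_i,Q)$ --- i.e.\ that $Q$ cannot re-derive a check atom not derived by $C_{P_i}$ --- which is where the Guess\&Check structure (heads of $C_{P_i}$ disjoint from the guess, plus the interface discipline between consecutive levels) must be invoked. It is also what dictates pushing $\{\leftarrow u\}$ onto $P_{i+2}$ (resp.\ using $\rho(u,C)$ for the final check), so that a failed check makes the entire remaining quantification vacuously satisfied, mirroring the fact that in $\Pi$ such a guess yields no answer set of $P_i$ at all.

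With the lemma in hand I would close the three cases. For $i=n$: $\forall^{st}P_n:C$ becomes $\forall^{st}G_{P_n}:\sigma(u,P_n,C)$; a failed-check guess still yields a coherent $\sigma(u,P_n,C)\cup\fix(M^G)$ (the $\tau$-part alone has an answer set and $\rho(u,C)$ is off), matching the vacuous truth of that branch, while a successful-check guess yields $\sigma(u,P_n,C)\cup\fix(M^G)$ coherent iff $C\cup\fix(M^G\cup W)$ is. For $i=n-1$: the layer $\exists^{st}\sigma(u,P_{n-1},P_n)$ together with the rewritten check $\rho(u,C)$ realises ``there exists $S$ with $\rho(u,C)\cup\fix(S)$ coherent'': if the check fails, $S=W^\tau$ is a witness since $\fix(S)$ forces $u$ true and $\rho(u,C)$ collapses to the fixed facts; if it succeeds, the condition becomes the original inner $\exists$ over $\as(P_n\cup\fix(M^G\cup W))$. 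The ``otherwise'' case is analogous, with $\{\leftarrow u\}$ in $P_{i+2}$ playing the role of the collapsed check: $u$ true makes $P_{i+2}\cup\{\leftarrow u\}\cup\fix(S)$ incoherent, hence the following $\forall^{st}$ vacuously satisfied, and $u$ false makes $\{\leftarrow u\}$ redundant while $\fix(S)=\fix(W\cup N)$ passes downward exactly what $\fix_{P_{i+1}'}(M_{i+1})$ would. Chaining these equivalences through the (unchanged) remaining quantifiers $\Box_{i+3}P_{i+3}\cdots\Box_nP_n:C$ yields coherence of $\Pi$ iff coherence of $\Pi^{GC_i}$; since $\Pi^{GC_i}$ is again of the quantified form, the rewriting can subsequently be iterated over all universal levels.
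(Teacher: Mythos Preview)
Your proposal is correct and follows essentially the same route as the paper's proof: both use Proposition~\ref{prop:stable:guess:check} to decompose each answer set of $P_i$ into a guess part $M_G$ and a check part $M_C$, split on whether the check succeeds ($u$ false) or fails ($u$ true), and then handle the three positional cases $i=n$, $i=n-1$, $i<n-1$ separately. The only organisational difference is that you front-load a lemma describing $\as(\sigma(u,P_i,Q)\cup F)$ and argue an equivalence at the level of $P_i$, whereas the paper proves the two implications (incoherent $\Rightarrow$ incoherent, coherent $\Rightarrow$ coherent) separately with the case analysis repeated inline; your explicit identification of the splitting-set condition on the heads of $C_{P_i}$ is something the paper uses tacitly.
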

This theorem holds because in $\Pi^{GC_i}$ the answer sets of the replaced subprograms are preserved with respect to those in $\Pi$. Interpretations that violate constraints become additional answer sets, that are either invalidated in the next universal subprogram or do not affect the coherence of $\Pi$.

We now define a recursive transformation that, given a Guess\&Check \ASPQ program $\Pi$, builds a sequence of \ASPQ programs ($\Pi_1,\dots, \Pi_n$) such that the last program of that sequence is both equivalent to $\Pi$ and features an encoding in CNF.

\begin{definition}
Let $\Pi$ be a Guess\&Check \ASPQ program, then
\[
\Pi_i = \left\{\begin{array}{ll}
 \Pi & i=1 \wedge \Box_i = \exists^{st} \\
 \Pi^{GC_1} & i=1 \wedge \Box_i = \forall^{st} \\
 \Pi_{i-1} & i\in [2..n] \wedge \Box_i = \exists^{st}\\
 (\Pi_{i-1})^{GC_i} & i\in [2..n] \wedge \Box_i = \forall^{st}
\end{array}\right.
\]
\end{definition}

\newcommand{\thmtextallGC}{
Let $\Pi$ be a Guess\&Check \ASPQ program, and $K$ be the set of indexes $K=\{k | k\in [1,\dots, n] \wedge \Box_k=\forall^{st}\}$ (i.e., s.t. $P_k$ a universally quantified subprogram), then $\Pi$ is coherent iff $\Phi^K_{CNF}(\Pi_n)$ is satisfied.}

\begin{thm}\label{thm:allGC}
\thmtextallGC{}
\end{thm}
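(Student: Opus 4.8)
The plan is to derive the statement by composing three facts already available: the rewriting chain $\Pi \to \Pi_1 \to \cdots \to \Pi_n$ preserves coherence step by step (through Theorem~\ref{thm:moved:equiv}); every universally quantified subprogram of the final program $\Pi_n$ is trivial; and for programs of that kind the QBF encoding simplifies to a genuine CNF formula (through Theorem~\ref{thm:omitted:forall} and Proposition~\ref{prop:trivial:forall}). Concretely, I would prove the chain of equivalences $\Pi$ coherent $\iff$ $\Pi_n$ coherent $\iff$ $\Phi^K(\Pi_n)$ satisfiable $\iff$ $\Phi^K_{CNF}(\Pi_n)$ satisfiable, noting that, since the rewriting preserves the quantifier prefix, the index set $K$ of $\forall^{st}$-levels is literally the same for $\Pi$ and for $\Pi_n$.

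The core of the argument is an induction on $i \in [1..n]$ establishing the following invariant for $\Pi_i$: it is a Guess\&Check \ASPQ program (Definition~\ref{def:aspq:gc}) whose quantifier prefix has the same length and the same alternation as that of $\Pi$; the subprogram at every universal level $k\le i$ of $\Pi_i$ consists only of choice rules; every universal level $k>i$ of $\Pi_i$ coincides with $P_k$ up to the addition of constraints over fresh atoms, hence is still Guess\&Check with the same guess part; and $\Pi_i$ is coherent iff $\Pi$ is coherent. The base case $i=1$ is immediate: either $\Box_1=\exists^{st}$ and $\Pi_1=\Pi$, or $\Box_1=\forall^{st}$, $\Pi_1=\Pi^{GC_1}$, level $1$ becomes the pure-choice program $G_{P_1}$, and Theorem~\ref{thm:moved:equiv} gives the coherence equivalence. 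For the inductive step, the case $\Box_i=\exists^{st}$ is trivial because $\Pi_i=\Pi_{i-1}$; for $\Box_i=\forall^{st}$ one first checks that $GC_i$ is applicable to $\Pi_{i-1}$ (its level-$i$ subprogram is $P_i$ possibly augmented by a constraint over a fresh atom, hence still Guess\&Check with guess part $G_{P_i}$), and then inspects the three cases of $\Pi^{GC_i}$: in each of them level $i$ is replaced by the pure-choice program $G_{P_i}$, the check part $\tau(u,\cdot)$ of that level is relocated into the following existential level (or, for $i=n$, into the final check program) as part of $\sigma(u,\cdot,\cdot)$, and the only further change is the addition of the constraint $\leftarrow u$ to the next universal level, when one exists; so the prefix, the alternation, and the structural properties are preserved, and Theorem~\ref{thm:moved:equiv} applied to the Guess\&Check program $\Pi_{i-1}$ yields the coherence equivalence. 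A key observation used throughout is that $GC_j$ modifies only levels of index $\ge j$, so once level $i$ has been turned into $G_{P_i}$ it is never touched again.

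Instantiating the invariant at $i=n$ shows that $\Pi_n$ is Guess\&Check, has the same quantifier prefix as $\Pi$ (so its universal levels are exactly the indices in $K$), and that every subprogram $P_k$ with $k\in K$ consists only of choice rules. Since a program of the form $\{\{a\}\mid a\in A\}$ has $\as$ equal to the full power set $2^A$, we get $\as(P_k)\mid_{Ext_k}=2^{Ext_k}$, so condition~(ii) of Definition~\ref{def:trivial:subprogram} holds; condition~(i) holds as well, since a pure-choice subprogram has only empty bodies, so the only atoms it shares with earlier levels are the atoms it guesses, which are accounted for by the choice interface. Hence each such $P_k$ is trivial and $\CNF(G_k)$ is a tautology; Theorem~\ref{thm:omitted:forall} then gives that $\Phi^K(\Pi_n)$ is satisfiable iff $\Pi_n$ is coherent, and Proposition~\ref{prop:trivial:forall} — whose hypothesis, that every universally quantified subprogram is trivial, is exactly what we have shown — gives $\Phi^K(\Pi_n)\equiv\Phi^K_{CNF}(\Pi_n)$. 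Composing these equivalences with the coherence equivalence between $\Pi$ and $\Pi_n$ from the induction completes the proof.

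I expect the main obstacle to be the structural bookkeeping in the induction: one must verify that the $\tau/\sigma/\rho$ rewriting, together with the propagation of the auxiliary atoms $u$ and their associated constraints, never breaks the Guess\&Check \ASPQ structure — in particular that relocating the check part into the following existential level preserves stratification of the check parts and leaves the guess parts purely choice-based, and that this reorganization does not alter the quantifier alternation — so that Theorem~\ref{thm:moved:equiv} may be re-applied at each step. A secondary point of care is confirming condition~(i) of triviality for the pure-choice universal subprograms of $\Pi_n$, i.e.\ that the atoms flowing into them from earlier levels are compatible with the direct $\forall\,Ext_k$ quantification used by $\Phi^K_{CNF}$.
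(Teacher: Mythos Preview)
Your proposal is correct and follows essentially the same route as the paper's proof: compose Theorem~\ref{thm:moved:equiv} (iterated along the chain $\Pi=\Pi_0,\Pi_1,\ldots,\Pi_n$) with Theorem~\ref{thm:omitted:forall} and Proposition~\ref{prop:trivial:forall}. You are more explicit than the paper about the inductive invariant needed to keep re-applying Theorem~\ref{thm:moved:equiv} (namely that each $\Pi_{i-1}$ remains a Guess\&Check \ASPQ program with the same quantifier prefix) and about why the universal levels of $\Pi_n$ end up trivial; the paper leaves both points to a one-line ``by definition'' remark.
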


Here all universal subprograms are replaced by programs that contain only choice rules and are therefore trivial. The result then follows from Theorem~\ref{thm:omitted:forall}, Proposition~\ref{prop:trivial:forall}, and Theorem \ref{thm:moved:equiv}.

\section{Implementation and Experiments}\label{sec:experiments}\label{sec:system}
In this section, we describe our implementation and discuss an experimental analysis conducted to $(i)$ demonstrate empirically the efficacy of the techniques described above, $(ii)$ compare \pyqasp with \qasp, and $(iii)$ compare \pyqasp with a recent implementation of the stable unstable semantics~\cite{DBLP:conf/padl/Janhunen22}.

\subsection{Implementation, Benchmarks and Experiment Setup}
The \pyqasp system is an implementation in Python of the transformation techniques described in Sections~\ref{sec:pre:aspq}, \ref{sec:wf}, and \ref{sec:newenc}.
The input \ASPQ{} program is transformed into a QBF formula to be processed later by a QBF solver that supports the QCIR format.
\pyqasp can handle non-propositional inputs, indeed the user can select either gringo~\cite{DBLP:conf/lpnmr/GebserKKS11} or iDLV~\cite{DBLP:journals/tplp/CalimeriDFPZ20} as grounders.
The SAT encoding of ASP subprograms ($CNF(\cdot)$) is produced using ASPTOOLS~\cite{DBLP:conf/ecai/Janhunen04,DBLP:journals/ki/Janhunen18a}.
The computation of the well-founded-based rewriting (see Section~\ref{sec:wf}) uses the  computation of the well-founded model in iDLV. 
\pyqasp is modular, in the sense that the user can choose a QBF solver to use as the back-end.
\pyqasp supports the same back-ends as \qasp, which are based on \textit{DepQBF}, \textit{Quabs}, and \textit{RareQS} QBF solvers.
Moreover, \pyqasp implements an automatic algorithm selection strategy, devised according to the methodology employed in the \texttt{ME-ASP} multi-engine ASP solver proposed by Maratea et al. \shortcite{DBLP:journals/tplp/MarateaPR14}, that selects automatically a suitable back-end for the given input.
A more detailed description of the evaluation process is available in \ref{app:implementation}. The source code is available at \url{https://github.com/MazzottaG/PyQASP.git}.

\myparagraph{\ASPQ Benchmarks.}
We run a suite of benchmarks that has already been used to assess the performance of \ASPQ implementations~\cite{DBLP:conf/lpnmr/AmendolaCRT22}. 
The suite contains encodings in \ASPQ and instances of four problems:
Quantified Boolean Formulas (QBF);
Argumentation Coherence (AC);
Minmax Clique (MMC);
Paracoherent ASP (PAR).
The suite comprises a selection of instances from QBF Lib (\url{https://www.qbflib.org/}), ICCMA 2019 (\url{http://argumentationcompetition.org/2019}), ASP Competitions~\cite{DBLP:journals/jair/GebserMR17}, and PAR instances by Amendola et al. \shortcite{DBLP:journals/ai/AmendolaDFR21}. 
A detailed description of these benchmarks was provided by Amendola et al. \shortcite{DBLP:conf/lpnmr/AmendolaCRT22}.

\myparagraph{Experiment Setup.}
All the experiments of this paper were run on a system with 2.30GHz Intel(R) Xeon(R) Gold 5118 CPU and 512GB of RAM with Ubuntu 20.04.2 LTS (GNU/Linux 5.4.0-137-generic x86\_64).
Execution time and memory were limited to 800 seconds (of CPU time, i.e., user+system) and 12 GB, respectively. Each system was limited to run in a single core. 

\subsection{Impact of the new techniques}

\myparagraph{Compared methods.}
We run three variants of \pyqasp, namely:
\begin{itemize}
    \item \pyqaspv{}{}: basic encoding with gringo as grounder;
    \item \pyqaspv{}{WF}: basic encoding with well-founded simplification (iDLV as grounder);
    \item \pyqaspv{}{WF+GC}: well-founded simplification and direct encoding in CNF (i.e., production of a CNF encoding for \textit{guess\&check} programs).
\end{itemize}

These variants were combined with the following three QBF back-ends:

\begin{itemize}
    \item $\mathit{RQS}$: \textit{qcir-conv.py} (by Klieber - \url{https://www.wklieber.com/ghostq/qcir-converter.html}) transforms QCIR to the GQ format of \textit{RareQS} solver (by Janota \url{http://sat.inesc-id.pt/~mikolas/sw/areqs}), that is called.
    \item $\mathit{DEPS}$: \textit{qcir-conv.py} and \textit{fmla} convert the formula from QCIR to QDIMACS, \textit{bloqqer} (by Biere et al. - \url{http://fmv.jku.at/bloqqer}) simplifies it, then the QBF solver \textit{DepQBF} (by Lonsin - \url{https://lonsing.github.io/depqbf}) is called.
    \item $\mathit{QBS}$: The QBF solver \textit{Quabs} (by Tentrup - \url{https://github.com/ltentrup/quabs}) is called, with no pre-processor.
\end{itemize}

All this amounts to running 9 variants of \pyqasp.
In our naming conventions, the selected back-end is identified by a superscript, and a subscript identifies the optimizations enabled. 
For example, \pyqaspv{DEPS}{} indicates \pyqasp with back-end $\mathit{DEPS}$, and \pyqaspv{DEPS}{WF+GC} indicates \pyqasp with $\mathit{DEPS}$ back-end and all optimizations enabled.

\begin{figure}
     \centering
     \begin{subfigure}[b]{0.49\textwidth}
         \centering 
         \includegraphics[width=\textwidth,trim=80 50 80 85,clip]{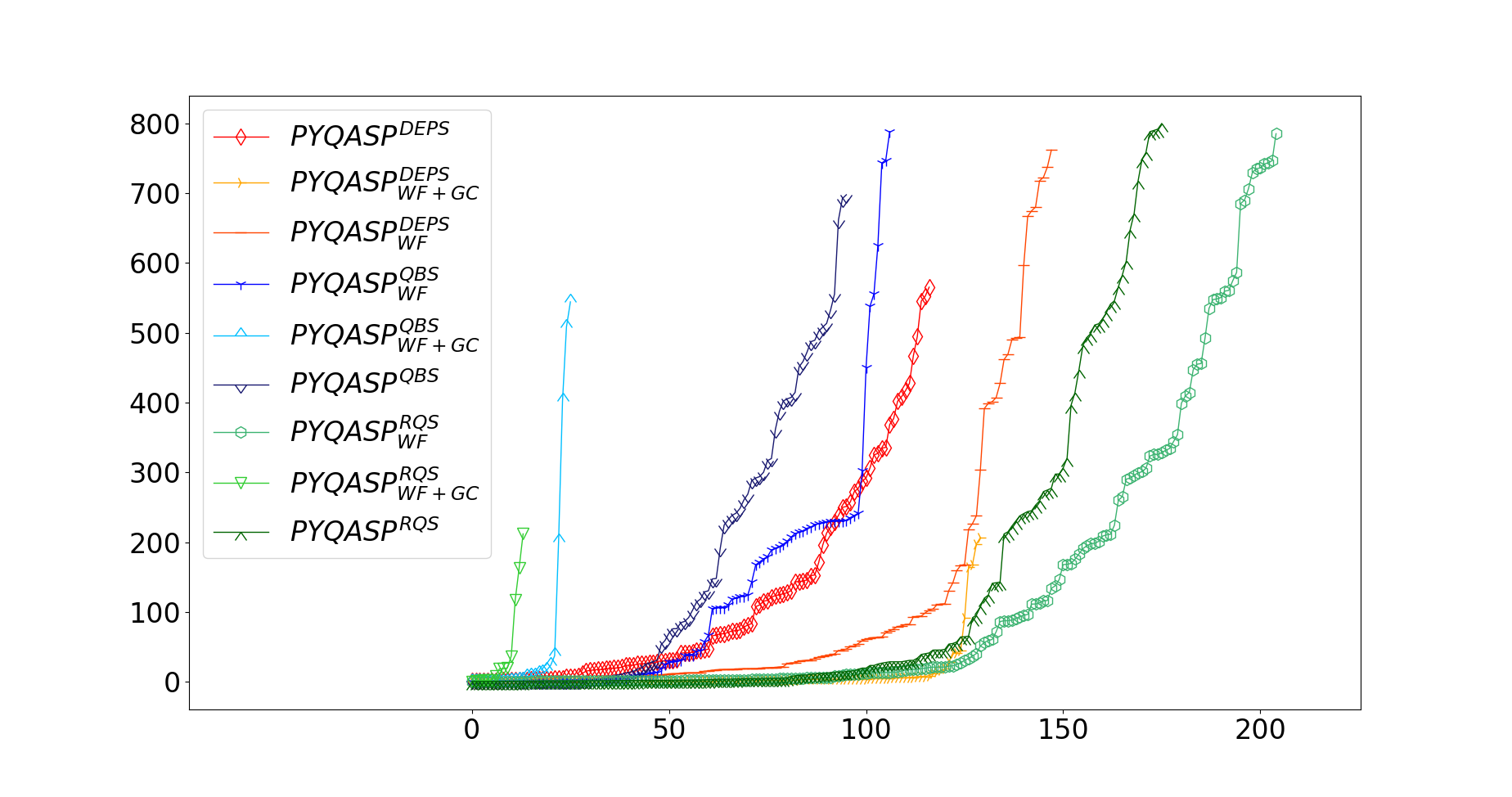}
         \caption{Argumentation Coherence (AC).}
         \label{fig:techniques:ac}
     \end{subfigure}
     \hfill
     \begin{subfigure}[b]{0.49\textwidth}
         \centering
         \includegraphics[width=\textwidth,trim=80 50 80 85,clip]{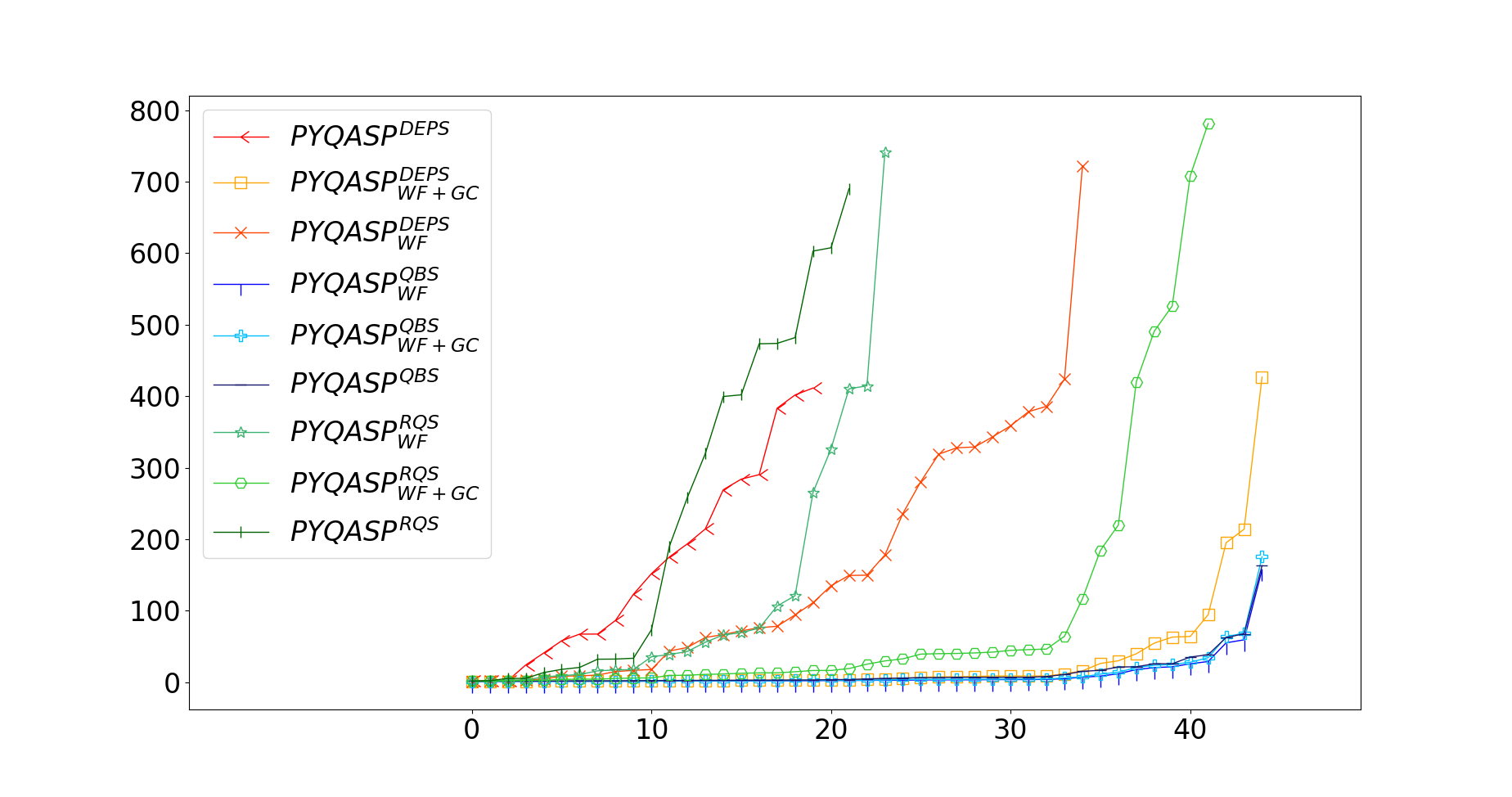}
         \caption{Min Max Clique (MMC).}
         \label{fig:techniques:mmc}
     \end{subfigure}
     \hfill
     \begin{subfigure}[b]{0.49\textwidth}
         \centering
         \includegraphics[width=\textwidth,trim=80 50 80 85,clip]{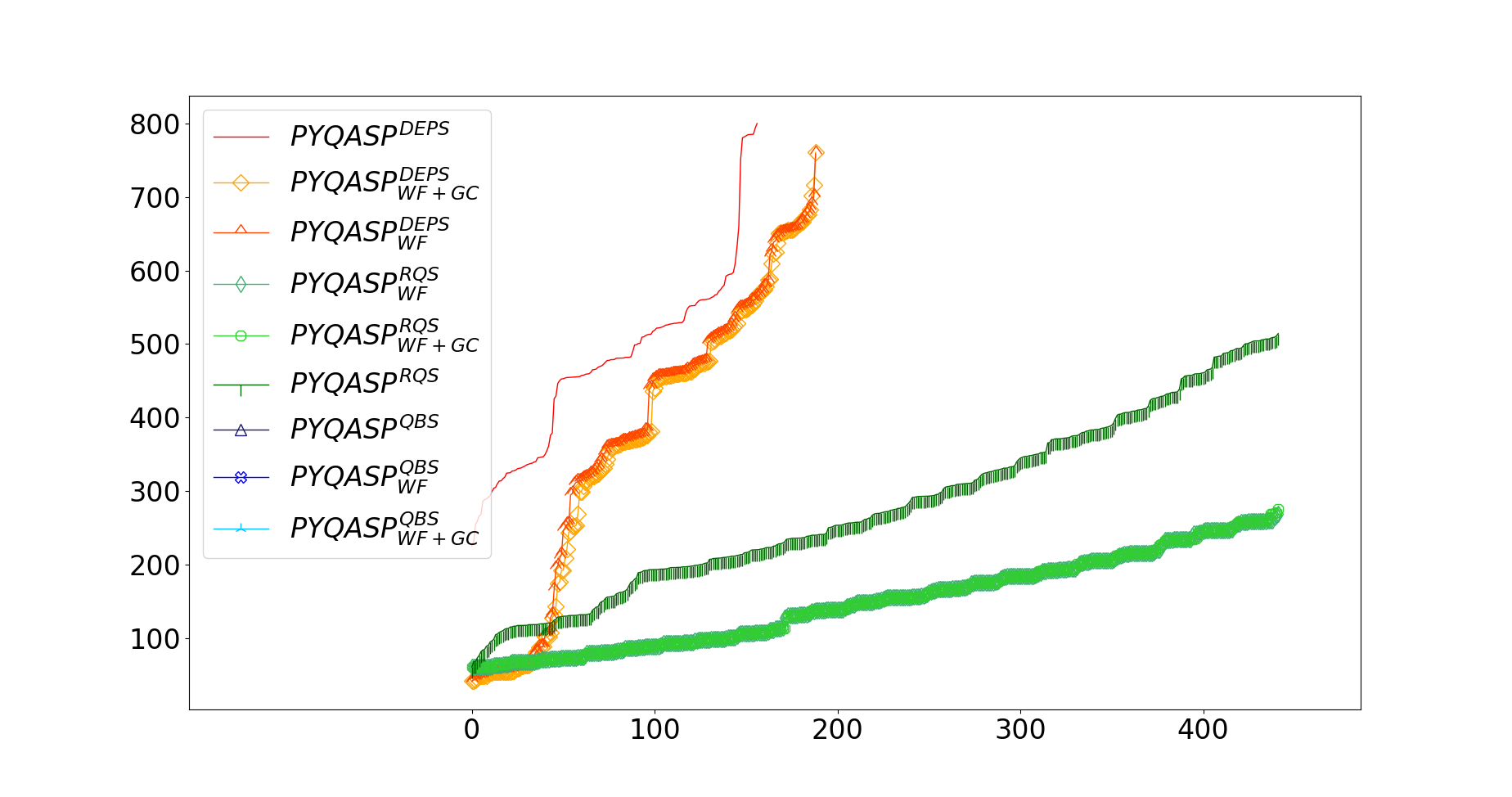}
         \caption{Paracoherent ASP (PAR).}
         \label{fig:techniques:par}
     \end{subfigure}
    \hfill
     \begin{subfigure}[b]{0.49\textwidth}
         \centering
         \includegraphics[width=\textwidth,trim=80 50 80 85,clip]{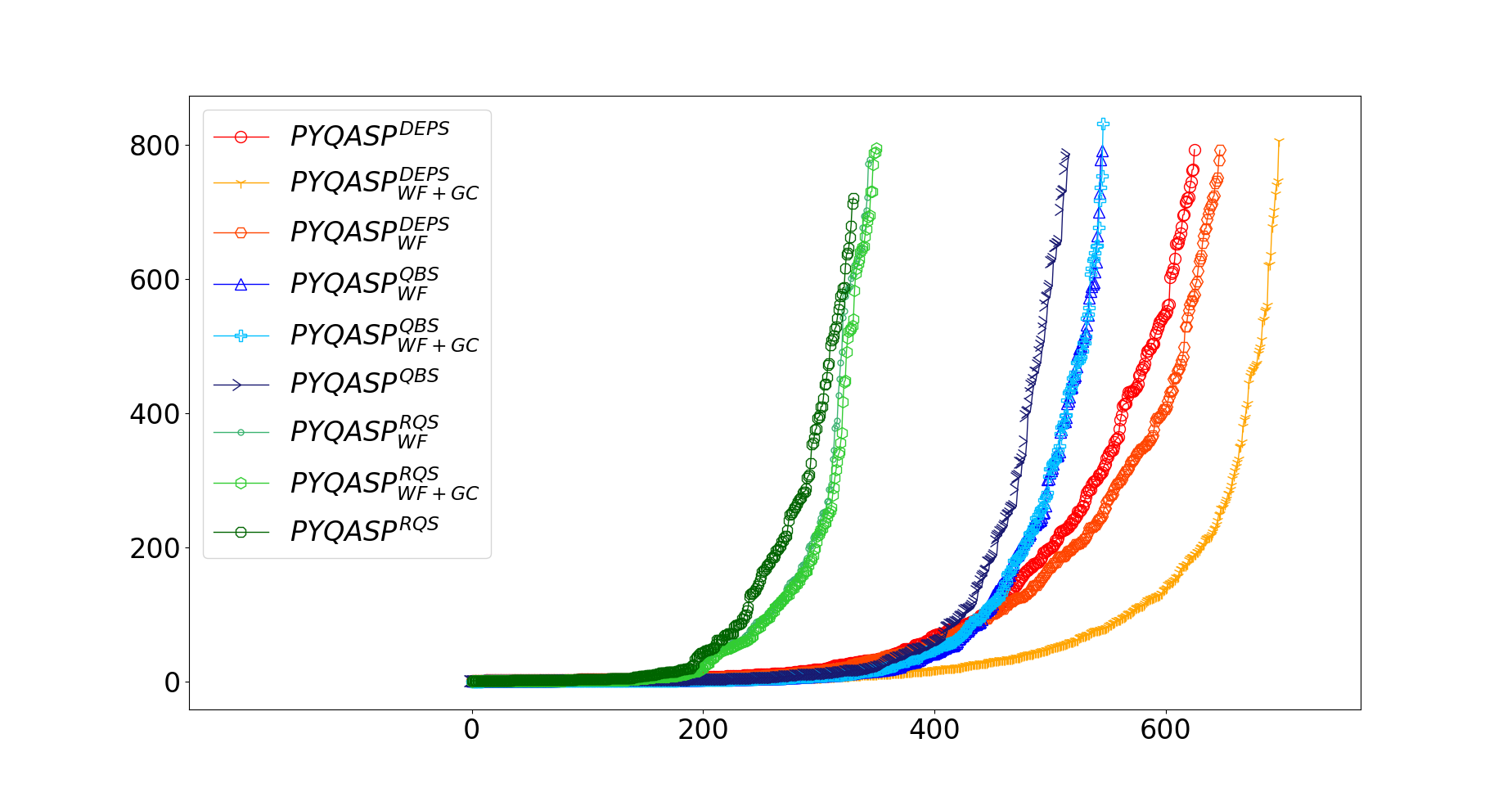}
         \caption{Quantified Boolean Formula (QBF).}
         \label{fig:techniques:qfb}
     \end{subfigure}
        \caption{Analysis of proposed optimizations.}
        \label{fig:techniques}
\end{figure}

\myparagraph{Results.}
Obtained results are summarized in Figure~\ref{fig:techniques}, which aggregates the performance of each compared method in four cactus plots, one per considered problem. Recall that, a line in a cactus plot contains a point $(x,y)$ whenever the corresponding system solves at most $x$ instances in $y$ seconds.

We first observe that the different back-ends are preferable depending on the benchmark domain.
In particular, RQS is the fastest option in AC and PAR (see Figures~\ref{fig:techniques:ac}-\ref{fig:techniques:par}), QBS is the fastest in MMC (see Figure~\ref{fig:techniques:mmc}), and DEPS in QBF (see Figure~\ref{fig:techniques:qfb}).
This behavior confirms the findings of Amendola et al.~\shortcite{DBLP:conf/lpnmr/AmendolaCRT22}. 
%
%

The well-founded optimization allows to solve more instances and in less time in AC, MMC, and QBF benchmarks, independently of the back-end; 
whereas, the identification of \textit{guess\&check} programs pays off in terms of solved instances in QBF and MMC, again independently of the back-end solver. 
The two techniques combine their positive effects in MMC, PAR, and QBF.
In particular,  \pyqaspv{DEPS}{WF+GC} solves 25 more instances than \pyqaspv{DEPS}{} in MC, and 73 in QBF; moreover, \pyqaspv{RQS}{WF+GC} solves 20 more instances than \pyqaspv{RQS}{} in MC, and 20 in QBF. (cfr. Table~\ref{tbl:overall:par2} in \ref{app:exp}.)
%
However, the application of the \textit{guess\&check} optimization has a negative effect on AC, since the well-founded operator, applied to the rewritten program, is no longer able to derive some simplifications that instead can be derived from the original program. 
For this reason, \pyqaspv{RQS}{WF} is the best option in AC, solving 29 instances more than \pyqaspv{RQS}{}.

All in all, the results summarized in Figure~\ref{fig:techniques} confirm the efficacy of both well-founded optimization and identification of \textit{guess\&check} programs.

\subsection{Comparison with \qasp}

\myparagraph{Compared methods.} We compare the best variants of \pyqasp identified in the previous subsection with \qasp running the same back-end QBF solvers. 
As before, the selected back-end is identified by a superscript.
In addition, we run a version of \pyqasp capable of selecting automatically a suitable back-end solver for each instance, denoted by \pyqaspv{AUTO}{}. 
This latter was obtained by applying to \pyqasp the methodology used in the ME-ASP solver~\cite{DBLP:journals/tplp/MarateaPR14} for ASP. 
In particular, we measured some syntactic program features, the ones of ME-ASP augmented with the number of quantifiers, existential (resp. universal) atoms count, and existential (resp. universal) quantifiers to characterize \qasp instances.
Then, we used the \textit{random forest} classification algorithm. We sampled about 25\% of the instances (i.e., 1094 instances uniquely solved) from all benchmark domains, and split in 30\% test set (329 instances) and 70\% training set (765 instances), obtaining: 98\% accuracy, 95\% recall, 97\% of f-measure, which is acceptable. 
As it is customary in the literature, to assess on the field the efficacy of the algorithm selection strategy, we also computed the Virtual Best Solver (VBS). VBS is the \textit{ideal} system one can obtain by always selecting the best solver for each instance.

\myparagraph{Results.}
The results we obtained are reported in the cactus plot of Figure~\ref{fig:comparison:qasp}.
First of all, we note that \pyqasp is faster and solves more instances than \qasp no matter the back-end solver. In particular, \pyqaspv{DEPS}{WF+GC} solves 186 instances more than \qaspv{DEPS}{}, \pyqaspv{RQS}{WF} solves 4 instances more than \qaspv{RQS}{}, and \pyqaspv{QBS}{WF} solves 21 instances more than \qaspv{QBS}{}. 

Diving into the details, we observed that \pyqasp also uses less memory on average than \qasp. Indeed, \qasp used more than 12GB in some instances of PAR and AC, whereas \pyqasp never exceeded the memory limit in these domains. 
This is due to a combination of factors. On the one hand, \pyqasp never caches the entire program in main memory; on the other hand, the formulas built by \pyqasp are smaller than the ones of \qasp and this causes the back-end QBF solver to use less memory and be faster during the search.
(More detailed data on time and memory usage are available in \ref{app:exp}.)

Finally, as one might expect, the best solving method is \pyqaspv{AUTO}{}. 
Comparing \pyqaspv{AUTO}{} with the VBS there is only a small gap (38 instances overall).
In particular, we observe that, in the majority of cases, the selector is able to pick the best method; it sometimes misses a suitable back-end (especially in MMC which is the smallest and less represented domain in the training set). 
As a result, \pyqaspv{AUTO}{} is generally effective in combining the strengths of all the back-end solvers. 
Indeed, \pyqaspv{AUTO}{} solves 363 instances more than \pyqaspv{DEPS}{WF+GC} (i.e., the best variant of \pyqasp with fixed back-end) and 414 instances more than \qaspv{RQS}{} (i.e., the best variant of \qasp).

\begin{figure}
     \centering
     \begin{subfigure}[b]{0.49\textwidth}
         \centering
         \includegraphics[width=\textwidth,trim=60 40 80 80,clip]{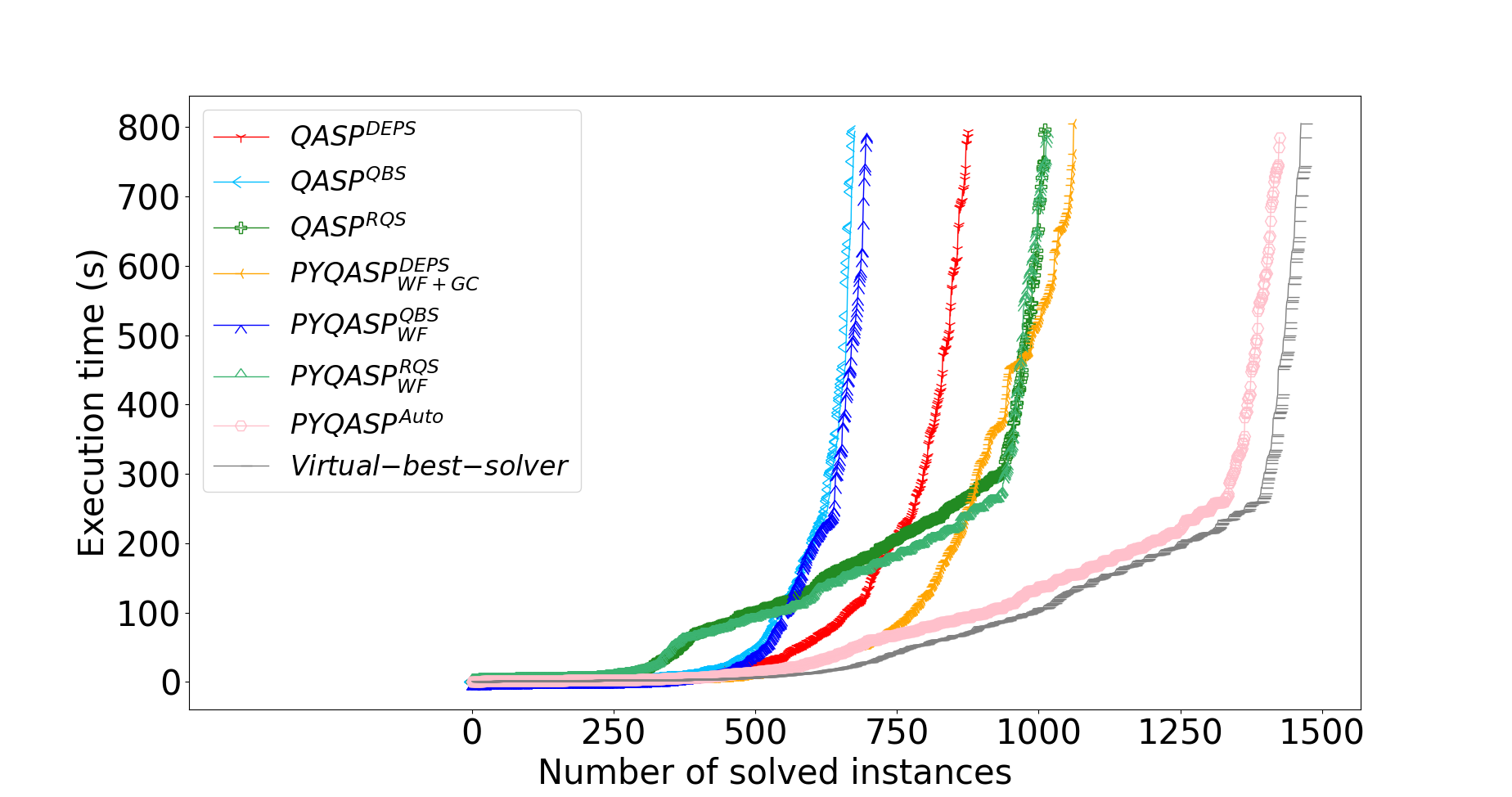}
         \caption{Comparison with \qasp.}
         \label{fig:comparison:qasp}
     \end{subfigure}
     \hfill
     \begin{subfigure}[b]{0.49\textwidth}
         \centering
         \includegraphics[width=\textwidth,trim=60 40 80 70,clip]{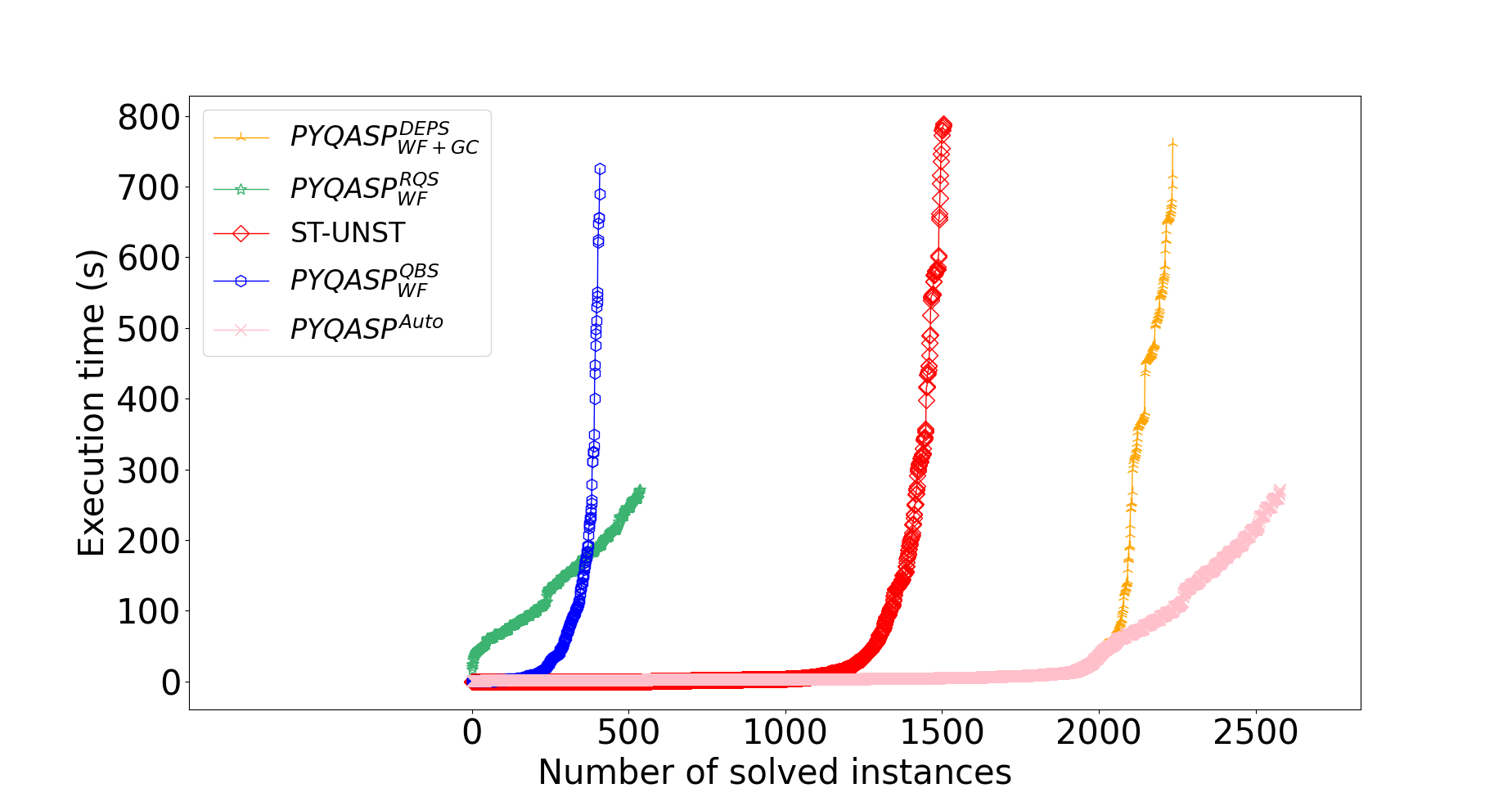}
         \caption{Comparison with \stunst.}
         \label{fig:comparison:stunst}
     \end{subfigure}
        \caption{Comparison with \qasp and \stunst.}
        \label{fig:comparison}
\end{figure}

\subsection{Comparison with Stable-unstable}\label{sec:experiments:stunst}
In this section, we compare \pyqasp with an efficient implementation of the stable-unstable semantics by Janhunen~\shortcite{DBLP:conf/padl/Janhunen22} on common benchmarks.

\myparagraph{Compared methods.} In this comparison, we considered the best fixed back-end variants of \pyqasp, \pyqaspv{AUTO}{}, with Janhunen's solver~\shortcite{DBLP:conf/padl/Janhunen22}, which is labeled \stunst.  

\myparagraph{Benchmarks}. \stunst can solve only problems on the second level of the PH (more on this in Section~\ref{sec:related}).
Thus, to perform a fair comparison, we considered in addition to PAR (the only problem in our suite having suitable complexity), a set of hard 2-QBF instances generated according to the method by Amendola et al.~\shortcite{DBLP:journals/ai/AmendolaRT20}, and the \textit{point of no return} (PONR) benchmark introduced by Janhunen~\shortcite{DBLP:conf/padl/Janhunen22} to assess \stunst.

\myparagraph{Results.} The results are summarized in the cactus plot of Figure~\ref{fig:comparison:stunst}. 
(More details in \ref{app:exp}.)
Analysing the results in each domain, we report that \stunst solves 60 instances of PAR, where the best fixed back-end version of \pyqasp (namely, \pyqaspv{RQS}{WF}) solves 442.
In PONR, \stunst solves 30 instances, where \pyqaspv{RQS}{WF} solves 94.
In QBF, \stunst solves 1416 instances, where \pyqaspv{DEPS}{WF+GC} solves 2048.
Finally, \pyqaspv{AUTO}{} is the best method overall, solving a total of 2578 instances, that is 1072 more instances than \stunst, which solves 1506 overall.
\section{Related Work}\label{sec:related}
The most closely-related work is the one proposed by Amendola et al.~\shortcite{DBLP:conf/lpnmr/AmendolaCRT22} where \qasp, the first implementation of a solver for \ASPQ, was introduced. 
First, we observe that both \pyqasp and \qasp are based on the translation from \ASPQ to QBF introduced by Amendola et al.~\shortcite{DBLP:conf/lpnmr/AmendolaCRT22}), 
both resort to the lp2* tools for converting ASP programs to CNF formulas~\cite{DBLP:conf/ecai/Janhunen04,DBLP:journals/ki/Janhunen18a}, and both can be configured with several back-end QBF solvers.
\qasp is implemented in Java, whereas \pyqasp is implemented in Python, which proved to be a very flexible and handy language to implement the composition of tools that is needed to develop a QBF-based system for \ASPQ. 
\qasp processes the entire \ASPQ program rewriting in main memory, whereas \pyqasp implements a more memory-aware algorithm that keeps at most one subprogram in main memory. 
This implementation choice was empirically demonstrated to overcome the high memory usage limiting the performance of \qasp described by Amendola et al.~\shortcite{DBLP:conf/lpnmr/AmendolaCRT22}.
\qasp uses gringo~\cite{DBLP:conf/lpnmr/GebserKKS11} as grounder, whereas \pyqasp can be configured to use both gringo and iDLV~\cite{DBLP:journals/tplp/CalimeriDFPZ20}, and offers an interface that makes it easier to integrate external QBF solvers.
It is worth pointing out that \pyqasp supports novel rewriting  techniques that result in more efficient encoding in QBF (see Section~\ref{sec:newenc}) that are absent in \qasp.

%
Concerning closely-related formalisms that feature an implementation, we mention the stable-unstable semantics~\cite{DBLP:journals/tplp/BogaertsJT16}, and quantified answer set semantics~\cite{DBLP:journals/tplp/FandinnoLRSS21}.


The stable-unstable semantics was first supported by a proof of concept prototype~\cite{DBLP:journals/tplp/BogaertsJT16}; later, Janhunen~\shortcite{DBLP:conf/padl/Janhunen22} proposed an implementation based on a rewriting to plain ASP. 
The implementation proposed by Janhunen~\shortcite{DBLP:conf/padl/Janhunen22} employed ASPTOOLS for some pre-processing, but the transformations and the solving techniques are different w.r.t. \pyqasp. 
Stable-unstable semantics can be used to model problems in the second level of the PH, thus our implementation can handle problems of higher complexity.
From the usage point of view, we observe that in the system of Janhunen~\shortcite{DBLP:conf/padl/Janhunen22} the user is required to define the interface of modules (by means of ASP programs) and to manually combine the tool chain, whereas in \pyqasp this is done in a more accessible way.
An empirical comparison of \pyqasp with Janhunen's system~\shortcite{DBLP:conf/padl/Janhunen22} is provided in Section~\ref{sec:experiments:stunst}.


The quantified answer set semantics \cite{DBLP:journals/tplp/FandinnoLRSS21} was also implemented by resorting to a translation to QBF~\cite{DBLP:journals/tplp/FandinnoLRSS21}.
However, the difference in the semantics of quantifiers results in a quite  different translation to QBF with respect to \ASPQ. 
Translations from  quantified answer set semantics to \ASPQ and back were proposed by Fandinno et al.~\shortcite{DBLP:journals/tplp/FandinnoLRSS21} but never implemented.

Finally, we refer the reader to the works proposed by Amendola et.al~\shortcite{DBLP:journals/tplp/AmendolaRT19}, and Fandinno et al.~\shortcite{DBLP:journals/tplp/FandinnoLRSS21} for an exhaustive comparison of the \ASPQ language with alternative formalisms and semantics.


\section{Conclusion}\label{sec:conclusion}
An important aspect that can boost the adoption of \ASPQ as a practical tool for developing applications is the availability of more efficient implementations. 
In this paper, we present \pyqasp, a new system for \ASPQ that features both a memory-aware implementation in Python and a new optimized translation of \ASPQ programs in QBF. 
In particular, \pyqasp exploits the well-founded operator to simplify \ASPQ programs and can recognize a (popular) class of \ASPQ programs that can be encoded directly in CNF, and thus do not require to perform any additional normalization to be handled by QBF solvers.
Moreover, \pyqasp is able to select automatically a suitable back-end for the given input, and can deliver steady performance over varying problem instances.
\pyqasp outperforms \qasp, the first implementation of \ASPQ, and pushes forward the state of the art in \ASPQ solving. 

As future work, we plan to further optimize \pyqasp by providing more efficient encodings in QBFs, and improve the algorithm selection model with extended training and a deeper tuning of parameters.


\bibliographystyle{acmtrans}
\bibliography{refs}

\begin{thebibliography}{}

\bibitem[\protect\citeauthoryear{Amendola, Cuteri, Ricca, and
  Truszczynski}{Amendola et~al\mbox{.}}{2022}]{DBLP:conf/lpnmr/AmendolaCRT22}
{\sc Amendola, G.}, {\sc Cuteri, B.}, {\sc Ricca, F.}, {\sc and} {\sc
  Truszczynski, M.} 2022.
\newblock Solving problems in the {PH} with {ASP(Q)}.
\newblock In {\em Proceedings of {LPNMR}}. LNCS, vol. 13416. Springer,
  373--386.

\bibitem[\protect\citeauthoryear{Amendola, Dodaro, Faber, and Ricca}{Amendola
  et~al\mbox{.}}{2021}]{DBLP:journals/ai/AmendolaDFR21}
{\sc Amendola, G.}, {\sc Dodaro, C.}, {\sc Faber, W.}, {\sc and} {\sc Ricca,
  F.} 2021.
\newblock Paracoherent answer set computation.
\newblock {\em Artif. Intell.\/}~{\em 299}, 103519.

\bibitem[\protect\citeauthoryear{Amendola, Ricca, and Truszczynski}{Amendola
  et~al\mbox{.}}{2019}]{DBLP:journals/tplp/AmendolaRT19}
{\sc Amendola, G.}, {\sc Ricca, F.}, {\sc and} {\sc Truszczynski, M.} 2019.
\newblock Beyond {NP:} quantifying over answer sets.
\newblock {\em TPLP\/}~{\em 19,\/}~5-6, 705--721.

\bibitem[\protect\citeauthoryear{Amendola, Ricca, and Truszczynski}{Amendola
  et~al\mbox{.}}{2020}]{DBLP:journals/ai/AmendolaRT20}
{\sc Amendola, G.}, {\sc Ricca, F.}, {\sc and} {\sc Truszczynski, M.} 2020.
\newblock New models for generating hard random boolean formulas and
  disjunctive logic programs.
\newblock {\em Artif. Intell.\/}~{\em 279}.

\bibitem[\protect\citeauthoryear{Bogaerts, Janhunen, and Tasharrofi}{Bogaerts
  et~al\mbox{.}}{2016}]{DBLP:journals/tplp/BogaertsJT16}
{\sc Bogaerts, B.}, {\sc Janhunen, T.}, {\sc and} {\sc Tasharrofi, S.} 2016.
\newblock Stable-unstable semantics: Beyond {NP} with normal logic programs.
\newblock {\em TPLP\/}~{\em 16,\/}~5-6, 570--586.

\bibitem[\protect\citeauthoryear{Brewka, Eiter, and Truszczynski}{Brewka
  et~al\mbox{.}}{2011}]{DBLP:journals/cacm/BrewkaET11}
{\sc Brewka, G.}, {\sc Eiter, T.}, {\sc and} {\sc Truszczynski, M.} 2011.
\newblock Answer set programming at a glance.
\newblock {\em Commun. {ACM}\/}~{\em 54,\/}~12, 92--103.

\bibitem[\protect\citeauthoryear{Calimeri, Dodaro, Fusc{\`{a}}, Perri, and
  Zangari}{Calimeri et~al\mbox{.}}{2020}]{DBLP:journals/tplp/CalimeriDFPZ20}
{\sc Calimeri, F.}, {\sc Dodaro, C.}, {\sc Fusc{\`{a}}, D.}, {\sc Perri, S.},
  {\sc and} {\sc Zangari, J.} 2020.
\newblock Efficiently coupling the {I-DLV} grounder with {ASP} solvers.
\newblock {\em TPLP\/}~{\em 20,\/}~2, 205--224.

\bibitem[\protect\citeauthoryear{Dantsin, Eiter, Gottlob, and Voronkov}{Dantsin
  et~al\mbox{.}}{2001}]{DBLP:journals/csur/DantsinEGV01}
{\sc Dantsin, E.}, {\sc Eiter, T.}, {\sc Gottlob, G.}, {\sc and} {\sc Voronkov,
  A.} 2001.
\newblock Complexity and expressive power of logic programming.
\newblock {\em {ACM} Comput. Surv.\/}~{\em 33,\/}~3, 374--425.

\bibitem[\protect\citeauthoryear{Eiter and Gottlob}{Eiter and
  Gottlob}{1995}]{DBLP:journals/amai/EiterG95}
{\sc Eiter, T.} {\sc and} {\sc Gottlob, G.} 1995.
\newblock On the computational cost of disjunctive logic programming:
  Propositional case.
\newblock {\em Ann. Math. Artif. Intell.\/}~{\em 15,\/}~3-4, 289--323.

\bibitem[\protect\citeauthoryear{Erdem, Gelfond, and Leone}{Erdem
  et~al\mbox{.}}{2016}]{DBLP:journals/aim/ErdemGL16}
{\sc Erdem, E.}, {\sc Gelfond, M.}, {\sc and} {\sc Leone, N.} 2016.
\newblock Applications of answer set programming.
\newblock {\em {AI} Magazine\/}~{\em 37,\/}~3, 53--68.

\bibitem[\protect\citeauthoryear{Faber and Morak}{Faber and
  Morak}{2022}]{DBLP:conf/lpnmr/0001M22}
{\sc Faber, W.} {\sc and} {\sc Morak, M.} 2022.
\newblock Evaluating epistemic logic programs via answer set programming with
  quantifiers.
\newblock In {\em HYDRA/RCRA@LPNMR}. {CEUR} WS, vol. 3281. 78--89.

\bibitem[\protect\citeauthoryear{Faber, Morak, and Chrpa}{Faber
  et~al\mbox{.}}{2022}]{DBLP:conf/padl/FaberMC22}
{\sc Faber, W.}, {\sc Morak, M.}, {\sc and} {\sc Chrpa, L.} 2022.
\newblock Determining action reversibility in {STRIPS} using asp with
  quantifiers.
\newblock In {\em {PADL}}. LNCS, vol. 13165. Springer, 42--56.

\bibitem[\protect\citeauthoryear{Fandinno, Laferri{\`{e}}re, Romero, Schaub,
  and Son}{Fandinno et~al\mbox{.}}{2021}]{DBLP:journals/tplp/FandinnoLRSS21}
{\sc Fandinno, J.}, {\sc Laferri{\`{e}}re, F.}, {\sc Romero, J.}, {\sc Schaub,
  T.}, {\sc and} {\sc Son, T.~C.} 2021.
\newblock Planning with incomplete information in quantified answer set
  programming.
\newblock {\em TPLP\/}~{\em 21,\/}~5, 663--679.

\bibitem[\protect\citeauthoryear{Gebser, Kaminski, K{\"{o}}nig, and
  Schaub}{Gebser et~al\mbox{.}}{2011}]{DBLP:conf/lpnmr/GebserKKS11}
{\sc Gebser, M.}, {\sc Kaminski, R.}, {\sc K{\"{o}}nig, A.}, {\sc and} {\sc
  Schaub, T.} 2011.
\newblock Advances in \emph{gringo} series 3.
\newblock In {\em {LPNMR} 2011. Proceedings}. LNCS, vol. 6645. Springer,
  345--351.

\bibitem[\protect\citeauthoryear{Gebser, Leone, Maratea, Perri, Ricca, and
  Schaub}{Gebser et~al\mbox{.}}{2018}]{DBLP:conf/ijcai/GebserLMPRS18}
{\sc Gebser, M.}, {\sc Leone, N.}, {\sc Maratea, M.}, {\sc Perri, S.}, {\sc
  Ricca, F.}, {\sc and} {\sc Schaub, T.} 2018.
\newblock Evaluation techniques and systems for answer set programming: a
  survey.
\newblock In {\em Proceedings of {IJCAI} 2018}. ijcai.org, 5450--5456.

\bibitem[\protect\citeauthoryear{Gebser, Maratea, and Ricca}{Gebser
  et~al\mbox{.}}{2017}]{DBLP:journals/jair/GebserMR17}
{\sc Gebser, M.}, {\sc Maratea, M.}, {\sc and} {\sc Ricca, F.} 2017.
\newblock The sixth answer set programming competition.
\newblock {\em J. Artif. Intell. Res.\/}~{\em 60}, 41--95.

\bibitem[\protect\citeauthoryear{Gelfond and Lifschitz}{Gelfond and
  Lifschitz}{1991}]{DBLP:journals/ngc/GelfondL91}
{\sc Gelfond, M.} {\sc and} {\sc Lifschitz, V.} 1991.
\newblock Classical negation in logic programs and disjunctive databases.
\newblock {\em New Gener. Comput.\/}~{\em 9,\/}~3/4, 365--386.

\bibitem[\protect\citeauthoryear{Janhunen}{Janhunen}{2004}]{DBLP:conf/ecai/Janhunen04}
{\sc Janhunen, T.} 2004.
\newblock Representing normal programs with clauses.
\newblock In {\em Proceedings of ECAI'2004.}, {R.~L. de~M{\'{a}}ntaras} {and}
  {L.~Saitta}, Eds. {IOS} Press, 358--362.

\bibitem[\protect\citeauthoryear{Janhunen}{Janhunen}{2018}]{DBLP:journals/ki/Janhunen18a}
{\sc Janhunen, T.} 2018.
\newblock Cross-translating answer set programs using the {ASPTOOLS}
  collection.
\newblock {\em K{\"{u}}nstliche Intell.\/}~{\em 32,\/}~2-3, 183--184.

\bibitem[\protect\citeauthoryear{Janhunen}{Janhunen}{2022}]{DBLP:conf/padl/Janhunen22}
{\sc Janhunen, T.} 2022.
\newblock Implementing stable-unstable semantics with {ASPTOOLS} and clingo.
\newblock In {\em {PADL} 2022, Proceedings}. LNCS, vol. 13165. Springer,
  135--153.

\bibitem[\protect\citeauthoryear{Lifschitz}{Lifschitz}{2002}]{Lifschitz02}
{\sc Lifschitz, V.} 2002.
\newblock Answer set programming and plan generation.
\newblock {\em Artif. Intell.\/}~{\em 138,\/}~1-2, 39--54.

\bibitem[\protect\citeauthoryear{Maratea, Pulina, and Ricca}{Maratea
  et~al\mbox{.}}{2014}]{DBLP:journals/tplp/MarateaPR14}
{\sc Maratea, M.}, {\sc Pulina, L.}, {\sc and} {\sc Ricca, F.} 2014.
\newblock A multi-engine approach to answer-set programming.
\newblock {\em TPLP\/}~{\em 14,\/}~6, 841--868.

\bibitem[\protect\citeauthoryear{Pulina and Seidl}{Pulina and
  Seidl}{2019}]{DBLP:journals/ai/PulinaS19}
{\sc Pulina, L.} {\sc and} {\sc Seidl, M.} 2019.
\newblock The 2016 and 2017 {QBF} solvers evaluations (qbfeval'16 and
  qbfeval'17).
\newblock {\em Artif. Intell.\/}~{\em 274}, 224--248.

\bibitem[\protect\citeauthoryear{{Van Gelder}, Ross, and Schlipf}{{Van Gelder}
  et~al\mbox{.}}{1991}]{DBLP:journals/jacm/GelderRS91}
{\sc {Van Gelder}, A.}, {\sc Ross, K.~A.}, {\sc and} {\sc Schlipf, J.~S.} 1991.
\newblock The well-founded semantics for general logic programs.
\newblock {\em J. {ACM}\/}~{\em 38,\/}~3, 620--650.

\end{thebibliography}

\newpage

\appendix

\section{Additional experimental data}\label{app:exp}
This section reports some more data on the Experiments described in Section~\ref{sec:experiments}, reported here to provide a more detailed view on our results for the reviewers.

Table~\ref{tbl:overall:par2} shows the PAR2 score for all the compared systems. Recall that the PAR-2 score of a solver is defined as the sum of all execution times for solved instances and  2 times the timeout for unsolved ones. The lower the score, the better the performance. 

Table~\ref{tbl:overall:memory} shows the average memory usage for all the compared systems. Memory usage (measured in MB) is aggregated for instances solved within the time limit (Complete), instances that exceeded the time limit (Timeout) and over all the instances (Total). The lower the memory usage, the better the performance. 

Table~\ref{tbl:overall:bench} shows, for each system variant, the number of solved instances, timeouts and memory out for each benchmark and also the total number of solved instances overall.

Table~\ref{tab:comparison} reports the comparison with \qasp and \stunst implementation (respectively \ref{tab:comparison:qasp} \ref{tab:comparison:stunst}). We considered the best variants of \pyqaspv{}{} against the other systems, \qaspv{}{} with supported back-end solvers and \stunst. For each of them, the number of solved instances for each benchmark and the overall number of solved instances is reported.
\begin{table}[h!]
 \centering
 {\begin{tabular}{@{\extracolsep{\fill}}lr r r r r r r}
 \hline\hline
 Solver & Par && Arg.Cohe. && Minmax Cli. && QBF\\
    \hline
    
    \qaspv{DEPS}{}        & 771,235.84           && 321,279.33            && 36,427.46            && 546,292.46             \\   
    \pyqaspv{DEPS}{}      & 645,221.78           && 348,019.17            && 43,217.22            && 654,649.81             \\   
    \pyqaspv{DEPS}{WF}    & 589,804.20           && 299,694.65            && 21,414.65            && 619,824.33             \\   
    \pyqaspv{DEPS}{WF+GC} & 588,632.92           && 314,896.34            && 1,322.85             && \textbf{513,758.82}    \\   
    \hline
    \qaspv{QBS}{}         & 822,400.00           && 369,190.24            && 553.76              && 768,291.40             \\   
    \pyqaspv{QBS}{}       & 822,400.00           && 382,814.62            && 587.20              && 795,651.07             \\   
    \pyqaspv{QBS}{WF}     & 822,400.00           && 362,715.16            && \textbf{458.76}     && 749,836.74             \\   
    \pyqaspv{QBS}{WF+GC}  & 822,400.00           && 481,788.14            && 524.75              && 749,905.75              \\  
    \hline
    \qaspv{RQS}{}         & 190,769.32           && 231,845.74            && 38,109.56            && 1,054,759.98            \\   
    \pyqaspv{RQS}{}       & 239,202.25           && 261,084.28            && 41,908.61            && 1,091,812.72            \\   
    \pyqaspv{RQS}{WF}     & 180,695.99           && \textbf{218,876.82}   && 36,427.19            && 1,067,281.23            \\   
    \pyqaspv{RQS}{WF+GC}  & \textbf{180,692.43}  && 499,799.74            && 8,876.10             && 1,059,731.92            \\
       
    \hline\hline
    \end{tabular}}
\caption{PAR-2 score in seconds for each system variants on: (i) Paracoherent ASP (Par. Comp, Par. Rand.); (ii) Argumentation Coherence (Arg. Cohe.); (iii) Minmax Clique (Minmax Cli.); (iv) Quantified Boolean Formula (QBF) }
\label{tbl:overall:par2}
\end{table}
\begin{table}[t!]
 \centering
 {\begin{tabular}{@{\extracolsep{\fill}}lr r r}
 \hline\hline
    Solver & Complete & Timeout & Total\\
    \hline
    \qaspv{DEPS}{}        & 359.06 & 1281.35 & 856.37 \\
    \pyqaspv{DEPS}{}      & 108.41 & 657.65  & 394.71 \\
    \pyqaspv{DEPS}{WF}    & 168.15 & 607.33  & \textbf{374.97} \\
    \pyqaspv{DEPS}{WF+GC} & 170.56 & 789.57  & 445.17 \\
    \hline
    \qaspv{QBS}{}         & 553.15 & 1892.40 & 1701.98 \\
    \pyqaspv{QBS}{}       & 272.02 & 1022.99 & \textbf{874.03}  \\
    \pyqaspv{QBS}{WF}     & 343.78 & 1232.27 & 1014.53 \\
    \pyqaspv{QBS}{WF+GC}  & 379.94 & 1304.55 & 1077.01 \\
    \hline
    \qaspv{RQS}{}         & 356.44 & 1409.87 & 847.78 \\
    \pyqaspv{RQS}{}       & 103.73 & 701.44  & 398.42 \\
    \pyqaspv{RQS}{WF}     & 152.70 & 634.67  & \textbf{380.00} \\
    \pyqaspv{RQS}{WF+GC}  & 177.27 & 632.06  & 432.74  \\
    \hline\hline
    \end{tabular}}
\caption{Average memory consumption in megabyte for each system variant}
\label{tbl:overall:memory}
\end{table}
\begin{table}[b]
\caption{Comparison of system variants on Paracoherent ASP (PAR), Argumentation Coherence (AC), Minmax Clique (MMC), QBF and overall (TOTAL). }
\label{tbl:overall:bench}
 \centering \vspace{0.2cm}
 { \scriptsize \begin{tabular}{@{\extracolsep{\fill}}lccc c ccc c ccc c ccc c c}
 \multicolumn{1}{l}{\multirow{2}{*}{Solver}} & \multicolumn{3}{c}{PAR}&& \multicolumn{3}{c}{AC} && \multicolumn{3}{c}{MMC} && \multicolumn{3}{c}{QBF} && \multicolumn{1}{c}{TOTAL}  \\ 
    \cline{2-4} \cline{6-8} \cline{10-12} \cline{14-16} \cline{17-18}
    
     & \multicolumn{1}{c}{\#SO} & \multicolumn{1}{c}{\#MO} & \multicolumn{1}{c}{\#TO} &&\multicolumn{1}{c}{\#SO} & \multicolumn{1}{c}{\#MO} & \multicolumn{1}{c}{\#TO}&&\multicolumn{1}{c}{\#SO} & \multicolumn{1}{c}{\#MO} & \multicolumn{1}{c}{\#TO}&&\multicolumn{1}{c}{\#SO} & \multicolumn{1}{c}{\#MO} & \multicolumn{1}{c}{\#TO} && \multicolumn{1}{c}{\#SO}\\
    \hline
    \qaspv{DEPS}{}         & 37           &  0  &  477 && 133          & 0  & 193  && 25          & 0 & 20 && 682          & 1  & 309 && 877  \\
    \pyqaspv{DEPS}{}       & 157          &  0  &  357 && 117          & 0  & 209  && 20          & 0 & 25 && 626          & 1  & 365 && 920  \\
    \pyqaspv{DEPS}{WF}     & 189          &  0  &  325 && 148          & 0  & 178  && 35          & 0 & 10 && 648          & 1  & 343 && 869  \\
    \pyqaspv{DEPS}{WF+GC}  & 189          &  0  &  325 && 130          & 0  & 196  && \textbf{45} & 0 & 0  && 699          & 1  & 292 && 918  \\
    \hline                                                                                                                                   
    \qaspv{QBS}{}          & 0            &  6  &  508 && 102          & 26 & 198  && \textbf{45} & 0 & 0  && 530          & 20 & 442 && 677  \\
    \pyqaspv{QBS}{}        & 0            &  0  &  514 && 96           & 0  & 230  && \textbf{45} & 0 & 0  && 518          & 19 & 455 && 659  \\
    \pyqaspv{QBS}{WF}      & 0            &  0  &  514 && 107          & 0  & 219  && \textbf{45} & 0 & 0  && 546          & 19 & 427 && 698  \\
    \pyqaspv{QBS}{WF+GC}   & 0            &  0  &  514 && 26           & 0  & 300  && \textbf{45} & 0 & 0  && 547          & 13 & 432 && 618   \\ 
    \hline                                                                                                                                   
    \qaspv{RQS}{}          & \textbf{442} &  0  &  72  && 197          & 0  & 129  && 23          & 0 & 22 && 350          & 1  & 641 && 1012 \\
    \pyqaspv{RQS}{}        & \textbf{442} &  0  &  72  && 176          & 0  & 150  && 22          & 0 & 23 && 331          & 1  & 660 && 971  \\
    \pyqaspv{RQS}{WF}      & \textbf{442} &  0  &  72  && 205          & 0  & 121  && 24          & 0 & 21 && 345          & 1  & 646 && 1016 \\
    \pyqaspv{RQS}{WF+GC}   & \textbf{442} &  0  &  72  && 14           & 0  & 312  && 42          & 0 & 3  && 351          & 1  & 640 && 849   \\
    \hline                                                                                                                                 
    \pyqaspv{Auto}{}       & \textbf{442} &  0  &  72  && \textbf{222} & 0  & 104  && 44          & 0 & 1  && \textbf{718} & 7 & 267  && \textbf{\underline{1426}}\\
    \end{tabular}}
\end{table}

\begin{table}[b] \caption{Comparison with \qasp and \stunst: Solved instances.} \label{tab:comparison}
 \centering \vspace{0.2cm}
 {
     \footnotesize 
    \begin{subtable}[h]{0.47\textwidth}
    \begin{tabular}{@{\extracolsep{\fill}}l c c c c c c}

     	Solver & PAR & AC & MMC & QBF & TOTAL  \\
        \hline
        \qaspv{DEPS}{}         & 37  		   & 133  		   & 25 		  & 682 		  & 877  \\
        \pyqaspv{DEPS}{WF+GC}  & 189 		   & 130  		   & \textbf{45}  & 699 		  & 1063 \\
    	\hline
        \qaspv{QBS}{}          & 0   		   & 102 		   & \textbf{45}  & 530 		  & 677  \\
        \pyqaspv{QBS}{WF}      & 0   		   & 107 		   & \textbf{45}  & 546 		  & 698  \\
        \hline
        \qaspv{RQS}{}          & \textbf{442} & 197 		   & 23 		  & 350 		  & 1012 \\
        \pyqaspv{RQS}{WF}      & \textbf{442} & 205 		   & 24 		  & 345 		  & 1016 \\
        \hline
        \pyqaspv{Auto}{}       & \textbf{442} & \textbf{222}   & 44 		  & \textbf{718}  & \textbf{\underline{1426}} \\
        \hline
        \end{tabular}
    \caption{Comparison with \qasp.}
    \label{tab:comparison:qasp}
    
    \end{subtable}
    \hfill
    \begin{subtable}[h]{0.47\textwidth}
        \begin{tabular}{@{\extracolsep{\fill}}l c c c c c}
     	Solver & PAR & PONR & 2-QBF & TOTAL  \\
        \hline
        \stunst       		   & 60           & 30          & 1416          & 1506 \\
        \hline
        \pyqaspv{DEPS}{WF+GC}  & 189          & 0           & \textbf{2048} & 2237 \\
        \hline
        \pyqaspv{QBS}{WF}      & 0            & 63          & 346           & 409  \\
        \hline
        \pyqaspv{RQS}{WF}      & \textbf{442}          & \textbf{94} & 0             & 536   \\
        \hline
        \pyqaspv{Auto}{}       & \textbf{442} & 88          & \textbf{2048} & \textbf{\underline{2578}} \\
        \hline
        \end{tabular}
        \caption{Comparison with \stunst.}
        \label{tab:comparison:stunst}
    \end{subtable}
}
\end{table}

\section{Implementation Details}\label{app:implementation}
This section reports a more detailed description of the process used by \pyqasp to evaluate an \ASPQ program.

\subsection{Base solver}

\textsc{PyQASP} has been entirely developed in Python and it is made by different modules that we will describe in this section.  
The evaluation of an \ASPQ program is, basically, done in two steps that are encoding and solving. In the encoding phase an \ASPQ program is parsed, identifying ASP programs enclosed under the quantifiers' scope. Then, each ASP subprogram $P_i$ passes through the following pipeline:
\begin{enumerate}
    \item \textbf{Rewriting Module}. This modules is designed to compute syntactical properties of $P_i$, in order to check whether it is a Guess\&Check or trivial subprogram and subsequently apply the appropriate rewriting techniques described in this paper. 
    First of all, $P_i$ is rewritten, taking into account a previous Guess\&Check subprogram $P_j$ with $j<i$, if any exists. Then, if the resulting program is trivial, this module returns atoms defined at the current level with an empty program. Otherwise, if it is a Guess\&Check program and it is universally quantified then $P_i$ is split into $G_{P_i}$ and $C_{P_i}$. Moreover, the module computes the result of the transformation $\tau$, introducing a fresh propositional atom $u_i$, that will be used for rewriting the following levels. As a result, it returns the atoms defined in the guess split of the current subprogram with an empty program. In all the other cases this module returns the current program together with symbols defined at the current level.
    \item \textbf{Well-founded Module}. This module computes the well-founded model together with the residual program by means of \texttt{DLV2} as a back-end system and stores the truth values of literals belonging to the well-founded model.
    \item \textbf{CNF Encoder Module}. This module takes as input the residual program produced by the well-founded module and encodes it into a CNF formula. In particular, if the residual program is incoherent then it is encoded as the empty clause that is equivalent to $\bot$ and then it breaks the pipeline. Otherwise, if the residual program is empty it is encoded as an empty CNF. In all the other cases the residual program is encoded into a CNF by means of ASPTOOLS. 
    \item \textbf{QBF Builder}. This module produces the final QBF formula by associating  the symbols produced by \textbf{Rewriting Module} with the respective quantifiers and joining the CNFs produced by the previous module in the final conjunction.
\end{enumerate}
The last step of the encoding phase is combining previous CNFs into the formula $\phi_c$. As a result, a QBF formula in \texttt{QCIR} format is obtained.
The solving step is mainly performed by the solver module, which is a wrapper module for the various QBF solvers. In order to use a solver, the first step in the wrapper is to convert the QCIR formula into an equivalent formula in the solver's input format. Then, the external QBF solver is executed on the converted formula and the final outcome is computed. In the current implementation we provide the following solver wrappers:
\begin{itemize}
    \item \textbf{\texttt{QuabsWapper}}. It uses the QBF solver \texttt{quabs} and doesn't require any format conversion since  the solver directly accepts QCIR formulas.
    \item \textbf{\texttt{RareqsWrapper}} This wrapper uses the QBF solver \texttt{rareqs} whose input format is \texttt{gq}. The conversion from \texttt{QCIR} to \textit{gp} is implemented by the external module \texttt{qcir-conv} provided by (ref to qcir-conv).
    \item \textbf{\texttt{DepqbfWrapper}} It uses the QBF solver \texttt{depqbf} equipped with the QBF pre-processor \texttt{bloqqer}. This solver takes as input formulas in \textit{QDIMACS} format and so, translation to CNF is required. In particular, if all universally quantified subprograms were Guess\&Check then we know that the produced formula is, indeed, in CNF. So a direct mapping into QDIMACS format exists, just reporting quantifiers and clauses of intermediate CNFs. Otherwise, the external module \texttt{qcir-conv} combined with \texttt{fmla} is used in order to translate the input formula into an equivalent QDIMACS one. Note that this translation could introduce extra symbols and clauses leading to a bigger formula.
\end{itemize} 

\subsection{Automatic selection of the back-end}
The automatic back-end selection has been realized by exploiting machine learning models that have been trained on dataset reporting syntactical properties of benchmarks proposed for \ASPQ. 
For this task we extended our system by adding a module (\textsc{aspstats}) that analyzes ground programs during encoding phase and then, a Random Forest Classifier is used to predict the back-end solver to be used. 
In order to train the employed model we considered a dataset containing instances from all our benchmarks: Argumentation Coherence, Paracoherent ASP, Minmax Clique, Point of No Return, QBF and 2QBF. In particular, for each instance the features reported in Table \ref{tab:training_features} have been computed by using \textsc{aspstats} module. 
As required by the ME-ASP methodology, training set has been constructed by considering only those instances that have been solved exactly by one back-end solver that indeed is the target label, and considered the best oracles available as labels for multinomial classification.
Regarding training phase we used a Random Forest Classifier made of 100 trees that have been trained by using Gini impurity criterion and bootstrap sampling technique.   
\begin{table}[t]
    \centering
    \hfill
    \begin{tabular}{l|l}
        $R$         &Rule count                                        \\
        $A$         &Number of atoms                                   \\
        $(R/A)$     &Ratio between rules count and atoms count         \\
        $(R/A)^2$   &Squared ratio between rules count and atoms count \\
        $(R/A)^3$   &Cube ratio between rules count and atoms count    \\
        $(A/R)$     &Ratio between atoms count and rules count         \\
        $(A/R)^2$   &Squared ratio between atoms count and rules count \\
        $(A/R)^3$   &Cube ratio between atoms count and rules count    \\
        $R1$        &Rule with body of length 1                        \\
        $R2$        &Rule with body of length 2                        \\
        $R3$        &Rule with body of length 3                        \\
        $PR$        &Positive rule count                               \\
        $F$         &Normal facts count                                \\
        $DF$        &Disjunctive facts count                           \\
        $NR$        &Normal rule count                                 \\
        $NC$        &Constraint count                                  \\
        $VF$        &Universal atoms count                             \\
        $VE$        &Existantial atoms count                           \\
        $QF$        &Universial levels count                           \\
        $QE$        &Existantial levels count                          \\
        $QL$        &Quantification levels count                       \\
    \end{tabular}
    \caption{\textsc{aspstats} features}
    \label{tab:training_features}
\end{table}
\section{Examples encoding of \qasp program into qbf formula}
Consider a \ASPQ program $\Pi$ of the form: $\exists P_1 \forall P_2 : C$, where
$$
\begin{array}{lllll}
     
     P_1                    && P_2                            && C                  \\
     \{a;b\}\leftarrow      && c\leftarrow not \ a, \ not \ b && \leftarrow e, \ c  \\
     \leftarrow a,\ not \ b && d\leftarrow a, \ b             && \leftarrow e, \ d  \\
                            && \{e\}\leftarrow                &&                    
\end{array}
$$
The first step of the encoding produces the following programs by adding interface from previous levels:
$$
\begin{array}{lllll}
     G_1                    && G_2                            && G_3                      \\
     \{a;b\}\leftarrow      && c\leftarrow not \ a, \ not \ b && \leftarrow e, \ c  \\
     \leftarrow a,\ not \ b && d\leftarrow a, \ b             && \leftarrow e, \ d  \\
                            && \{e\}\leftarrow                && \{a;b;c;d;e\}\leftarrow  \\
                            && \{a;b\}\leftarrow              &&                    
\end{array}
$$
The resulting CNF encodings are the following:
$$
\begin{array}{lll}
     CNF(G_1): && (b \vee aux_1) \wedge (-b \vee -aux1) \wedge (a \vee aux2) \wedge (-a \vee -aux2) \wedge (-a \vee b) \\
     CNF(G_2): && (a \vee aux_3) \wedge (-a \vee -aux_3) \wedge (b \vee aux_4) \wedge (-b \vee -aux_4) \wedge \\
               && (d \vee -b \vee -a) \wedge (-d \vee b) \wedge (-d \vee a) \wedge \\
               && (c \vee a \vee b) \wedge (-c \vee -b) \wedge (-c \vee -a) \wedge \\ 
               && (e \vee aux_5) \wedge (-e \vee -aux_5)\\
     CNF(G_3): && (a \vee aux_6) \wedge (-a \vee -aux_6) \wedge (b \vee aux_7) \wedge (-b \vee -aux_7) \wedge \\
               && (c \vee aux_8) \wedge (-c \vee -aux_8) \wedge (d \vee aux_9) \wedge (-d \vee -aux_9) \wedge \\
               && (e \vee aux_{10}) \wedge (-e \vee -aux_{10}) \wedge                                               \\
               && (-e \vee -d) \wedge (-e \vee -c)                                                            \\
\end{array}
$$
where $aux_i$ are hidden atoms are fresh propositional variables introduced by translation
The final qbf formula $\Phi(\Pi)$:
$$
\begin{array}{l}
     \exists \ a,\ b,\ aux_1,\ aux_2   \\
     \forall \ c,\ d,\ e, \ aux_3,\ aux_4, \ aux_5 \\
     \exists \ aux_6,\ aux_7, \ aux_8, \ aux_9,\ aux_{10} \\
     ((\phi_1 \leftrightarrow CNF(G_1)) \wedge (\phi_2 \leftrightarrow CNF(G_2)) \wedge (\phi_3 \leftrightarrow CNF(G_3))) \wedge \\
     (\phi_1 \wedge (\phi_2 \vee \phi_3))
\end{array}
$$
\section{Example of Guess\&Check rewriting procedure}
Consider a \ASPQ program $\Pi$ of the form: $\forall P_1 \exists P_2 : C$, where $C$ is empty and
$$
\begin{array}{lll}
     
     P_1                     && P_2                            \\
     \{a(1);a(2)\}\leftarrow && b(1) \leftarrow                \\
     \leftarrow a(1), \ a(2) && b(2) \leftarrow                \\
                             && c(1) \leftarrow b(1)           \\
                             && c(2) \leftarrow b(2)                               
\end{array}
$$
Program $P_1$ is a guess\&check program and so $\Pi$ it can be rewritten as $\forall P_1^{\prime} \exists P_2^{\prime} : C^{\prime}$:
$$
\begin{array}{lllll}
     P_1^{\prime}            && P_2^{\prime}                   && C^{\prime} = \emptyset \\
     \{a(1);a(2)\}\leftarrow && b(1) \leftarrow unsat          &&                        \\
                             && b(2) \leftarrow unsat          &&                        \\
                             && c(1) \leftarrow b(1), \ unsat  &&                        \\
                             && c(2) \leftarrow b(2), \ unsat  &&                        \\
                             && unsat\leftarrow a(1), \ a(2)   &&
\end{array}
$$
The resulting program contains only one universal level, that is a trivial program and so it can be directly encoded in a QBF formula in CNF. However, well-founded optimization can be further applied but this is a corner case in which the combination of well-founded and guess check optimization results in larger programs.
Once we compute the well-founded of P2' (with the interface from previous level) we are unable to derive new knowledge, and all the rules of P2' are kept.
On the other hand, if we only apply the the well-founded simplification to P2 (with the interface from previous level) we derive b(1),b(2),c(1),c(2), and we are able to simplify all the rules.

\section{Proofs}\label{app:proofs}
\setcounterref{thm}{thm:phi:wf}
\addtocounter{thm}{-1}
\begin{thm}
\thmtextphiwf{}
\end{thm}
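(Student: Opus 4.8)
The plan is to reduce the claim to Theorem~\ref{th:original-translation}, which already establishes that the original encoding $\Phi(\Pi)$ is true iff $\Pi$ is coherent. Observe that $\Phi^{\mathcal{WF}}(\Pi)$ and $\Phi(\Pi)$ have the same quantifier prefix and the same connecting formula $\phi_c$ (up to renaming $\phi_i \mapsto \phi^{\mathcal{WF}}_i$), and differ only in that each conjunct $\phi_i \leftrightarrow \CNF(G_i)$ is replaced by $\phi^{\mathcal{WF}}_i \leftrightarrow \CNF(G^{WF}_i)$. So the first step will be to reduce to showing that $\Phi^{\mathcal{WF}}(\Pi)$ and $\Phi(\Pi)$ have the same truth value, and this I would establish level by level, by proving that $G^{WF}_i$ may replace $G_i$ without changing which assignments to the interface atoms are accepted by the corresponding sub-formula.

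Next I would record the answer-set-preserving properties of the construction. By Proposition~\ref{prop:eq:residual}, $\as(R(Q)) = \as(Q)$ for every program $Q$; hence $\as(G^{WF}_1) = \as(P_1) = \as(G_1)$ and, for $i>1$, $\as(G^{WF}_i) = \as(P_i \cup \CH'(P_{i-1}^{\leq},P_i))$. The only genuine difference between $G^{WF}_i$ and $G_i$ is therefore that $\CH'$ is used in place of $\CH$: whereas $\CH(P_{i-1}^{\leq},P_i)$ leaves every atom of $\Int(P_{i-1}^{\leq},P_i)$ free, $\CH'$ turns into a fact each such atom that is true in the well-founded model $\mathcal{W}$ of $P_{i-1}^{\leq}$ and simply omits the choice for each such atom that is false in $\mathcal{W}$. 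From this one reads off that $\as(G^{WF}_i)$ is exactly the set of those answer sets of $G_i$ whose restriction to $\Int(P_{i-1}^{\leq},P_i)$ agrees with $\mathcal{W}$.

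The key step will then be to show that this additional pinning of interface atoms is harmless inside the QBF. The intuition to make rigorous is that in $\Phi(\Pi)$ the interface atoms of level $i$ are shared with the sub-formulas of the earlier levels, where the equivalences $\phi_j \leftrightarrow \CNF(G_j)$ and the shape of $\phi_c$ force them to behave as atoms of an answer set of the respective subprogram; since the well-founded model is a subset of every answer set (Section~\ref{sec:pre:wf}), any assignment to these atoms that is actually witnessed by the earlier levels already agrees with $\mathcal{W}$. Consequently, on every assignment that the prefix can reach and that is relevant to the truth value, the matrices of $\Phi^{\mathcal{WF}}(\Pi)$ and $\Phi(\Pi)$ coincide, so the two QBFs are equivalent; combining this with Theorem~\ref{th:original-translation} gives that $\Phi^{\mathcal{WF}}(\Pi)$ is true iff $\Pi$ is coherent.

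I expect the hard part to be exactly this last step, i.e.\ turning ``assignments the prefix can reach and that are relevant'' into a precise invariant. This amounts to extracting from the correctness argument for $\Phi(\Pi)$ the fact that at level $i$ the interface atoms always carry values consistent with some answer set of the preceding subprograms (hence with $\mathcal{W}$), and checking that for the remaining, ``unreachable'' assignments the change from $\CNF(G_i)$ to $\CNF(G^{WF}_i)$ cannot flip the value of a $\forall$-subformula either. An alternative route I would consider is a direct induction on $n$ mirroring the recursive definition of coherence, applying Proposition~\ref{prop:eq:residual} and the modularity of the choice interface at each step; but that essentially re-proves Theorem~\ref{th:original-translation} for the residualized programs rather than reusing it.
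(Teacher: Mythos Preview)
Your proposal is correct and takes essentially the same approach as the paper's proof: both reduce to Theorem~\ref{th:original-translation} by arguing that replacing $\CH$ with $\CH'$ only discards answer sets of $G_i$ that disagree with the well-founded model of $P_{i-1}^{\leq}$ (hence are incompatible with any answer set reached from previous levels), and then invoke Proposition~\ref{prop:eq:residual} to handle the residual step. The paper's proof is considerably terser and does not spell out the QBF-level invariant you flag as the ``hard part''; it simply asserts that the eliminated models are ``not coherent with models of previous levels'' and that the modified programs ``preserve the coherence of $\Pi$,'' so your outline is in fact more explicit about where the work lies than the paper itself.
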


\begin{proof}
By means of $\CH^{\prime}$ possible models of $P_i$ are reduced to those that are coherent with models from previous levels by fixing the truth value of literals that have been determined by the well-founded operator and so $\as(G^{WF}_i)) \subseteq \as(G_i)$. 
If there exists $M\in \as(G_i)$ such that $M \notin \as(G^{WF}_i)$ then there exists some literal $l \in M$ such that $\sim l$ belongs to the well-founded model of $P_{i-1}$ and so $M$ is not coherent with models of previous levels. 
So, the program $P_i^{\prime}=P_i \cup \CH^{\prime}(G^{WF}_{i-1},P_i)$ preserves the coherence of $\Pi$.
From Proposition~\ref{prop:eq:residual} we know that $\as(P_i^{\prime}) = \as(R(P_i^{\prime}))$ and so from Theorem~\ref{th:original-translation} we can conclude that $\Phi^{\mathcal{WF}}(\Pi)$ is true iff $\Pi$ is coherent.
\end{proof}

\setcounterref{thm}{thm:omitted:forall}
\addtocounter{thm}{-1}
\begin{thm}
\thmtextomittedforall{}
\end{thm}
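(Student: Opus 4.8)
The plan is to derive the statement from Theorem~\ref{th:original-translation}: it suffices to show that $\Phi^K(\Pi)$ and $\Phi(\Pi)$ are equivalent as closed quantified Boolean formulas (both are variable-free, so for them ``satisfiable'' and ``true'' coincide), and then invoke Theorem~\ref{th:original-translation}. Since $\Phi^K(\Pi)$ differs from $\Phi(\Pi)$ only at the levels with index in $K$, I would prove the equivalence by removing the trivial levels one at a time: writing $\Phi^{K'}$ for the analogue of $\Phi^K$ built from a subset $K'\subseteq K$, I would show $\Phi^{K'\cup\{k\}}(\Pi)\equiv\Phi^{K'}(\Pi)$ for every trivial $k\notin K'$, and then iterate.

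The core step is a lemma about a single trivial subprogram $P_k$ that makes the informal slogan ``$\CNF(G_k)$ is a tautology'' precise: after existentially projecting away all variables of $\phi_k\leftrightarrow\CNF(G_k)$ other than the atoms of $Ext_k$, the result is valid over every assignment to $Ext_k$ and to the incoming interface atoms $\Int(P_{k-1}^{\leq},P_k)$. Condition~$(ii)$ of Definition~\ref{def:trivial:subprogram}, $\as(P_k)\mid_{Ext_k}=2^{Ext_k}$, supplies, together with the correctness of $\CNF(\cdot)$, a satisfying assignment of $\CNF(P_k)$ projecting onto any prescribed assignment of $Ext_k$; condition~$(i)$, $\Int(P_k,P_j)\subseteq\textit{facts}(P_j)$ for $j<k$, is what lets this projection property transfer from $\as(P_k)$ to $\as(G_k)=\as(P_k\cup\CH(P_{k-1}^{\leq},P_k))$, so that $\CNF(G_k)$ stays satisfiable for every choice of the incoming interface atoms. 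Hence the gate $\phi_k$ (defined by $\phi_k\leftrightarrow\CNF(G_k)$) can always be driven to $\top$, no matter how the externally visible atoms are fixed.

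Granting the lemma, level $k$ becomes vacuous. If $\Box_k=\forall^{st}$, the level-$k$ contribution to $\phi_c$ is $\neg\phi_k\vee(\text{rest})$: an adversary assignment at level $k$ that does form a model of $\CNF(G_k)$ forces $\phi_k=\top$ and leaves $\text{rest}$, while one that does not makes $\phi_k=\bot$ and satisfies $\neg\phi_k\vee(\text{rest})$ trivially; by condition~$(ii)$ the former assignments realise exactly the $2^{Ext_k}$ valuations of $Ext_k$, so level $k$ collapses to $\forall Ext_k(\text{rest})$. If $\Box_k=\exists^{st}$, the contribution is $\phi_k\wedge(\text{rest})$ and the existential player can always realise any desired valuation of $Ext_k$ by a model of $\CNF(G_k)$, so level $k$ collapses to $\exists Ext_k(\text{rest})$. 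Either way the matrix conjunct $\phi_k\leftrightarrow\CNF(G_k)$ and the occurrence of $\phi_k'$ in $\phi_c$ vanish and the quantifier $\boxplus_k$ ranges only over $Ext_k$ — exactly the shape of $\Phi^K$ at level $k$. To make this sound one also has to check that the later subprograms $G_j$ ($j>k$) mention no level-$k$ variable outside $Ext_k$, which I expect to follow from $Ext_k=heads(P_k)\cap\bigcup_{j>k}\Int(P_k,P_j)$ and from the fact that any other shared atom is already introduced (as a fact, via condition~$(i)$) at an earlier level and hence quantified there.

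The step I expect to be hardest is precisely this interface bookkeeping: rigorously tying the role of condition~$(i)$ to the claim that $\CH(P_{k-1}^{\leq},P_k)$ does not disturb the $Ext_k$-projection of $\as(G_k)$, and keeping the prenex placement of variables consistent when the private variables of a universal level $k$ are discarded while $Ext_k$ is retained. Should this get unwieldy, a more self-contained (though longer) route is to prove directly, by induction on $n$ and following the inductive definition of \ASPQ coherence, that $\Phi^K(\Pi)$ is true iff $\Pi$ is coherent, using triviality of each $P_k$ ($k\in K$) to argue that the corresponding quantifier alternation does not affect coherence.
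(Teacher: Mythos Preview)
Your approach is essentially the paper's: reduce to Theorem~\ref{th:original-translation} by proving $\Phi^K(\Pi)\equiv\Phi(\Pi)$, arguing that at each trivial level $k$ the gate $\phi_k$ can be forced to $\top$ so that the conjunct $\phi_k\leftrightarrow\CNF(G_k)$ and the occurrence of $\phi_k'$ in $\phi_c$ drop out. The paper's own proof does exactly this, only more tersely --- in fact your treatment is more careful than the paper's on two points it glosses over: the game-theoretic case split (a universal adversary at level $k$ may pick a non-model of $\CNF(G_k)$, making $\phi_k=\bot$ and the disjunct $\neg\phi_k\vee(\cdots)$ trivially true) and the role of condition~$(i)$ in ensuring that the interface $\CH(P_{k-1}^{\leq},P_k)$ does not spoil the $Ext_k$-projection of $\as(G_k)$.
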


\begin{proof}
From Theorem \ref{th:original-translation} we know that $\Pi$ is coherent iff $\Phi(\Pi)$ is satisfiable. Now we observe that, by hypothesis, for all $k \in K$ it holds that $P_k$ is trivial, thus $\as(P_k)\mid_{Ext_k} = 2^{Ext_k}$. This implies that for all $k \in K$, $CNF(G_k)$ is satisfiable, and thus  $\phi_k \leftrightarrow \top$ holds in $\Phi(\Pi)$. Since $\phi_k \leftrightarrow  CNF(G_k) \leftrightarrow \top$ holds, then the k-th conjunct is satisfiable and can be omitted obtaining an equivalent formula:
$$\Phi(\Pi) = \boxplus_1 \cdots \boxplus_{n} \left(\bigwedge_{\substack{ i=1\\i\notin K}}^{n+1} (\phi_i \leftrightarrow \CNF(G_i))\right) \wedge \phi_c,$$  
$\boxplus_{i} = \exists x_i$ if $\Box_i=\exists^{st}$, and $\boxplus_{i} = \forall x_i$ otherwise, $x_i = var(\phi_i \leftrightarrow \CNF(G_i))$ if $i\notin K$, otherwise $x_i = Ext_i$. 
Moreover, we observe that for each $k \in K$, $\phi_k \wedge (\phi_{k+1} \odot_{k+1} \cdots)$, can be replaced by $\top \wedge (\phi_{k+1} \odot_{k+1} \cdots)$ which is equivalent to $\phi_{k+1} \odot_{k+1} (\cdots)$ and $\neg \phi_k \vee (\phi_{k+1} \odot_{k+1} \cdots)$, can be replaced by $\bot \vee (\phi_{k+1} \odot_{k+1} \cdots)$ which is equivalent to $(\phi_{k+1} \odot_{k+1} \cdots)$.
So, $\phi_c$ can be simplified by removing $\phi_k' \odot_k$ for each $k \in K$, obtaining an equivalent formula that is $\phi_c^K$.
Thus, by construction, $\Phi(\Pi)$ is equivalent to $\Phi^{K}(\Pi)$ and so $\Phi^K(\Pi)$ is satisfiable iff $\Pi$ is coherent.
\end{proof}

\setcounterref{thm}{thm:moved:equiv}
\addtocounter{thm}{-1}
\begin{thm}
\thmtextmovedequiv{}
\end{thm}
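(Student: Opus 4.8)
The plan is to fix one index $i$ with $\Box_i=\forall^{st}$ and argue locally. In each of the three syntactic forms of $\Pi^{GC_i}$ the subprograms of levels $1,\dots,i-1$ and of all levels beyond the ones touched by the rewriting are identical to those of $\Pi$, so by the inductive definition of \ASPQ coherence it suffices to show, for every program $P_i'=P_i\cup\fix_{P_{i-1}'}(M_{i-1})$ arising from a choice of answer sets $M_1,\dots,M_{i-1}$ for the common prefix, that ``$\Box_i P_i'\ \Box_{i+1}P_{i+1}\cdots:C$'' is coherent exactly when the corresponding rewritten tail of $\Pi^{GC_i}$ is. I will write $\widehat G = G_{P_i}\cup\fix_{P_{i-1}'}(M_{i-1})$ for the guess part of $P_i$ with the incoming interface plugged in.

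The backbone is Proposition~\ref{prop:stable:guess:check}: $\as(P_i')$ consists exactly of the sets $M'\cup W$ where $M'\in\as(\widehat G)$ and $W$ is the answer set of the stratified-with-constraints program $C_{P_i}\cup\fix_{\widehat G}(M')$; such a program is coherent (and then has a unique answer set) unless one of its constraint bodies is satisfied, in which case it is incoherent. I pair this with the observation that $\tau(u,P_i)$ — which keeps the proper rules of $C_{P_i}$ and replaces each constraint $\leftarrow B_r$ by $u\leftarrow B_r$ with $u$ fresh — is itself stratified, so $\tau(u,P_i)\cup\fix_{\widehat G}(M')$ has a unique answer set that agrees with $W$ on $\mathcal{B}_{C_{P_i}}$ and contains $u$ if and only if $C_{P_i}\cup\fix_{\widehat G}(M')$ is incoherent. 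The Guess\&Check condition $\{H_r\mid r\in C_{P_i}\}\cap\mathcal{B}_{G_{P_i}}=\emptyset$ is exactly what makes excising $C_{P_i}$ from level $i$ leave $\as(G_{P_i})$ and the stratification structure intact, so relocating $\tau(u,P_i)$ to the following level is harmless.

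The heart of the proof is a case split over $M'\in\as(\widehat G)$. \emph{If the check passes for $M'$:} then $u$ is false, every $\sim u$ inserted by $\rho(u,\cdot)$ is true so $\rho(u,P_{i+1})$ (resp.\ $\rho(u,C)$) behaves as $P_{i+1}$ (resp.\ $C$), and $\tau(u,P_i)$ reproduces on the forwarded interface atoms exactly the values of $\fix_{P_i'}(M'\cup W)$; hence, level by level, the rewritten tail under guess $M'$ asks the same coherence question as the original tail under the answer set $M=M'\cup W$ of $P_i'$ — using, in the generic form, that the constraint $\leftarrow u$ added at level $i+2$ is vacuous since $u$ is false. \emph{If the check fails for $M'$:} then $\tau(u,P_i)$ forces $u$ true; in the generic and $i=n-1$ forms $\rho(u,\cdot)$ deletes (in the reduct) every rule of $P_{i+1}$ (resp.\ of $C$), so level $i+1$ still has an answer set $S\ni u$ — the perfect model of $\tau(u,P_i)\cup\fix_{\widehat G}(M')$ — from which the remaining condition holds trivially (in the generic form $P_{i+2}\cup\{\leftarrow u\}$ has no answer set, so the next $\forall^{st}$ is vacuously coherent; in the $i=n-1$ form $\rho(u,C)$ collapses to the empty program); in the $i=n$ form $\sigma(u,P_n,C)\cup\fix_{\widehat G}(M')$ itself collapses to the coherent stratified program $\tau(u,P_n)\cup\fix_{\widehat G}(M')$. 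So a check-failing $M'$ always contributes ``coherent'' to the $\forall^{st}$ at level $i$ of $\Pi^{GC_i}$, while in $\Pi$ it is not an answer set of $P_i'$ at all and imposes nothing. Putting the cases together, ``$\forall^{st}\widehat G\ \cdots$'' in $\Pi^{GC_i}$ is coherent iff the tail under $M'\cup W$ is coherent for every check-passing $M'$, which is precisely coherence of ``$\forall^{st}P_i'\ \cdots$''; quantifying back over the shared prefixes gives the theorem.

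I expect the main obstacle to be the interface bookkeeping behind the ``check passes'' case: one must verify carefully that, with $\tau(u,P_i)$ now sitting at level $i+1$, the atoms of $G_{P_i}$ referenced by $C_{P_i}$ arrive through the $\CH$/$\fix$ interface with the same values as when $C_{P_i}$ was part of $P_i$, and that the truth values propagated from level $i+1$ onward agree in $\Pi$ and $\Pi^{GC_i}$. Freshness of $u$, extended to disjointness from $\mathcal{B}_{P_j}$ for every $j$ and from $\mathcal{B}_C$, is what rules out unintended interactions with the untouched levels; the ``vacuous/trivial coherence'' computations used in the check-failing case are routine once this is in place.
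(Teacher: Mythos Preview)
Your argument is essentially the paper's proof, with the same key ingredients: Proposition~\ref{prop:stable:guess:check} to split each answer set of $P_i'$ into a guess $M_G\in\as(\widehat G)$ and the unique check outcome, the observation that $\tau(u,P_i)\cup\fix_{\widehat G}(M_G)$ is stratified with a unique answer set in which $u$ flags whether the check fails, and the three-way case analysis on the syntactic shape of $\Pi^{GC_i}$. The only difference is packaging---the paper proves the two implications separately (an ``incoherence-preserved'' direction and a ``coherence-preserved'' direction, the latter carrying the check-passes/check-fails subcases), whereas you fold both directions into a single biconditional in the check-passes case and dispatch the check-fails case once---but the underlying steps and the interface bookkeeping you flag as the delicate point are exactly what the paper works through.
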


\begin{proof}
(I) First assume that $\Pi$ is incoherent. Obviously, if $\Pi$ is incoherent due to a $P_j$ with $j<i$, then $\Pi^{GC_i}$ is incoherent for the same reason. 

If $\Pi$ is incoherent due to $P_i$, in the following we will consider $P_i$ only (as it is the case when $i=1$), rather than $P_i\cup fix_{P_{i-1}}(N)$ for some answer set $N$ of the previous level (for $i>1$). All arguments transfer directly to the latter case.

For any $M\in \as(P_i)$ we know from Proposition~\ref{prop:stable:guess:check} that $M=M_G \cup M_C$ where $M_G \in \as(G_{P_i})$ and $M_C \in \as(C_{P_i} \cup fix_{G_{P_i}}(M_G))$. If $M$ is the reason for incoherence in $\Pi$, we will show that then $M_G$ is a reason for incoherence in $\Pi^{GC_i}$. We distinguish three cases.

(1) If $i=n$, there is $M\in \as(P_i)$ with $C\cup fix_{P_i}(M)$ incoherent. Here, $M_C$ is also the unique answer set of $\tau(u,P_i) \cup fix_{G_{P_i}}(M_G)$, but since $C\cup fix_{P_i}(M)$ is incoherent, $M_C$ does not satisfy $\rho(u,C)$ either (as $u$ is false in it). It follows that there is $M_G\in \as(G_{P_i})$ such that $\sigma(u,P_i,C) \cup fix_{G_{P_i}}(M_G)$ is incoherent, and hence $\Pi^{GC_i}$ is incoherent. 

(2) If $i=n-1$, there is $M\in \as(P_i)$ such that there is no $M' \in \as(P_n\cup fix_{P_i}(M))$ such that $C \cup fix_{P_n}(M')$ is incoherent.
 Also here, $M_C$ is also the unique answer set of $\tau(u,P_i) \cup fix_{G_{P_i}}(M_G)$, and each $M' \in \as(P_n\cup fix_{P_i}(M))$ satisfies $\rho(u,P_n)$ as well, because $u$ is false in $M'$, which means that $\as(P_n\cup fix_{P_i}(M)) = \as(\sigma(u,P_i,P_n)\cup fix_{G_{P_i}}(M_G))$. Finally, since $u$ is false in each $M'$, from $C \cup fix_{P_n}(M')$ being incoherent we also get that $\rho(u,C )\cup fix_{P_n}(M')$ is incoherent. Then we have $M_G\in \as(G_{P_i})$ such that there is no $M' \in \as(\sigma(u,P_i,P_n)\cup fix_{G_{P_i}}(M_G))$ such that $\rho(u,C) \cup fix_{P_n}(M')$ is incoherent, hence $\Pi^{GC_i}$ is incoherent. 

(3) If $i<n-1$, there is $M\in \as(P_i)$ such that there is no $M' \in \as(P_{i+1} \cup fix_{P_i}(M))$ such that $\Pi'_{P_{i+1},M'}$ is incoherent, where $\Pi'$ is the suffix of $\Pi$ starting at $P_{i+2}$. Also here, $M_C$ is also the unique answer set of $\tau(u,P_i) \cup fix_{G_{P_i}}(M_G)$, and each $M' \in \as(P_{i+1}\cup fix_{P_i}(M))$ satisfies $\rho(u,P_{i+1})$ as well, because $u$ is false in $M'$, which means that $\as(P_{i+1}\cup fix_{P_i}(M)) = \as(\sigma(u,P_i,P_{i+1})\cup fix_{G_{P_i}}(M_G))$. Also, observe that since $u$ is false in each of these $M'$, $\as(P_{i+2} \cup fix_{P_{i+1}}(M')) = \as(P_{i+2} \cup \{\leftarrow u\} \cup fix_{P_{i+1}}(M'))$. So there is $M_G\in \as(G_{P_i})$ such that there is no $M' \in \as(\sigma(u,P_i,P_{i+1})\cup fix_{G_{P_i}}(M_G))$ such that $\Pi''_{P_{i+1},M'}$ (the suffix of $\Pi^{GC_i}$ starting at $P_{i+2} \cup \{\leftarrow u\}$) is incoherent, so $\Pi^{GC_i}$ is incoherent. 

(II) Now assume that $\Pi$ is coherent. As above, in the following we will consider $P_i$ only (as it is in the case when $i=1$), rather than $P_i\cup fix_{P_{i-1}}(N)$ for some answer set $N$ of the previous level (for $i>1$).

Here we have to show that from coherence for each $M\in \as(P_i)$ in $\Pi$ coherence for each $M_G \in \as(G_{P_i})$ follows. If (case a) there is an $M_C \in \as(C_{P_i} \cup fix_{G_{P_i}}(M_G))$ (the unique answer set), this follows quite easily because $M=M_G\cup M_C$ due to Proposition~\ref{prop:stable:guess:check} and $M_C$ is the unique answer set of $\tau(u,P_i)\cup fix_{G_{P_i}}(M_G)$, in which $u$ is false. If (case b) $\as(C_{P_i} \cup fix_{G_{P_i}}(M_G))$ is incoherent, then there is a single answer set $M_u$ of $\tau(u,P_i)\cup fix_{G_{P_i}}(M_G)$, in which $u$ is true.
We distinguish three cases.

(1) If $i=n$, for any $M \in \as(P_i)$ the program $C\cup fix_{P_i}(M)$ is coherent; let $X$ be one of its answer sets. In case a, $M_C \cup X$ is an answer set of $\sigma(u,P_i,C) \cup fix_{G_{P_i}}(M_G)$. In case b, $M_u$ is an answer set of $\sigma(u,P_i,C)\cup fix_{G_{P_i}}(M_G)$ (as all rules in $\rho(u,C)$ are satisfied by $M_u$ due to $u$ being true). In both cases, $\Pi^{GC_i}$ is coherent.

(2) If $i=n-1$, for any $M\in \as(P_i)$ there is an $M' \in \as(P_n\cup fix_{P_i}(M))$ such that $C \cup fix_{P_n}(M')$ is coherent with an answer set $X$. In case a, $M_C \cup M'$ is an answer set of $\sigma(u,P_i,P_n) \cup fix_{G_{P_i}}(M_G)$, and $C \cup fix_{P_n}(M_C\cup M')$ is coherent with the answer set $X$. In case b, $M_u$ is an answer set of $\sigma(u,P_i,P_n) \cup fix_{G_{P_i}}(M_G)$ (as all rules in $\rho(u,C)$ are satisfied by $M_u$ due to $u$ being true). But then $M_u$ is also an answer set of $C \cup fix_{P_n}(M_u)$, which is therefore coherent. In both cases, $\Pi^{GC_i}$ is coherent.

(3) If $i<n-1$, for any $M\in \as(P_i)$ there is an $M' \in \as(P_{i+1} \cup fix_{P_i}(M))$ such that $\Pi'_{P_{i+1},M'}$ is coherent, where $\Pi'$ is the suffix of $\Pi$ starting at $P_{i+2}$. 
In case a, $M_C \cup M'$ is an answer set of $\sigma(u,P_i,P_{i+1}) \cup fix_{G_{P_i}}(M_G)$, and $\Pi''_{P_{i+1},M_C\cup M'}$ (the suffix of $\Pi^{GC_i}$ starting at $P_{i+2} \cup \{\leftarrow u\}$) is coherent since $M_C\cup M' = M'$ and $u$ is false in $M'$. 
In case b, $M_u$ is an answer set of $\sigma(u,P_i,P_{i+1}) \cup fix_{G_{P_i}}(M_G)$ (as all rules in $\rho(u,P_{i+1})$ are satisfied by $M_u$ due to $u$ being true). 
But then $P_{i+2} \cup \{\leftarrow u\} \cup fix_{P_{i+1}}(M_u)$ has no answer sets (because of $u$), so $\Pi''_{P_{i+1},M_C\cup M'}$ is trivially coherent. In both cases, $\Pi^{GC_i}$ is coherent.
\end{proof}

\setcounterref{thm}{thm:allGC}
\addtocounter{thm}{-1}
\begin{thm}
\thmtextallGC{}
\end{thm}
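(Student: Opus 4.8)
The plan is to prove the statement by a chain of equivalences that reduces it to results already in hand, namely
\[
\Pi \text{ coherent} \iff \Pi_n \text{ coherent} \iff \Phi^K(\Pi_n) \text{ satisfiable} \iff \Phi^{K}_{CNF}(\Pi_n) \text{ satisfiable},
\]
where on the right $K$ is read as the set of indices of the universal subprograms of $\Pi_n$. Once we know that every universal subprogram of $\Pi_n$ is \emph{trivial} in the sense of Definition~\ref{def:trivial:subprogram}, the last two equivalences are exactly Theorem~\ref{thm:omitted:forall} and Proposition~\ref{prop:trivial:forall}, so the work concentrates on (a) the first equivalence and (b) the triviality claim.

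For (a) I would argue by induction following the recursive definition of $\Pi_i$. Let $K=\{k_1<\dots<k_r\}$ be the universal positions of $\Pi$, with the convention $\Pi_{k_0}=\Pi$. Stepping through $i=1,\dots,n$, nothing happens at existential $i$ (there $\Pi_i=\Pi_{i-1}$), while at a universal $i=k_j$ one has $\Pi_{k_j}=(\Pi_{k_{j-1}})^{GC_{k_j}}$; invoking Theorem~\ref{thm:moved:equiv} applied to $\Pi_{k_{j-1}}$ with the universal index $k_j$ gives that $\Pi_{k_{j-1}}$ is coherent iff $\Pi_{k_j}$ is. Chaining the $r$ steps and noting that no universal index occurs after $k_r$ (so $\Pi_n=\Pi_{k_r}$) yields that $\Pi$ is coherent iff $\Pi_n$ is. Two invariants must be maintained to license each application of Theorem~\ref{thm:moved:equiv}. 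First, every $\Pi_{k_{j-1}}$ must still be a Guess\&Check \ASPQ{} program: quantifier alternation is preserved because the transformation $\Pi^{GC_i}$ replaces $\forall^{st}P$ by $\forall^{st}G_P\,\exists^{st}\sigma(\cdot)$, and each remaining universal subprogram is either a choice-only guess part $G_{P_k}$ (hence Guess\&Check by Definition~\ref{def:guess:check} with an empty Check part) or some $P_k$ enlarged by fresh-atom constraints $\leftarrow u$ coming from an earlier step, which stays Guess\&Check since $u$ is fresh and a constraint simply joins the Check part. Second, each transformation step leaves the number of levels equal to $n+1$ (the existential block $\sigma(\cdot)$ absorbs the following level while $G_{P_i}$ is added as a new level), so positions — and hence the index set $K$ itself — are stable throughout, which is what lets us apply the rewriting at index $i$ \emph{in place} and keep $K$ meaningful.

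For (b) I would observe that after all $r$ transformation steps the universal subprograms of $\Pi_n$ are precisely the guess parts $G_{P_k}$, $k\in K$, each consisting solely of choice rules. Condition $(ii)$ of Definition~\ref{def:trivial:subprogram} is then immediate, since a set of choice rules over atoms $A$ has $\as(\cdot)=2^{A}$ and therefore, restricted to any $Ext_i$, exhausts $2^{Ext_i}$. Condition $(i)$ is the more delicate point: one has to check that any atom of $G_{P_k}$ that is shared with an earlier subprogram appears there only as a fact. This is where the Guess\&Check hypothesis on $\Pi$ is genuinely used — the atoms of $G_{P_k}$ are guess atoms, which by Definition~\ref{def:guess:check} are disjoint from the heads of Check rules and, in an \ASPQ{} program, interface with preceding levels only through the $\fix/\CH$ mechanism that passes them forward as facts. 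With both conditions established, all universal subprograms of $\Pi_n$ are trivial, Proposition~\ref{prop:trivial:forall} gives $\Phi^K(\Pi_n)\equiv\Phi^{K}_{CNF}(\Pi_n)$, and Theorem~\ref{thm:omitted:forall} gives $\Phi^K(\Pi_n)$ satisfiable iff $\Pi_n$ coherent, completing the chain.

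I expect the main obstacle to be the bookkeeping in (a) rather than any new conceptual ingredient: one must make sure that the recursion defining $\Pi_i$ keeps producing well-formed Guess\&Check \ASPQ{} programs to which Theorem~\ref{thm:moved:equiv} applies, that the fresh atoms $u$ introduced at the various universal levels are chosen globally fresh and never escape their intended scope (e.g., the $\leftarrow u$ constraints must land on the correct subsequent level), and that the static condition $(i)$ of Definition~\ref{def:trivial:subprogram} for the guess parts really follows from the Guess\&Check form of the input rather than from some unstated assumption on how interface atoms may occur in earlier subprograms.
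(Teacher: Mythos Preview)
Your proposal is correct and follows essentially the same approach as the paper: the same chain of equivalences $\Pi\leftrightarrow\Pi_n\leftrightarrow\Phi^K(\Pi_n)\leftrightarrow\Phi^K_{CNF}(\Pi_n)$ via Theorem~\ref{thm:moved:equiv}, Theorem~\ref{thm:omitted:forall}, and Proposition~\ref{prop:trivial:forall}. If anything you are more careful than the paper, which simply asserts that the universal subprograms of $\Pi_n$ are trivial because they consist only of choice rules and that $\Pi$ is coherent iff $\Pi_n$ is, whereas you explicitly flag the induction invariants (the intermediate $\Pi_i$ staying Guess\&Check, the level count being preserved) and condition~$(i)$ of Definition~\ref{def:trivial:subprogram} as the points requiring attention.
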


\begin{proof}
Observe that, by definition, $\Pi_n$ is such that all of its universally quantified subprograms are trivial (contain only choice rules). Let $K=\{k | k\in [1,\dots, n] \wedge \Box_i=\forall^{st} \text{ in } \Pi_n\}$,  from Theorem~\ref{thm:omitted:forall} it follows that $\Pi_n$ is coherent iff $\Phi^K(\Pi_n)$ is satisfiable. Moreover, from Proposition~\ref{prop:trivial:forall}, we have that $\Phi^K(\Pi_n)$ is equivalent to $\Phi^K_{CNF}(\Pi_n)$. 
From Theorem \ref{thm:moved:equiv} we have that, $\Pi$ is coherent iff $\Pi_n$ is coherent; since $\Pi_n$ is coherent iff $\Phi^K(\Pi_n)$ is satisfiable, and $\Phi^K(\Pi_n)$ is equivalent to $\Phi^K_{CNF}(\Pi_n)$, the thesis follows.
\end{proof}

\end{document}